\def\ddefloop#1{\ifx\ddefloop#1\else\ddef{#1}\expandafter\ddefloop\fi}
\def\ddef#1{\expandafter\def\csname bb#1\endcsname{\ensuremath{\mathbb{#1}}}}
\def\ddef#1{\expandafter\def\csname c#1\endcsname{\ensuremath{\mathcal{#1}}}}
\def\R{\mathbb{R}}
\def\Z{\mathbb{Z}}
\def\eps{\epsilon}
\def\D{\textup{d}}
\newcommand{\ip}[2]{\left\langle #1, #2 \right \rangle}
\newcommand{\sj}[1]{{\color{magenta}{\bf[SJ:} #1{\bf]}}}
\newtheorem{theorem}{Theorem}
\newtheorem{lemma}{Lemma}
\newtheorem{definition}{Definition}
\newtheorem{conjecture}{Conjecture}
\newtheorem{remark}{Remark}
\newtheorem{corollary}{Corollary}
\newtheorem{assumption}{Assumption}
\newcounter{parentnumber}
\newenvironment{subassumption}[1]{%
  \counterwithin*{assumption}{parentnumber}
  \def\subassumptioncounter{#1}%
  \refstepcounter{#1}%
  \protected@edef\theparentnumber{\csname the#1\endcsname}%
  \setcounter{parentnumber}{\value{#1}}%
  \setcounter{#1}{0}%
  \expandafter\def\csname the#1\endcsname{\theparentnumber.\Alph{#1}}%
  \ignorespaces
}{%
  \setcounter{\subassumptioncounter}{\value{parentnumber}}%
  \counterwithout*{assumption}{parentnumber} 
  \ignorespacesafterend
}
\crefname{assumption}{Assumption}{Assumptions}
\title{Limits, approximation and size transferability for GNNs on sparse graphs via graphops}
\author{%
  Thien Le\\
  CSAIL\\
  MIT\\
  Cambridge, MA\\
  \texttt{thienle@mit.edu} \\
  \And
  Stefanie Jegelka \\
  CSAIL \\
  MIT\\
  Cambridge, MA \\
    \texttt{stefje@csail.mit.edu} 
}
\definecolor{darkblue}{rgb}{0.0,0.0,0.65}
\definecolor{darkred}{rgb}{0.68,0.05,0.0}
\definecolor{darkgreen}{rgb}{0.0,0.29,0.29}
\definecolor{darkpurple}{rgb}{0.47,0.09,0.29}
\begin{document}

\maketitle

\begin{abstract}

  Can graph neural networks generalize to graphs that are different from the graphs they were trained on, e.g., in size? In this work, we study this question from a theoretical perspective. While recent work established such transferability and approximation results via graph limits, e.g., via graphons, these only apply nontrivially to dense graphs. To include frequently encountered sparse graphs such as bounded-degree or power law graphs, we take a perspective of taking limits of operators derived from graphs, such as the aggregation operation that makes up GNNs. This leads to the recently introduced limit notion of graphops \citep{backhausz_szegedy_2022}. We demonstrate how the operator perspective allows us to develop quantitative bounds on the distance between a finite GNN and its limit on an infinite graph, as well as the distance between the GNN on graphs of different sizes that share structural properties, under a regularity assumption verified for various graph sequences. Our results hold for dense and sparse graphs, and various notions of graph limits.

\end{abstract}

%


\section{Introduction}\label{introduction}

Since the advent of 
graph neural networks (GNNs), 
deep learning has become one of the most promising tools to address graph-based learning tasks \citep{gilmer_mpnn, scarselli_gnn, kipf_welling_semi_sup, geometric_dl}. 
Following the mounting success of applied GNN research, theoretical analyses have been following. For instance, many works study GNNs' representational power \citep{azizian21,morris19,xuhlj19,xu_algorthmic_alignment, garg20,chen20count, maron_provably_powerful_gnn,loukas20,loukas20dist,abboud21}.

A hitherto less addressed question of practical importance is the possibility of size generalization, i.e., transferring a learned GNN to graphs of different sizes  \citep{
ruiz_transferability, cnn_transferability,
xu_extrapolate, yehudai21,chuang22tmd,roddenberry_local_distribution_graph_signal_processing}, especially for sparse graphs. 
For instance, it would be computationally desirable to train a GNN on small graphs and apply it to large graphs. This question is also important to judge the reliability of the learned model on different test graphs.
To answer the size generalization question, we need to understand under which conditions such transferability is possible -- since it may not always be possible \citep{xu_extrapolate, yehudai21,jegelka22icm} -- and what output perturbations we may expect. 
For a formal analysis of perturbations and conditions, we need a suitable graph representation that captures inductive biases and allows us to compare models for graphs of different sizes. 
\emph{Graph limits} can help to formalize this, as they help understand biases as the graph size tends to infinity.

Formally, \emph{approximation theory} asks for bounds between a GNN on a finite graph and its infinite counterpart, while \emph{transferability} compares model outputs on graphs of different sizes.
The quality of the bounds depends on how the two GNNs (and corresponding graphs) are intrinsically linked, in particular, to what extent the graphs share relevant structure. This yields conditions for size generalization. For example, the graphs could be sampled from the same graph limit \citep{ruiz_transferability} or from the same random graph model \citep{keriven_gnn_from_model}.

In particular, \citet{ruiz_transferability} study approximation and transferability via the lens of \emph{graphons} \citep{lovasz_graph_limit, lovasz_szegedy_2006}, which characterize the limits of \emph{dense} graphs. Yet, many real-world graphs are not dense, for instance, planar traffic networks, power law graphs, Hamming graphs (including hypercubes for error-correcting code), or grid-like graphs e.g., for images. 
For \emph{sparser} graphs, the correct notion of limit suitable for deep learning is still an open problem, as typical bounded-degree graph limits such as the Benjamini-Schramm limit of random rooted graphs \citep{benjamini_schramm}, or graphings \citep{lovasz_graph_limit} are less well understood and often exhibit pathological behaviors (see Section \ref{section:warmup}). Limits of intermediate graphs, such as the hypercubes, are even more obscure. 
Hence, understanding limits, inductive biases and transferability of GNNs for sparse graphs remains an open problem in understanding graph representation learning.
%

This question is the focus of this work. To obtain suitable graph limits for sparse graphs and to be able to compare GNNs on graphs of different sizes while circumventing challenges of sparse graph limits, we view a graph as an \emph{operator} derived from it. This viewpoint is naturally compatible with GNNs, as they are built from convolution/aggregation operations. We show how the operator perspective allows us to define limits of GNNs for infinite graphs. We achieve this by exploiting the recently defined notion of \emph{graphop}, which generalizes graph shift operators, and the \emph{action convergence} defined in the space of graphops \citep{backhausz_szegedy_2022}.
Our definition of GNN limits enables us to prove rigorous bounds for approximation and transferability of GNNs for sparse graphs. Since graphops encompass both graphons and graphings, we generalize similar bounds for graphon neural networks \citep{ruiz_transferability} to a much wider set of graphs.

Yet, using graphops requires technical work. For instance, we need to introduce an appropriate discretization of a graphop to obtain its corresponding finite graph shift operators. We use these operators to define a generalized graphop neural network that acts as a limit object, with discretizations that become finite GNNs. Then we prove approximation and transferability results for both the operators (graphops and their discretizations) and GNNs.

\textbf{Contributions.}
%
To the best of our knowledge, this is the first paper to provide approximation and transferability theorems specifically for sparse graph limits. Our main tool, graphops, has not been used to study GNNs before, although viewing graphs as operators is a classic theme in the literature. Our specific contributions are as follows: 
\begin{enumerate}[topsep=0pt,itemsep=0ex,partopsep=1ex,parsep=1ex,leftmargin=0.5cm]
  \item We define a \emph{graphop convolution}, i.e., an operator that includes both finite graph convolutions and a limit version that allows us to define   
a limit object for GNNs applied to graphs of size $n \to \infty$.
%
  \item We rigorously prove an approximation theorem (Theorem \ref{theorem:approximation_via_discretization}) that bounds a distance between a graphop $A$ (acting on infinite-dimensional $\Omega$) and its discretization $A_n$ (acting on $\mathbb{R}^n$), in the $d_M$ metric introduced by \citet{backhausz_szegedy_2022}. To do so, we introduce an appropriate discretization. Our result applies to a more general set of nonlinear operators, and implies a transferability bound between finite graphs (discretizations) of different sizes.
  \item For neural networks, we present a quantitative approximation and transferability bound that guarantees outputs of graphop neural networks are close to those of the corresponding GNNs (obtained from discretization). 
\end{enumerate}
\subsection{Related work}

The closest related work is \citep{ruiz_transferability}, which derives approximation and transferability theorems for \emph{graphon} neural networks, i.e.,   \emph{dense} graphs. 
For graphons, the convolution kernel has a nice spectral decomposition, 
which is exploited by \citet{ruiz_transferability}. In contrast, \emph{sparse} graph limits are not known to enjoy nice convergence of the spectrum \citep{backhausz_szegedy_2022,aldous_lyon_conjecture}, so we need to use  different techniques.
%
Since the notion of graphop generalizes both dense graph limits and certain sparse graph limits, our results apply to dense graphs as well. 
Our assumptions and settings are slightly different from \cite{ruiz_transferability}. For instance, they allow the convolution degree $K \to \infty$ and perform the analysis in the spectral domain, whereas our  $K$ is assumed to be a fixed finite constant. 
As a result, their bound has better dependence of $O(1/n)$ on $n$--the resolution of discretization, but does not go to  $0$ as  $n \to \infty$. Ours have extra dependence on  $K$ and a slower rate of $O(n^{-1 / 2}$ but our bounds go to  $0$ as  $n \to \infty$. 


Other works use other notions than graph limits to obtain structural coherence. 
\citet{cnn_transferability} obtain a transferability result for spectral  graph convolution  networks via analysis in frequency domains. They sample finite graphs from general topologies as opposed to a graph limit. Their graph signals are \emph{assumed} to have finite bandwidth while ours is only assumed to be in $L^2$. Their signal discretization scheme is assumed to be close to the continuous signals, while ours is proven to be so.  
\citet{roddenberry_local_distribution_graph_signal_processing} address sparse graphs and give a transferability bound between the loss functions of two random rooted graphs. 
However, the metric under which they derive their result is rather simple: if the two graphs are not isomorphic then their distance is constant, otherwise, they use the Euclidean metric between the two graph signals. This metric hence does not capture combinatorial, structural differences of functions on non-isomorphic graphs.  
To study transferability, \citet{keriven_gnn_from_model} 
sample from standard random graph models, as opposed to a graph limit, 
resulting in a bound of order $O(n^{-1 / 2})$, which is similar to ours. 
They also need an assumption on the closeness of the graph signal. 


\section{Background}

\paragraph{Notation} Let $\mathbb{N}$ be $\{1,2,\ldots\}$ and write $[n] = \{1,\ldots,n\}$ for any $n \in \mathbb{N}$. For a scalar $\alpha \in \R$ and a set $S \subset \R$, let $\alpha S = \{\alpha s : s \in S\}$. The abbreviation a.e. stands for `almost everywhere'.

For a measure space $(\Omega, \mathcal{B}, \mu)$ and $p \in [1, \infty]$, denote by $L^p(\Omega)$ the corresponding $L^p$ function spaces with norm $\|\cdot\|_p: f \mapsto (\int_{\Omega} |f|^p d\mu )^{1 / p}$. For any $p, q \in [1,\infty]$, define the operator norms $\|\cdot\|_{p \to q}: A \mapsto \sup_{v \in L^\infty} \|vA\|_{q}/\|v\|_{p}$. 

For function spaces, we use $\mathcal{F} = L^2([0,1])$ and $\mathcal{F}_n = L^2([n] / n)$, for any $n \in \mathbb{N}$. For any $L^p$ space $\mathcal{H}$, denote by $\mathcal{H}_{[-1,1]}$ the restriction to functions with range in $[-1,1]$ a.e. and  $\mathcal{H}_{\text{Lip}(L)}$ the restriction to functions that are $L$-Lipschitz a.e. and $\mathcal{H}_{\text{reg}(L)} = \mathcal{H}_{[-1,1]}\cap \mathcal{H}_{\text{Lip}(L)}$. 

\paragraph{Graph neural networks (GNNs)}
GNNs are functions that use graph convolutions to incorporate graph structure into neural network architectures. Given a finite graph $G = (V,E)$ and a function  $X: V \to \mathbb{R}$ (called \emph{graph signal} or \emph{node features}), a GNN $\Phi_F$ ($F$ for `finite') with $L$ layers, $n_i$ neurons at the $i$-th layer, nonlinearity $\rho$ and learnable parameters $h$, is:
\begin{align}
  \Phi_F(h, G, X) &= X_L(h, G, X),\\
  \label{eqn:gnn}
  \left[X_l(h, G, X)\right]_f &= \rho\bigg( \sum_{g = 1}^{n_{l - 1}} A_{l,f,g}(h,G) [X_{l - 1}]_g \bigg) ,\qquad l \in [L], f\in [n_l]\\
                X_0(h,G,X) &= X,
\end{align}
where $[X_l]_{f}$ is the output of the $f$-th neuron in the $l$-th layer, which is another graph signal. The input graph information is captured through order $K$ \emph{graph convolutions} $A_{l,f,g}(h,G) := \sum_{k = 0}^K h_{l,f,g,k} GSO(G)^k$, where $GSO(G)$ is a \emph{graph shift operator} corresponding to $G$ --- popular examples include the adjacency matrix or the Laplacian \citep{kipf_welling_semi_sup,cnn_transferability}. The power notation is the usual matrix power, while the notation $h_{l,f,g,k}$ highlights that there is a learnable parameter for each convolution order $k$, between each neuron $f$ and $g$ from layer $l-1$ to layer $l$ of the neural network. Thus, the number of learnable parameters in a GNN does not depend on the number of vertices of the graph used to form the GSO. 

\subsection{Graph limits}\label{section:warmup}

Graph limit theory involves embedding discrete graphs into spaces with rich underlying topological and/or geometric structures and studying the behavior of convergent (e.g. in size) graph sequences. 

\paragraph{Dense graphs} A popular example of graph limits are graphons - symmetric $L^1([0,1]^2)$ (Lebesgue-measurable) functions 
whose value at $(x,y)$ can be thought of (intuitively) as 
the weight of the $xy$-edge in a graph with vertices in $[0,1]$. 
When equipped with the \emph{cut metric} (see Appendix \ref{appendix:additional_notations} for the exact definition), it contains limits of convergent sequences of dense graphs.  Convergence in this space is dubbed \emph{dense graph convergence} because for any $W \in L^1([0,1]^2),\|W\|_{\square} = 0$ iff $W = 0$ outside a set of Lebesgue measure $0$. 
This implies that graphs with a subquadratic number of edges, such as grids or hypercubes, are identified with the empty graph in the cut norm. Dense graph convergence is very well understood theoretically and is the basis for recent work on GNN limits \citep{ruiz_transferability}. 


\paragraph{Sparse graphs} There are two notions of limits that are deemed suitable for bounded-degree graph convergence in the literature. The first, \emph{graphing} \citep{lovasz_graph_limit}, is a direct counterpart of a graphon. Recall that graphons are not suitable for sparse graphs because the Lebesgue measure on $L^2([0,1]^2)$ is not fine enough to detect edges of bounded-degree graphs. Therefore, one solution is to consider other measure spaces. Graphings are quadruples  $(V, \mathcal{A}, \lambda, E)$ where $V$ and  $E$ are interpreted as the usual vertex and edge sets and  $(V,\mathcal{A},\lambda)$ together form a Borel measure such that $E$ is in $\mathcal{A} \times \mathcal{A}$ satisfying a symmetry condition. While Lebesgue measures are constructed from a specific topology of open sets on $\R$, for graphings, we are allowed the freedom to choose a different topological structure (for instance a \emph{local topology}) on $V$. 
The definition of graphings is theoretically elegant but harder to work with since the topological structures are stored in the $\sigma$-algebra. A second way to embed bounded-degree graphs, Benjamini-Schramm limits \citep{benjamini_schramm}, uses distributions over random rooted graphs. Roughly speaking, for each $k \in \mathbb{N}$, one selects a root uniformly at random from the vertex set in the graph and considers the induced subgraph of vertices that are at distance at most $k$ from the root. The randomness of the root induces a distribution over the set of rooted graphs of radius $k$ from the root. 

Graphings and distributions of random rooted graphs are intimately connected, but their connection to convergent bounded-degree graph sequences is not well-understood. For example, a famous open conjecture by \cite{aldous_lyon_conjecture} asks whether all graphings are weak local limits of some sequence of bounded-degree graphs.
  The unresolved conjecture of Aldous and Lyons means that one cannot simply take an arbitrary graphing and be guaranteed a finite bounded-degree graph sequence converging to said graphing, which is the main approach in \cite{ruiz_transferability} for dense graphs. 
A self-contained summary of graphings within the scope of this paper is provided in Appendix \ref{appendix:graphings}. Infinite paths and cycles also have nice descriptions in terms of graphings (also in Appendix \ref{appendix:graphings}), which we will use in our constructions for Lemma \ref{lemma:constant_to_constant_graphs}. 

\subsection{Graphops and comparing across graph sizes}\label{section:prelim}

More recently, \citet{backhausz_szegedy_2022} approach graph limits from the viewpoint of limits of operators, called \emph{graphops}. This viewpoint is straightforward for finite graphs: both the adjacency matrix and Laplacian, each defining a unique graph, are linear operators on $\R^{\# \text{vertices}}$. Moreover, viewing graphs as operators is exactly what we do with GSOs and graph convolutions. Hence, graphop seems to be an appropriate tool to study  GNN approximation and transferability. 
On the other hand, there are challenges with this approach: being related to graphings, they inherit some of graphings' limitations, such as the conjecture of \cite{aldous_lyon_conjecture}. Moreover, 
to understand GNN transferability from size $m$ to  $n$, one needs to compare an $m\times m$ matrix with an  $n \times n$ matrix, which is nontrivial. This is done by comparing their actions on $\R^{m}$ versus $\R^{n}$. 
It turns out that these actions, under an appropriate metric, define a special mode of operator convergence called \emph{action convergence}. The resulting limit objects are well-defined and nontrivial for sparse graphs and intermediate graphs, while also generalizing dense graphs limits. We will describe this mode of convergence, the corresponding metric, and our own relaxation of it later in this section. 

We now describe how graphs of different sizes can be compared through the actions of their corresponding operators on some function spaces. 

\paragraph{Nonlinear $P$-operators}   For an $n$-vertex graph, its adjacency matrix, Laplacian, or Markov kernel of random walks are examples of linear operators on $L^p([n] / n)$. To formally generalize to the infinite-vertex case, \citet{backhausz_szegedy_2022} use \emph{$P$-operators}, which are linear operators from $L^\infty(\Omega)$ to $L^1(\Omega)$ with finite $\|A\|_{\infty \to 1}$. In this paper, we further assume they have  finite  $\|\cdot\|_{2 \to 2}$ norm but are not necessarily linear. We address these deviations in Section \ref{section:deviation}.

\paragraph{Graphops} $P$-operators lead to a notion of graph limit that applies to both dense and sparse graphs. \emph{Graphops} \citep{backhausz_szegedy_2022} are positivity-preserving (action on positive functions results in positive functions), self-adjoint $P$-operators. Adjacency matrices of finite graphs, graphons \citep{lovasz_szegedy_2006}, and graphings \citep{lovasz_graph_limit} are all examples of graphops. Changing the positivity-preserving requirement to positiveness allows one to consider Laplacian operators. 

\paragraph{$(k,L)$-profile of a $P$-operator} Actions of graphops are formally captured through their $(k,L)$-profiles, and these will be useful to compare different graphops. Pick $k \in \mathbb{N}$, $L \in [0,\infty]$ and $A$ a $P$-operator on $(\Omega, \mathcal{B}, \mu)$. Intuitively, we will take $k$ samples from the space our operators act on, apply our operator to get  $k$ images, and concatenate  samples and images into a joint distribution on  $\R^{2k}$, which gives us one element of the profile. For instance, for $n$-vertex graphs, the concatenation results in a matrix $M \in \mathbb{R}^{n \times 2k}$, so each joint distribution is a sum (over rows of $M$) of $n$ Dirac distributions. In the limit, the number of atoms in each element of the profile increases, and the measure converges (weakly) to one with density. More formally, denote by $\cD(v_1,\ldots,v_k)$ the pushforward of $\mu$ via  $x \mapsto (v_1(x),\ldots,v_k(x))$ for any tuple $(v_i)_{i \in [k]} \in L^2(\Omega)$. The \emph{$(k,L)$-profile} of  $A$ is:
\begin{align}
  \cS_{k,L}(A) := \{\cD(v_1,\ldots,v_k, Av_1, \ldots, Av_k) :  v_i \in L^{\infty}_{\text{reg}(L)}(\Omega), i = 1\ldots k\}.
\end{align}
   Formally, denote by $\cP(\mathbb{R}^k)$ the set of  Borel probability distributions over $\R^{2k}$. Regardless of the initial graph size, or the space on which the operators act, $(k,L)$-profiles of  $A$ are always some subsets of $\cP(\R^{2k})$ which allow us to compare operators acting on different spaces. 

   

\paragraph{Convergence of $P$-operators} We compare two profiles (closed subsets $X, Y \subset \cP(\R^{2k})$) via a Hausdorff metric
$  d_H(X, Y) := \max(\sup_{x \in X} \inf_{y \in Y} d_{LP}(x,y),
            \sup_{y \in Y} \inf_{x \in X} d_{LP}(x,y)).
$ 
 Here, $d_{LP}$ is the L\'evy-Prokhorov metric on $\cP(\R^{2k})$ (see exact definition in Appendix \ref{appendix:additional_notations}), which 
%
%
%
metrizes weak convergence of Borel probability measures, and translates action convergence to weak convergence of measures. Finally, given  any two $P$-operators $A,B$, we can compare their profiles across all different $k$ at the same time as
\begin{equation}
  d_M(A,B) := \sum_{k = 1}^\infty 2^{-k} d_H(\mathcal{S}_{k,L}(A), \mathcal{S}_{k,L} (B)).
\end{equation}
Intuitively, we allow $d_H$ to grow subexponentially in  $k$ by the scaling $2^{-k}$. Our definition of profile slightly differs from that of \citet{backhausz_szegedy_2022}, using $L^\infty_{\text{reg}(L)}$ instead of their $L^\infty_{[-1,1]}$. 
   However, we will justify this deviation in Section \ref{section:deviation}, Theorem \ref{theorem:growing_profiles}: by letting $L$ grow slowly in  $n$, we recover the original limits in \citet{backhausz_szegedy_2022}.


This \emph{action convergence} turns out to be one of the `right' notions of convergence that capture both sparse and dense graph limits, as well as some intermediate density graphs:
\begin{theorem}[Theorem 1.1 \cite{backhausz_szegedy_2022}]
  Convergence under $d_M$  is equivalent (results in the same limit) to dense graph convergence when restricted to graphons and equivalent to local-global convergence when restricted to graphings. 
\end{theorem}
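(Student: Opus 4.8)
This is Theorem~1.1 of \citet{backhausz_szegedy_2022}, proved there for profiles built from the class $L^\infty_{[-1,1]}$ of test functions, so my plan is to reduce to their result in two steps. First, I would check that enlarging the test class from $L^\infty_{[-1,1]}$ to our $L^\infty_{\text{reg}(L)}$ does not change the topology induced by $d_M$ on the two subclasses in question — this is the content of Theorem~\ref{theorem:growing_profiles} in Section~\ref{section:deviation}, where letting $L$ grow with $n$ recovers the original limits. Second, I would recall why, in the original setting, action convergence coincides with dense graph convergence on graphons and with local--global convergence on graphings. Since all four topologies in play are metrizable, in each case it suffices to show the two relevant notions of convergence refine one another on convergent sequences, which I would do via the ``easy'' inclusion (structural convergence $\Rightarrow$ action convergence) together with a compactness argument for the reverse inclusion.

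For graphons, I would start from the fact that dense graph convergence of a sequence $(W_n)$ is equivalent to convergence in the cut metric and to convergence of all homomorphism densities $t(F,\cdot)$ (Lov\'asz--Szegedy). The inclusion ``cut convergence $\Rightarrow$ action convergence'' would follow from the counting/continuity lemma: the joint law $\cD(v_1,\dots,v_k,T_{W_n}v_1,\dots,T_{W_n}v_k)$ depends on $W_n$ continuously in the cut metric \emph{uniformly} over admissible test tuples, which upgrades weak convergence of the individual laws to Hausdorff--L\'evy--Prokhorov convergence of the profiles $\cS_{k,L}(W_n)$. For the reverse, I would use compactness of graphon space under the cut metric to pass to a cut-limit $W_\infty$ along a subsequence, deduce $\cS_{k,L}(W_\infty)=\cS_{k,L}(W)$ from the easy inclusion, and then argue that profiles separate weak isomorphism classes of graphons: feeding indicator test functions $v_i=\mathbf 1_{S_i}$ into $\cS_{k,L}$ exposes the edge densities $\int_{S_i}(T_W\mathbf 1_{S_j})\,d\mu$ between all measurable sets, which via weak regularity / step-function approximation recovers every homomorphism density and hence the cut-distance class.

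For graphings, I would use the description of local--global convergence of a bounded-degree sequence \citep{lovasz_graph_limit} as Benjamini--Schramm convergence of rooted-neighborhood statistics together with Hausdorff convergence, over every palette size, of the sets of colored-neighborhood statistics realizable by colorings. Here the adjacency $P$-operator acts by $(A_G v)(x)=\sum_{y\sim x}v(y)$, a sum of uniformly boundedly many terms, so (after rescaling by the degree bound to stay inside $L^\infty_{[-1,1]}$) iterating $A_G$ up to depth $r$ exposes the depth-$r$ rooted ball around $x$ decorated by the test-function values. Taking as test functions a measurable coloring that is injective on every $r$-ball — which exists because a bounded power of a bounded-degree graphing still has bounded degree, hence bounded chromatic number — the profiles $\bigcup_{k}\cS_{k,L}(G)$ recover exactly the rooted colored $r$-ball statistics for all $r$, i.e.\ the data of local--global convergence; this makes the two notions mutually refining.

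I expect the main obstacle to be the \emph{uniformity} in the direction ``structural convergence $\Rightarrow$ action convergence'': Hausdorff closeness of the profile sets requires controlling the joint laws simultaneously over the whole infinite-dimensional test-function class rather than for each fixed tuple. In the graphon case this is exactly what the weak regularity / counting lemma supplies; in the graphing case it rests on using the uniform degree bound to reduce to finitely many colors and a bounded radius, so that only finitely many colored statistics are at stake. A secondary technical point, logically separate from the topology question, is the compatibility of the $\text{reg}(L)$-profiles with the original $[-1,1]$-profiles, which is precisely the role of Theorem~\ref{theorem:growing_profiles}.
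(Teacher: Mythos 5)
This statement is imported verbatim from \citet{backhausz_szegedy_2022} (their Theorem~1.1), and the paper does not prove it — the ``proof'' here is the citation itself, with the compatibility between the paper's modified $d_M$ (profiles over $L^\infty_{\text{reg}(L)}$) and the original one (profiles over $L^\infty_{[-1,1]}$) deferred to Theorem~\ref{theorem:growing_profiles} in Section~\ref{section:deviation}. So there is no internal proof to compare against; your proposal is a reconstruction of the external argument, not a parallel to anything in this paper.

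That said, your high-level decomposition is the right one and matches what the paper implicitly relies on: (i) reconcile the $\text{reg}(L)$-profiles with the $[-1,1]$-profiles via a growing-Lipschitz-constant argument (Theorem~\ref{theorem:growing_profiles}), and (ii) invoke the Backhausz--Szegedy equivalence in the original setting. One small but real slip in your step~(i): you describe passing from $L^\infty_{[-1,1]}$ to $L^\infty_{\text{reg}(L)}$ as ``enlarging'' the test class, but $L^\infty_{\text{reg}(L)} \subset L^\infty_{[-1,1]}$ is a \emph{restriction}. The direction matters because it is precisely why the reconciliation requires the mollification/growing-$L$ argument of Theorem~\ref{theorem:growing_profiles} rather than being automatic: restricting the test class can only shrink profiles, so one needs to recover the lost measures in the limit, which is what letting $L(n)\to\infty$ accomplishes. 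Your sketches of the graphon direction (counting lemma for the easy inclusion, compactness plus profile-separation via indicator test functions for the converse) and the graphing direction (uniform degree bound, bounded chromatic number of graph powers, colored $r$-ball statistics) are plausible outlines of the cited proof, but I cannot certify them against the source here, and the paper deliberately does not re-derive them — so you should treat that part of your write-up as an informal gloss on an external theorem rather than as something the paper itself establishes.
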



\section{Graphop neural networks}\label{section:graphopnn}

Graph limits allow us to lift finite graphs onto the richer space of graphops to discuss convergent graph sequences $G_i \to G$. For finite GNNs (Eqn \eqref{eqn:gnn}), fixing the graph input $G_i$ and learnable parameter $h$ results in a function $\Phi_F(h,G_i,\cdot)$ that transforms the input graph signal (node features) into an output graph signal. The transferability question asks how similar $\Phi_F(h,G_i,\cdot)$ is to $\Phi_F(h,G_j,\cdot)$ for some $i \neq j$. In our approach using approximation theory, we will compare both functions to the limiting function on $G$. This is done by an appropriate lift of the GNN onto a larger space that we call \emph{graphop neural networks}. 

We then introduce a discretization scheme of graphop neural networks to obtain finite GNNs, similar to graphon sampling \citep{ruiz_transferability} and sampling from topological spaces \citep{cnn_transferability}. Finally, Lemma \ref{lemma:discretization_of_selfadjoint} asserts that, restricted to self-adjoint  $P$-operators, discretizations of graphops are indeed graph shift operators (GSOs).

\subsection{Convolution and graphop neural networks}

Similar to how GSOs in a GNN act on graph signals, graphops act on some $L^{2}$ signals (called \emph{graphop signals}). The generalization is straightforward: replacing GSOs in the construction of the GNN in Eqn.\ \eqref{eqn:gnn} with graphops results in \emph{graphop convolution} and replacing graph convolution with graphop convolution gives \emph{graphop neural networks}. 

Formally, fix a maximum order $K \in \mathbb{N}$. For some measure space $(\Omega, \mathcal{B}, \mu)$, select a graphop $A: L^2(\Omega) \to L^2(\Omega)$ and a graphop signal $X \in L^2(\Omega)$. We define a \emph{graphop convolution} operator as a weighted sum of at most $K - 1$ applications of $A$:
\begin{equation}
  H(h,A)[X] := \sum_{k = 0}^{K-1}(h_k  A^{k})[X],
\end{equation}
where $h \in\R^K$ 
are (learnable) filter parameters and $A^k$
is the composition of $k$ duplicates of $A$. 
The square bracket $[v]_i$ indicates the $i$-th entry of a tuple $v$. 


For some number of layers $L \in \mathbb{N}, \{n_i\}_{i \in [L]} \in \mathbb{N}, n_0 := 1$, define a \emph{graphop neural network} $\Phi$ with $L$ layers and $n_i$ features in layer $i$ as: 
\begin{align}
  \Phi(h, A, X) &= X_L(h, A, X),\\\label{eqn:graphopnn}
  X_l(h, A, X) &= \left[\rho\bigg( \sum_{g = 1}^{n_{l - 1}} H(h^l_{f,g}, A) [X_{l - 1}]_g \bigg)\right]_{f \in [n_l]} ,\qquad l \in [L],\\
                X_0(h,A,X) &= X
\end{align}
with filter parameter tuple $h = (h^1,\ldots,h^L)$, $h^l \in (\R^{K})^{n_{l} \times n_{l - 1}}$ for any $l \in [L]$, and graphop signal tuple $X_l \in (L^2(\Omega))^{n_l}$ for any $l \in [L] \cup \{0\}$.  Eqn \eqref{eqn:graphopnn} and Eqn \eqref{eqn:gnn} are almost identical, with the only difference being the input/output space: graphops replacing finite graphs, and graphop signals replacing graph signals. 


\subsection{From graphop neural networks to finite graph neural networks}
 We are specifically interested in finite GNNs that are discretizations of a graphop (for instance finite grids as discretizations of infinite grids), so as to obtain a quantitative bound that depends on the resolution of discretization. 
 To sample a GNN from a given graphop  $A: \mathcal{F} \to \mathcal{F}$, we first sample a GSO and plug it into Eqn \eqref{eqn:gnn}. Choose a resolution $m \in \mathbb{N}$ and define the GSO $A_m$, for any graph signal $X \in \mathcal{F}_m$ as: 
\begin{align}
  A_m X(v) &:= m\int_{v - \frac{1}{m}}^v(A \widetilde{X})\D \lambda, \qquad v \in  [m] / m, \label{eqn:discretization}\\ 
  \Phi_m(h, A, X) &:= \Phi(h, A_m, X),
\end{align}
where graphop signal $\widetilde{X} \in \mathcal{F}$ is an extension of graph signal $X \in \mathcal{F}_m$ defined as  
\begin{align}
  \widetilde{X}(u) := X\left(\frac{\left\lceil um  \right\rceil  }{m}\right), \qquad u \in [0,1].
\end{align}
Intuitively, to find the image of the sampled GSO  $A_m$ when applied to a graph signal  $X$, we transform the graph signal into a graphop signal  $\tilde{X}$ by using its piecewise constant extension. We then apply the given $A$ to $\tilde{X}$ to get an output graphop signal on $L^{2}([0,1])$. Discretizing this graphop signal by dividing the domain $[0,1]$ into  $m$ partitions of equal size and integrating over each partition, one gets a graph signal on  $\mathcal{F}_m$ - the image of $A_m X$.
Note that if $A$ is linear then  $A_m$ is necessarily linear, but our definition of graphop does not require linearity. Therefore, $A_m$ is strictly more general than the matrix representation of graph shift operators. We have the following well-definedness result:
\begin{lemma}\label{lemma:discretization_of_selfadjoint}
If a graphop $A: \mathcal{F} \to \mathcal{F}$ is self-adjoint, then for each resolution $m \in \mathbb{N}$, the discretization $A_m: \mathcal{F}_m \to \mathcal{F}_m$ defined above is also self-adjoint.
\end{lemma}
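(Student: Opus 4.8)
The plan is to recognize the discretization $A_m$ as the \emph{compression} of $A$ by an isometric embedding of $\mathcal{F}_m$ into $\mathcal{F}$, using that compressions of self-adjoint operators by isometries are again self-adjoint. To set this up, for $v \in [m]/m$ let $I_v := (v - \tfrac1m, v]$; these intervals partition $(0,1]$, each has Lebesgue measure $1/m$, and by construction $\widetilde{X}$ is constant, equal to $X(v)$, on $I_v$. Introduce the extension operator $E \colon \mathcal{F}_m \to \mathcal{F}$, $E X := \widetilde{X}$, and the averaging operator $P_m \colon \mathcal{F} \to \mathcal{F}_m$, $(P_m f)(v) := m \int_{I_v} f \, \D\lambda$. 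Then Eqn.\ \eqref{eqn:discretization} says exactly that $A_m = P_m \circ A \circ E$.

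First I would verify that $E$ is an isometry and that $P_m$ is its adjoint. Writing the inner product on $\mathcal{F}_m = L^2([m]/m)$ as $\ip{X}{Y}_{\mathcal{F}_m} = \tfrac1m \sum_{v \in [m]/m} X(v) Y(v)$, the identities $\ip{E X}{E Y}_{\mathcal{F}} = \sum_v \tfrac1m X(v) Y(v) = \ip{X}{Y}_{\mathcal{F}_m}$ and, for every $f \in \mathcal{F}$, $\ip{f}{E X}_{\mathcal{F}} = \sum_v X(v) \int_{I_v} f \, \D\lambda = \ip{P_m f}{X}_{\mathcal{F}_m}$ are one-line computations that use only ``$\widetilde{X} \equiv X(v)$ on $I_v$'' and ``$\lambda(I_v) = 1/m$''; the second one is precisely $P_m = E^{*}$. (Neither identity is sensitive to the normalization convention on $\mathcal{F}_m$: rescaling its measure rescales $E$ and $E^{*}$ by the same positive factor.) Well-definedness of $A_m$ needs no separate argument: $A \widetilde{X} \in \mathcal{F} = L^2([0,1]) \subseteq L^1([0,1])$, so each integral $\int_{I_v} A\widetilde{X} \, \D\lambda$ is finite, and $A_m X$, being a function on the finite set $[m]/m$, lies in $\mathcal{F}_m$.

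Then I would conclude: for all $X, Y \in \mathcal{F}_m$,
\begin{align*}
  \ip{A_m X}{Y}_{\mathcal{F}_m}
   &= \ip{P_m(A E X)}{Y}_{\mathcal{F}_m}
    = \ip{A E X}{E Y}_{\mathcal{F}}
    = \ip{E X}{A E Y}_{\mathcal{F}} \\
   &= \ip{X}{P_m(A E Y)}_{\mathcal{F}_m}
    = \ip{X}{A_m Y}_{\mathcal{F}_m},
\end{align*}
where the second and fourth equalities are $P_m = E^{*}$ and the third is self-adjointness of $A$, used only in the form $\ip{Af}{g}_{\mathcal{F}} = \ip{f}{Ag}_{\mathcal{F}}$ for all $f, g \in \mathcal{F}$ (so linearity of $A$ is not actually needed, although graphops are linear). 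Since $\mathcal{F}_m$ is a finite-dimensional Hilbert space, the identity $\ip{A_m X}{Y}_{\mathcal{F}_m} = \ip{X}{A_m Y}_{\mathcal{F}_m}$ for all $X, Y$ is exactly the assertion that $A_m$ is self-adjoint.

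I expect no real obstacle here beyond bookkeeping; the only point to be careful about is pinning down the inner-product normalization on $\mathcal{F}_m$ so that $E$ and $P_m$ form an isometry/coisometry pair — equivalently, tracking the factor $m$ in the definition of $P_m$. As an alternative sanity check (for linear $A$), one can instead compute the matrix of $A_m$ in the point-indicator basis of $\mathcal{F}_m$: its $(u,v)$ entry is $m \int_{I_u} A(\mathbbm{1}_{I_v}) \, \D\lambda = m \ip{A \mathbbm{1}_{I_v}}{\mathbbm{1}_{I_u}}_{\mathcal{F}}$, which is symmetric in $u,v$ by self-adjointness of $A$ together with symmetry of the real inner product; since this basis is orthogonal with equal norms, symmetry of that matrix is equivalent to self-adjointness of $A_m$.
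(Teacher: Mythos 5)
Your proof is correct and takes essentially the same approach as the paper: the paper's argument is exactly your chain of equalities $\ip{A_m f}{g}_{\mathcal{F}_m} = \int_0^1 (Af')g'\,\D\lambda = \int_0^1 f'(Ag')\,\D\lambda = \ip{f}{A_m g}_{\mathcal{F}_m}$, using that the extension $g'$ is piecewise constant to pull it inside the integral and then invoking self-adjointness of $A$. Your reframing of this as the compression $A_m = P_m A E$ with $P_m = E^{*}$ is a cleaner conceptual packaging of the same computation, but not a different proof.
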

The proof can be found in Appendix \ref{appendix:proof_from_section_graphopnn}. Compared to previous works, our discretization scheme in Eqn \eqref{eqn:discretization} looks slightly different. 
In \cite{ruiz_transferability}, given a graphon $W: [0,1]^2 \to \R$, the discretization at resolution $n$ was defined by forming the matrix $S \in \R^{n \times n}: S_{i,j} = W(i / n,j / n)$. A related discretization scheme involving picking the interval endpoints at random was also used, but the resulting matrix still takes values at discrete points in $W$. These two sampling schemes rely crucially on their everywhere continuous assumptions for the graphon $W$. Indeed, but for continuity requirements, two functions that differ only at finite discrete points $(i / n, j / n), i,j \in [n]$ are in the same $L^2$ class of functions, but will give rise to completely different samples. Furthermore, not every $L^2$ class of functions has a continuous representative. This means that our discretization scheme is strictly more general than that used by \cite{ruiz_transferability} even when restricted to graphons.  This difference comes from the fact that we are discretizing an operator and not the graph itself. For our purpose, taking values at discrete points for some limiting object of sparse graphs will likely not work, since sparsity ensures that most discrete points are trivial. The integral form of the discretization (as opposed to setting $A_mX(v) = (A \widetilde{X})(v)$ for all  $v \in [m] / m$, for example) is crucial for Lemma \ref{lemma:discretization_of_selfadjoint}.

\section{Main result: Approximation and transferability}\label{section:discretization}

\subsection{Results for \texorpdfstring{$P$}{P}-operators}

Our first set of theorems address approximation and transferability of $P$-operators: under certain regularity assumptions to be discussed later, $P$-operators are well approximated by their discretizations:
\begin{theorem}[Approximation theorem]\label{theorem:approximation_via_discretization}
  Let $A: \mathcal{F} \to \mathcal{F}$ be a $P$-operator satisfying Assumption \ref{assumption:lipschitz_map} with constant $C_A$; Assumption \ref{assumption:constant_to_constant} or \ref{assumption:lipschitz_to_lipschitz} with resolutions in $\mathcal{N}$. Fix $n \in \mathcal{N}$ and consider $(k, C_v)$-profiles. Let $A_n: \mathcal{F}_n \to \mathcal{F}_n$ be a discretization of  $A$ as defined in Equation \eqref{eqn:discretization}. Then:
  \begin{equation}
    d_M\left(A, A_n\right) \leq 2\sqrt{\frac{C_A C_v}{n}} + \frac{C_v + 1}{n}.
  \end{equation}\end{theorem}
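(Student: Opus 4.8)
The plan is to unfold the definition $d_M(A,A_n) = \sum_{k=1}^\infty 2^{-k}\, d_H\!\left(\cS_{k,C_v}(A),\cS_{k,C_v}(A_n)\right)$ and to show that each Hausdorff distance $d_H\!\left(\cS_{k,C_v}(A),\cS_{k,C_v}(A_n)\right)$ is bounded by the claimed quantity \emph{uniformly in $k$}; since $\sum_k 2^{-k}=1$, this yields the theorem directly. So the real work is, for a fixed $k$, to exhibit for every profile element on one side a $d_{LP}$-close profile element on the other side, in both directions of the Hausdorff distance.

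For the direction $\cS_{k,C_v}(A_n)\to\cS_{k,C_v}(A)$ I would start from a tuple $v_1,\dots,v_k\in L^\infty_{\mathrm{reg}(C_v)}(\mathcal{F}_n)$ and produce continuum test functions $\bar v_i\in L^\infty_{\mathrm{reg}(C_v)}(\mathcal{F})$; the natural candidate is the piecewise-\emph{linear} interpolation of $v_i$ on the grid $[n]/n$, which is genuinely $C_v$-Lipschitz and $[-1,1]$-valued (hence admissible), in contrast to the piecewise-\emph{constant} extension $\widetilde{v_i}$ that is hard-wired into the definition of $A_n$ in \eqref{eqn:discretization}. I would then build an explicit coupling between $\cD(v_1,\dots,v_k,A_nv_1,\dots,A_nv_k)$ and $\cD(\bar v_1,\dots,\bar v_k,A\bar v_1,\dots,A\bar v_k)$: draw $x$ uniformly on $[0,1]$ and pair it with the right endpoint $w=\lceil xn\rceil/n$ of the interval $I_w$ containing it (so the marginals are exactly the two measures). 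On the first $k$ coordinates the discrepancy $|v_i(w)-\bar v_i(x)|$ is $O(C_v/n)$ pointwise by Lipschitzness. On the last $k$ coordinates I would split $|A_nv_i(w)-A\bar v_i(x)|$ into (i) an \emph{extension mismatch} $\|A\widetilde{v_i}-A\bar v_i\|_2\le C_A\|\widetilde{v_i}-\bar v_i\|_2 = O(C_AC_v/n)$, controlled by Assumption \ref{assumption:lipschitz_map}, and (ii) an \emph{averaging mismatch} comparing $A\widetilde{v_i}$ with its average over $I_w$, which is $O((C_v+1)/n)$ pointwise once Assumption \ref{assumption:lipschitz_to_lipschitz} (forcing $A\widetilde{v_i}$ to be Lipschitz) or Assumption \ref{assumption:constant_to_constant} (forcing it to be piecewise constant up to an $O(1/n)$ boundary defect) is invoked. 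The symmetric direction is analogous, starting from $v_i\in L^\infty_{\mathrm{reg}(C_v)}(\mathcal{F})$ and sampling/averaging down to the grid; one must check the down-sampled tuple is still $C_v$-Lipschitz on $[n]/n$ so it is a legitimate profile element, and $A_n$ being well-defined on it uses Lemma \ref{lemma:discretization_of_selfadjoint}.

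The last step converts these function-level estimates into $d_{LP}$. The pointwise-$O((C_v+1)/n)$ part of the coupled discrepancy gives a $d_{LP}$-bound of the same order (mass is transported by at most that amount, surely). The $L^2$-bounded part $O(C_AC_v/n)$ cannot be made pointwise — this is exactly where a square root enters — so I would apply Markov's inequality to the coupled vectors: if $\mathbb{E}\|X-Y\|\le\eta$ then $\mathbb{P}\!\left(\|X-Y\|>\sqrt{\eta}\right)\le\sqrt{\eta}$, hence $d_{LP}\le\sqrt{\eta}$; with $\eta=O(C_AC_v/n)$ this contributes a term of order $\sqrt{C_AC_v/n}$, and the precise constant $2$ follows from routine bookkeeping (interval half-widths, the two Hausdorff directions, summation over the $2k$ coordinates under the normalized metric on $\mathbb{R}^{2k}$ underlying $d_{LP}$). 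Summing the pointwise and square-root contributions gives the stated bound, uniformly in $k$.

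The main obstacle, I expect, is precisely the clash between the piecewise-constant extension built into $A_n$ and the Lipschitz regularity demanded of admissible profile functions: one cannot feed $\widetilde{v_i}$ directly into $A$ as a profile test function, so the extension error must be routed through Assumption \ref{assumption:lipschitz_map}, and it is this $L^2$ (rather than $L^\infty$) control that forces the $O(n^{-1/2})$ rate instead of $O(n^{-1})$. A secondary subtlety is keeping everything uniform in $k$ — verifying that the per-coordinate estimates assemble into a $k$-independent bound on the Hausdorff distance between the profiles — and checking at each step that the constructed tuples genuinely lie in $L^\infty_{\mathrm{reg}(C_v)}$ of the appropriate space, so that the pushforward measures are bona fide elements of $\cS_{k,C_v}(A)$ and $\cS_{k,C_v}(A_n)$.
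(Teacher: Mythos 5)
Your overall plan matches the paper's proof structure quite closely: fix $k$, bound $d_H(\cS_{k,C_v}(A),\cS_{k,C_v}(A_n))$ by constructing, for each profile element on one side, a close element on the other (averaging/restriction to the grid in one direction, piecewise-linear interpolation in the other; the admissibility of these is established in Lemmas~\ref{lemma:restriction_of_bounded_range_is_bounded} and~\ref{lemma:extension_of_bounded_range_is_bounded}), split per-coordinate errors into a pointwise $O(n^{-1})$ part and an $L^1$/$L^2$-controlled part, and convert the latter into a $d_{LP}$ contribution of order $O(n^{-1/2})$ via a Markov-type concentration argument. Two points in your writeup are, however, genuinely incorrect.

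First, the claim that the $d_H$ bound is uniform in $k$ (so that $\sum_k 2^{-k}=1$ immediately closes the argument) does not hold under the paper's $d_{LP}$: the enlargement $U^\eps$ is defined with the \emph{Euclidean} norm on $\R^{2k}$, not the normalized metric you invoke. To keep a coupled pair of $\R^{2k}$-vectors within Euclidean distance $\eps$, each of the $2k$ coordinate discrepancies must be at most $\eps/\sqrt{2k}$, and a union bound over $2k$ exceptional sets then forces the bad-set measure, hence the admissible $\eps$, to grow with $k$. Indeed the paper's per-$k$ choice is $\overline{\eps}=8k(\cdot)$, and the final step is $\sum_k k\,2^{-k}=2$, not $1$. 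The theorem survives because this remains a finite constant, but the bookkeeping is not uniform in $k$ and a normalized metric is not available to you.

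Second, you invoke Assumption~\ref{assumption:lipschitz_to_lipschitz} to make ``$A\widetilde{v_i}$ Lipschitz'' in the averaging term, but $\widetilde{v_i}$ is the piecewise-\emph{constant} extension and is not Lipschitz, so that assumption yields nothing about $A\widetilde{v_i}$. The paper applies Assumption~\ref{assumption:lipschitz_to_lipschitz} only to genuinely Lipschitz inputs ($f_j$ in the $\cS(A)\to\cS(A_n)$ direction, the piecewise-linear extension in the reverse direction), and routes the piecewise-constant mismatch through the $L^1/L^2$ channel alone. Concretely, in the direction you describe, the decomposition should be taken as
\begin{align*}
|A_n v_i(w) - A\bar v_i(x)| \;\le\;& n\int_{w-1/n}^{w} \left|A\widetilde{v_i}(z) - A\bar v_i(z)\right| \D\lambda(z) \;+\; n\int_{w-1/n}^{w} \left|A\bar v_i(z) - A\bar v_i(x)\right| \D\lambda(z),
\end{align*}
where the first term is controlled in $L^1$ via Assumption~\ref{assumption:lipschitz_map} (hence in measure by Markov) and the second is pointwise $O(C_v/n)$ because $A\bar v_i$ is Lipschitz by Assumption~\ref{assumption:lipschitz_to_lipschitz} applied to the Lipschitz $\bar v_i$. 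Your split attributes the Lipschitz regularity to the wrong function and leaves the averaging term uncontrolled under Assumption~\ref{assumption:lipschitz_to_lipschitz}.
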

  Compared to theorems in \cite{ruiz_transferability}, our explicit dependence on $n$ has an extra  $n^{- 1 / 2}$ term that stems from techniques used to bound the Lévy-Prokhorov distance between two entry distributions obtained from functions that differ by at most  $O(n^{-1})$ in  $L^2$ norm.

As an immediate corollary, invoking the triangle inequality yields a transferability bound.
\begin{corollary}[Transferability]\label{corollary:transferability}
  Let $A:\mathcal{F} \to \mathcal{F}$ be a  $P$-operator satisfying assumptions of Theorem \ref{theorem:approximation_via_discretization} with constant $C_A$ and resolutions $\mathcal{N}$. For any $n, m \in \mathcal{N}$, let $A_n: \mathcal{F}_n \to \mathcal{F}_n$  and  $A_m: \mathcal{F}_m \to \mathcal{F}_m$ be discretizations as defined in Equation \eqref{eqn:discretization}. Then:
\begin{align}
  d_M\left(A_m, A_n\right) \leq \left(m^{-\frac{1}{2}} + n^{-\frac{1}{2}}\right)2\sqrt{C_A C_v} + (m^{-1} + n^{-1}) (C_v + 1).
  \end{align}
\end{corollary}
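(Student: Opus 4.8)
The plan is to deduce the corollary purely from Theorem~\ref{theorem:approximation_via_discretization} together with the fact that $d_M$ satisfies the triangle inequality. First I would record the (essentially routine) observation that $d_M$ is a genuine metric on the relevant class of $P$-operators. This follows by tracing the construction: the L\'evy--Prokhorov distance $d_{LP}$ is a metric on $\cP(\R^{2k})$ and is bounded above by $1$; hence the Hausdorff distance $d_H$ built from it is a metric on closed (indeed compact) subsets of $\cP(\R^{2k})$ and is likewise bounded by $1$; since each $(k,C_v)$-profile $\cS_{k,C_v}(\cdot)$ is a closed subset of $\cP(\R^{2k})$, the series $d_M(\cdot,\cdot) = \sum_{k\ge 1} 2^{-k} d_H(\cS_{k,C_v}(\cdot),\cS_{k,C_v}(\cdot))$ converges (dominated by $\sum_k 2^{-k} = 1$) and inherits symmetry and the triangle inequality termwise from $d_H$.

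Next I would check that the three operators in play, namely $A$, $A_m$, and $A_n$, all lie in the domain where $d_M$ is defined and the hypotheses of Theorem~\ref{theorem:approximation_via_discretization} apply. The operator $A$ is assumed to be a $P$-operator satisfying Assumption~\ref{assumption:lipschitz_map} with constant $C_A$ and one of Assumptions~\ref{assumption:constant_to_constant}/\ref{assumption:lipschitz_to_lipschitz} with resolutions $\mathcal{N}$, and $m,n \in \mathcal{N}$; the discretizations $A_m, A_n$ are $P$-operators by the construction in Eqn~\eqref{eqn:discretization} (self-adjointness being preserved by Lemma~\ref{lemma:discretization_of_selfadjoint} in the graphop case). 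Crucially, all profiles are taken at the same Lipschitz parameter $C_v$, so the three $d_M$-distances below are all computed with a consistent notion of $(k,C_v)$-profile.

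With these preliminaries in place, the argument is a one-line triangle inequality followed by two invocations of the theorem. I would write
\begin{align}
  d_M(A_m, A_n) \;\le\; d_M(A_m, A) + d_M(A, A_n) \;=\; d_M(A, A_m) + d_M(A, A_n),
\end{align}
using symmetry of $d_M$ in the last step, and then bound each summand by Theorem~\ref{theorem:approximation_via_discretization}:
\begin{align}
  d_M(A, A_m) \le 2\sqrt{\tfrac{C_A C_v}{m}} + \tfrac{C_v + 1}{m},
  \qquad
  d_M(A, A_n) \le 2\sqrt{\tfrac{C_A C_v}{n}} + \tfrac{C_v + 1}{n}.
\end{align}
Adding these and grouping the $m^{-1/2}, n^{-1/2}$ terms and the $m^{-1}, n^{-1}$ terms yields exactly the claimed bound $\bigl(m^{-1/2} + n^{-1/2}\bigr)2\sqrt{C_A C_v} + (m^{-1} + n^{-1})(C_v+1)$.

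\textbf{Main obstacle.} There is no real analytic difficulty here since all the work is already done in Theorem~\ref{theorem:approximation_via_discretization}; the only point that needs care is the bookkeeping ensuring that $d_M$ is legitimately a metric on the set containing $A$, $A_m$, $A_n$ simultaneously --- in particular that the profiles are closed so that $d_H$ is a bona fide metric (not merely a pseudometric), and that the same parameter $C_v$ is used throughout so that the triangle inequality is being applied within a single metric space rather than across differently-parametrized families. Once that is granted, the corollary is immediate.
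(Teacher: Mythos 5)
Your proof is correct and follows exactly the route the paper intends: the paper explicitly presents Corollary~\ref{corollary:transferability} as an immediate consequence of Theorem~\ref{theorem:approximation_via_discretization} via the triangle inequality for $d_M$, which is precisely what you do. Your extra bookkeeping about $d_M$ being a genuine metric (termwise from $d_H$ and $d_{LP}$, with a consistent $C_v$) is a reasonable precaution that the paper leaves implicit.
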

  We emphasize that these theorems work for general nonlinear $P$-operators and not only the linear graphops defined in \citep{backhausz_szegedy_2022}. 
  
\paragraph{Proof sketch}
The  full proof of Theorem \ref{theorem:approximation_via_discretization} is in Appendix \ref{appendix:proofs_from_section_discretization}. To bound the distance in $d_M$  between two operators, for each sample size $k \in \mathbb{N}$, we give a bound on the Hausdorff metric $d_H$ between the two $(k,C_v)$-profiles. As long as the dependence on  $k$ of these bounds is polynomial, the infinite sum in the definition of  $d_M$ converges. We do this by picking an arbitrary distribution $\overline{\eta}$ from  $\mathcal{S}_{k,C_v}(A)$, which by definition is given by a $k$-tuple $F$ of functions in  $L^\infty_{\text{reg}(C_v)}$. Discretize each element of $F$ and consider its entry distribution results in  $\overline{\eta}_n \in \mathcal{S}_{k,C_v}(A_n)$. We show that we can give an upper bound of $d_{LP}(\overline{\eta}, \overline{\eta}_n)$ that is independent of the choice of  $\overline{\eta}$ and thus same upper bound holds for $\sup_{\eta \in \mathcal{S}_{k,C_v}(A)} \inf_{\eta_n \in \mathcal{S}_{k,C_v}(A_n)} d_{LP}(\eta, \eta_n)$. By also selecting an arbitrary element of $\mathcal{S}_{k,C_v}(A_n)$ and extending it to an element of $\mathcal{S}_{k,C_v}(A)$, we obtain another upper bound for $\sup_{\eta_n \in \mathcal{S}_{k,C_v}(A_n)} \inf_{\eta \in \mathcal{S}_{k,C_v}(A)} d_{LP}(\eta, \eta_n)$ and thus for $d_H$. The different assumptions come in via different techniques used to bound  $d_{LP}$  by a high probability bound on the $L^2$ norm of the functions in  $F$ and their discretization/extension.



\subsection{Results for graphop neural networks}\label{section:graphopnn_theorems}

Not only are graphops and their discretizations close in $d_M$, but, as we show next, neural networks built from a graphop 
are also close to those built from  graphop discretizations in $d_M$. We iterate that here we are comparing nonlinear operators (graphop neural networks) that are acting on different spaces ($L^2([n] / n)$ for some finite $n$ versus $L^2([0,1])$).

Before stating theoretical guarantees for graphop neural networks, let us introduce some assumptions on the neural network activation function and parameters: 
\begin{assumption}\label{assumption:lipschitz_activation}
  Let the activation function $\rho: \R \to \R$ in the definition of graphop neural networks be $1$-Lipschitz. Let the convolution parameters $h$ be normalized such that  $|h| \leq 1$ element-wise.
\end{assumption}


\begin{theorem}[Graphop neural network discretization]\label{theorem:graphopnn_discretization}
  Let  $A: \mathcal{F} \to \mathcal{F}$. Assume that  $A$ satisfies Assumption \ref{assumption:lipschitz_map} with constant $C_A$ and Assumption \ref{assumption:constant_to_constant} or \ref{assumption:lipschitz_to_lipschitz} with resolutions in $\mathcal{N}$. Fix $n \in \mathcal{N}$ and consider $(k,C_v)$-profiles. Under Assumption \ref{assumption:lipschitz_activation}, we have:
  \begin{align}
  d_M(\Phi(h,A,\cdot), \Phi(h,A_n,\cdot)) \leq& P_1\sqrt{\frac{\overline{C}_A C_v}{n}} + \frac{C_v + 1}{n},\label{eqn:bound_graphopnn_approx}
  \end{align}
where $\overline{C}_A := (n_{\max}\sum_{i = 1}^K C^{i}_A)^L,$ $n_{\max} = \max_{l \in [L]} n_l$, and $P_1$ is a constant depending on $K,L$.
  
 Furthermore, we can invoke the triangle inequality to compare outputs of graphop neural networks built from two different discretizations of  $A$. For any $m,n \in \mathcal{N}$,
  \begin{align}
    d_M(\Phi(h,A_m,\cdot), \Phi(h,A_n,\cdot)) \leq& P_1\sqrt{\overline{C}_A C_v}\left( m^{-\frac{1}{2}} + n^{-\frac{1}{2}} \right)  +  (C_v + 1) \left( n^{-1} + m^{-1} \right).\label{eqn:bound_graphopnn_transfer}
  \end{align}

\end{theorem}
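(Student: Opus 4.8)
The plan is to mirror the proof of Theorem~\ref{theorem:approximation_via_discretization}, now with the whole network map $\Phi(h,A,\cdot)$ playing the role of the operator: it suffices to bound, for every sample size $k\in\mathbb{N}$, the Hausdorff distance $d_H$ between the $(k,C_v)$-profiles of $\Phi(h,A,\cdot)$ and $\Phi(h,A_n,\cdot)$ by something polynomial in $k$, since the weights $2^{-k}$ in $d_M$ then make the series converge and absorb the $k$-dependence into $P_1$. Fix $k$, take an arbitrary $\overline\eta\in\mathcal{S}_{k,C_v}(\Phi(h,A,\cdot))$ determined by a tuple $F=(v_1,\dots,v_k)$ with $v_i\in L^\infty_{\text{reg}(C_v)}(\mathcal{F})$, discretize each $v_i$ to $(v_i)_n\in\mathcal{F}_n$ (via the integration scheme of \eqref{eqn:discretization}, or by sampling --- in either case $\|v_i-\widetilde{(v_i)_n}\|_\infty\le C_v/n$ since $v_i$ is $C_v$-Lipschitz), and let $\overline\eta_n\in\mathcal{S}_{k,C_v}(\Phi(h,A_n,\cdot))$ be the entry distribution of the tuple $\big((v_i)_n,\Phi(h,A_n,(v_i)_n)\big)_i$; the reverse Hausdorff direction is handled symmetrically after extending (and, if needed, lightly smoothing) the discrete signals. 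As in the proof of Theorem~\ref{theorem:approximation_via_discretization}, $d_{LP}(\overline\eta,\overline\eta_n)$ splits along the two blocks of the joint distribution on $\mathbb{R}^{k(1+n_L)}$: the ``sampling'' block $(v_i)$ vs.\ $(\widetilde{(v_i)_n})$ contributes a \emph{linear} term of order $C_v/n$ (it is an $L^\infty$ bound), plus an $O(1/n)$ rounding term, while the ``image'' block contributes, via a Markov/high-probability estimate, a term that is a square root of the $L^2$ errors $\|\Phi(h,A,v_i)-\operatorname{ext}(\Phi(h,A_n,(v_i)_n))\|_2$.

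The one genuinely new estimate is a bound on $\|\Phi(h,A,v)-\operatorname{ext}(\Phi(h,A_n,v_n))\|_2$ for a fixed regular input $v$. Split it as $\|\Phi(h,A,v)-\Phi(h,A,\widetilde{v_n})\|_2+\|\Phi(h,A,\widetilde{v_n})-\operatorname{ext}(\Phi(h,A_n,v_n))\|_2$. For the first summand, one layer of \eqref{eqn:graphopnn} --- a weighted sum of at most $K-1$ applications of $A$, each $C_A$-Lipschitz by Assumption~\ref{assumption:lipschitz_map}, with weights $|h|\le1$, followed by the $1$-Lipschitz $\rho$ (Assumption~\ref{assumption:lipschitz_activation}) --- is Lipschitz in $L^2$ with constant at most $n_{\max}\sum_{i=1}^{K}C_A^{i}$, so $L$ layers compose to the constant $\overline C_A=(n_{\max}\sum_{i=1}^K C_A^i)^L$ and the first summand is at most $\overline C_A\,\|v-\widetilde{v_n}\|_2\le\overline C_A C_v/n$. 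For the second summand I would induct on the layer index $l$: writing $X_l,Y_l$ for the $l$-th feature tuples of $\Phi(h,A,\widetilde{v_n})$ and $\Phi(h,A_n,v_n)$, applying $A$ then discretizing differs from discretizing then applying $A_n$ only through the commutation error $\|A\widetilde W-\operatorname{ext}(A_nW)\|_2$ of the relevant intermediate signal $W$, which is identically $0$ under Assumption~\ref{assumption:constant_to_constant} (there $\rho$, finite sums, and $A$ all preserve functions that are piecewise constant at resolution $n$, and for such functions \eqref{eqn:discretization} gives $\operatorname{ext}(A_nW)=A\widetilde W$ exactly) and of order $(C_v+1)/n$ under Assumption~\ref{assumption:lipschitz_to_lipschitz} (the within-interval oscillation of the Lipschitz image $A\widetilde W$), while the error carried over from layer $l-1$ is amplified by at most one layer's Lipschitz constant. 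Unrolling over $l=1,\dots,L$ gives $\|\Phi(h,A,\widetilde{v_n})-\operatorname{ext}(\Phi(h,A_n,v_n))\|_2= O\big(\overline C_A(C_v+1)/n\big)$, hence $\|\Phi(h,A,v)-\operatorname{ext}(\Phi(h,A_n,v_n))\|_2 = O\big(\overline C_A C_v/n\big)$. Feeding this into the $d_{LP}$ split of the first paragraph yields $d_{LP}(\overline\eta,\overline\eta_n)\le P_1\sqrt{\overline C_A C_v/n}+(C_v+1)/n$ uniformly in $\overline\eta$ (and symmetrically in the other Hausdorff direction), with $P_1$ absorbing the polynomial-in-$k$, $K$, $L$, $n_L$ factors; summing against $2^{-k}$ gives \eqref{eqn:bound_graphopnn_approx}, and the triangle inequality $d_M(\Phi(h,A_m,\cdot),\Phi(h,A_n,\cdot))\le d_M(\Phi(h,A_m,\cdot),\Phi(h,A,\cdot))+d_M(\Phi(h,A,\cdot),\Phi(h,A_n,\cdot))$ together with two applications of \eqref{eqn:bound_graphopnn_approx} gives \eqref{eqn:bound_graphopnn_transfer}.

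The main obstacle, and where care is needed, is the layer induction for the second summand: one must ensure the regularity of the intermediate signals $X_l$ stays in a class to which Assumption~\ref{assumption:constant_to_constant} or \ref{assumption:lipschitz_to_lipschitz} still applies after the nonlinearity --- under \ref{assumption:constant_to_constant} this is clean because $\rho$ and finite sums preserve piecewise-constancy at resolution $n$, but under \ref{assumption:lipschitz_to_lipschitz} one needs the Lipschitz constants of the $X_l$ (hence the per-layer commutation errors) to grow only by factors that are absorbed into a single power of $\overline C_A$ rather than a larger one, which is exactly where the fixed finite $K$ is used --- and also that $A_n$ itself respects these classes (immediate from \eqref{eqn:discretization}). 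A secondary technicality, inherited from Theorem~\ref{theorem:approximation_via_discretization}, is making the reverse Hausdorff direction rigorous, i.e.\ extending a signal on $\mathcal{F}_n$ to a genuinely $C_v$-regular function on $[0,1]$ at cost $O(1/n)$, and bookkeeping the $n_L$ output features, so that the entry distributions live in $\mathbb{R}^{k(1+n_L)}$, which only inflates the $k$-dependent constants.
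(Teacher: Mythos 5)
Your overall route is the same as the paper's: reduce $d_M$ to per-$k$ Hausdorff bounds, compare profiles by discretizing the witnessing regular signals, split $d_{LP}$ into an input block (handled by Lipschitzness of the $v_i$, giving an $O(C_v/n)$ term) and an image block (handled by a Markov-type estimate on an $L^2$ error, giving the square root), prove the key $L^2$ estimate between $\Phi(h,A,\cdot)$ and the extension of $\Phi(h,A_n,\cdot)$ by induction over layers and convolution powers, absorb the polynomial $k$-dependence into the $2^{-k}$ sum, and use the triangle inequality for \eqref{eqn:bound_graphopnn_transfer}. This matches the paper's Lemmas \ref{lemma:recurrent1}--\ref{lemma:recurrent3} and Theorem \ref{theorem:general_graphopnn}; your only structural deviation is bookkeeping (you split off the input-discretization error via the global network Lipschitz constant $\overline{C}_A$ and isolate a per-layer ``commutation error,'' whereas the paper's recursion carries an arbitrary mismatch $M$ through a single induction). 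Your constant-to-constant branch, where the commutation error vanishes exactly, is correct.

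There is, however, a genuine gap in your Lipschitz-to-Lipschitz branch. In your second summand you compare $\Phi(h,A,\widetilde{v_n})$ with $\operatorname{ext}(\Phi(h,A_n,v_n))$, and you bound the per-layer commutation error $\|A\widetilde{W}-\operatorname{ext}(A_nW)\|_2$ by ``the within-interval oscillation of the Lipschitz image $A\widetilde W$.'' But $\widetilde W$ is the piecewise-constant extension of a discrete intermediate signal, hence not Lipschitz, and Assumption \ref{assumption:lipschitz_to_lipschitz} only controls images of Lipschitz inputs; it says nothing about the oscillation of $A\widetilde W$. Worse, in your decomposition \emph{both} arguments of the second summand are built from piecewise-constant inputs, so no intermediate signal in that comparison carries any Lipschitz regularity to which the assumption could apply. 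The paper avoids this by arranging the triangle inequality inside the recursion so that the Lipschitz-image property is only ever invoked for $A$ acting on the genuinely Lipschitz continuous-side chain $A^{k-1}f$ (see the Assumption \ref{assumption:lipschitz_to_lipschitz_whp} branches of Theorem \ref{theorem:approximation_via_discretization} and Lemma \ref{lemma:recurrent1}), while the mismatch $\|A\widetilde{W}-AW'\|_2$ between the piecewise-constant and Lipschitz chains is transferred through Assumption \ref{assumption:lipschitz_map} (the $C_A$-Lipschitz map property) and the inductive $L^2$ closeness. The fix is therefore to run your layer induction directly between $\Phi(h,A,v)$ and $\operatorname{ext}(\Phi(h,A_n,v_n))$ (or swap the order of the within-layer triangle inequality), rather than detouring through $\Phi(h,A,\widetilde{v_n})$; with that change your argument aligns with the paper's and the claimed constants go through. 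A related, smaller caveat: the regularity bookkeeping you flag at the end (that the intermediate signals of the continuous network remain in a class where the assumptions apply, including the $[-1,1]$ range restriction after sums and powers of $A$) is indeed where care is needed, and the paper handles it via Lemma \ref{lemma:regularity_of_assumptions} plus the generalized Assumptions \ref{assumption:constant_to_lipschitz}--\ref{assumption:constant_to_lipschitz_whp}, not via Assumption \ref{assumption:lipschitz_to_lipschitz} alone.
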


Compared to the main theorems of \cite{ruiz_transferability}, there are two main differences in our results. First, our rate of $O(n^{-1 / 2})$ is slower than the rate of $O(n^{-1})$ in \cite{ruiz_transferability} as a function of $n$. 
Yet, second, their bounds contain a small term that is independent of $n$ and does not go to  $0$ as  $n$ goes to infinity.  
 This small term depends on the variability of small eigenvalues in the spectral decomposition of the convolution operator associated with a graphon. 
 The bound in Theorem~\ref{theorem:graphopnn_discretization}, in contrast, goes to zero.


The proof for this theorem is in Appendix \ref{appendix:proofs_from_graphopnn} for a more general Theorem \ref{theorem:general_graphopnn}. Note that it does not suffice to simply use the fact that the assumptions 
play well with composition with Lipschitz function $\rho$, which would result in a bound involving $\Phi(h,A,\cdot)$ and its discretization $(\Phi(h,A,\cdot))_n$ as a nonlinear operator, as opposed to a bound between $\Phi(h,A,\cdot)$ and $\Phi(h,A_n,\cdot)$.
Our proof shares the same structure as that of Theorem \ref{theorem:approximation_via_discretization} while making sure that the mismatch from discretizing/extending operators does not blow up with composition.

\subsection{Assumptions}

We state and discuss the main assumptions of our $P$-operators, which are not necessarily linear. 

  \begin{assumption}[Lipschitz mapping]\label{assumption:lipschitz_map}
  An operator $A:  \mathcal{F} \to \mathcal{F}$ is \emph{$C_A$-Lipschitz} if  $\|Af - Ag\|_2 \leq C_A \|f - g\|_2$ for any $f,g \in \mathcal{F}$. 
\end{assumption}
  We have already had a finite bound on the operator norm in the definition of $P$-operators. For linear operators, Assumption~\ref{assumption:lipschitz_map} is equivalent to a bounded operator norm and is thus automatically satisfied by linear $P$-operators.

  The next few assumptions are alternatives; only one needs be satisfied by our $P$-operators. Intuitively, they are regularity assumptions that ensure the images of our operator are not too wild and are specifically designed for our discretizations scheme: 
  \begin{subassumption}{assumption}
    \begin{assumption}[Maps constant pieces to constant pieces]\label{assumption:constant_to_constant}
      We say that an operator $A: \mathcal{F} \to \mathcal{F}$ \emph{maps constant pieces to constant pieces} at resolutions in $\mathcal{N} \subset \mathbb{N}$ if for any $n \in \mathcal{N}$, and for any $f \in \mathcal{F}_{[-1,1]}$ that is a.e. constant on each interval  $(u - 1 / n, u]$ for $u \in [n] / n$, $Af$ is also constant on $(u - 1 / n, u]$ for each $u$. 
    \end{assumption}
    \begin{assumption}[Maps Lipschitz functions to Lipschitz functions]\label{assumption:lipschitz_to_lipschitz}
      We say that an operator $A: \mathcal{F} \to \mathcal{F}$ \emph{maps Lipschitz functions to Lipschitz functions with high probability} at resolutions in $\mathcal{N} \subset \mathbb{N}$ if for any $n \in \mathcal{N}$, and for any $f \in \mathcal{F}_{\text{reg}(C_v)}$, $Af$ is  $C_v$-Lipschitz. 
    \end{assumption}
  \end{subassumption}
    This is so far the most restrictive assumption. However, the next lemma describes some dense, sparse and intermediate graphs that satisfy these assumptions.

\begin{lemma}[Well-behaved operators]\label{lemma:constant_to_constant_graphs}
   The following examples satisy our assumptions:
\begin{enumerate}[topsep=0pt,itemsep=0ex,partopsep=1ex,parsep=1ex,leftmargin=0.5cm]
   \item \emph{Bounded-degree graphings:} Let $G$ be a graphing corresponding to the Cayley graph of $\mathbb{Z}$ (two-way infinite paths) or high-dimensional generalizations (infinite 2D and 3D grids). For each $N \in \mathbb{N}$, there exists a locally equivalent graphing $G'_N$ such that its adjacency operator satisfies Assumption \ref{assumption:constant_to_constant} with resolution set $\{x \in \mathbb{N} : x \mid N\}$. 

   \item \emph{Lipschitz graphons:} Let $W$ be a $C_v$-Lipschitz graphon on $\mathcal{F}_{\text{reg}(C_v)}$. Then the Hilbert-Schmidt operator $f \mapsto \int_0^1 W(\cdot,y) g(y) \D y$ satisfies Assumption \ref{assumption:lipschitz_to_lipschitz} with resolution set $\mathbb{N}$. 

   \item \emph{Intermediate graphs:} Let $G$ be a (potentially infinite) graph with a coloring $C: V(G) \to [N]$ for some $N$ such that for each vertex $u,v$ with the same color, the multisets of their neighbors' colors $\{C(u') : (u',u) \in E\}$ are the same. Then its adjacency operator satisfies Assumption \ref{assumption:constant_to_constant} with resolution $N$. An $N$-d hypercube (more generally, Hamming graphs) which is neither bounded-degree nor dense, satisfies the above condition with resolutions in $\{2^{n}\}_{n \in [N]}$.
\end{enumerate}
\end{lemma}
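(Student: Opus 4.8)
\textbf{Proof proposal for Lemma \ref{lemma:constant_to_constant_graphs}.}

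The plan is to verify each of the three items essentially separately, since they invoke different assumptions (Assumption \ref{assumption:constant_to_constant} for the graphings and intermediate graphs, Assumption \ref{assumption:lipschitz_to_lipschitz} for Lipschitz graphons). For item (1), I would first recall (from Appendix \ref{appendix:graphings}) the standard graphing representation of the two-way infinite path as the Cayley graph of $\mathbb{Z}$, realized on a probability space such as $[0,1)$ with an irrational-rotation structure, and observe that the relevant feature of the assumption is the interplay between the \emph{partition} into the $n$ dyadic-type intervals $(u - 1/n, u]$ and the action of the adjacency operator. The key move is to pass to a \emph{locally equivalent} graphing $G'_N$ (same Benjamini--Schramm / local-global class, hence same action-convergence limit) whose underlying measure space is, for divisors $n \mid N$, adapted to the interval partition at resolution $n$: concretely, one builds $G'_N$ so that the ``shift by one step along the path'' maps each block $(u-1/n,u]$ into a union of such blocks (up to a measure-zero set). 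Then for any $f$ that is a.e.\ constant on each block at resolution $n$, $Af$ is a finite sum of such shifted constants and is therefore again a.e.\ constant on each block. The same argument extends to the $d$-dimensional grids by taking products of the one-dimensional construction and intersecting the allowed resolution sets; I would spell out $d=2,3$ and note the general pattern.

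For item (2), the claim is almost immediate from the definition: if $W$ is $C_v$-Lipschitz in each argument and $g \in \mathcal{F}_{\text{reg}(C_v)}$ (so $\|g\|_\infty \le 1$), then for $x, x'$ the difference $\bigl|\int_0^1 (W(x,y) - W(x',y)) g(y)\,\D y\bigr| \le \int_0^1 |W(x,y) - W(x',y)|\,\D y \le C_v |x - x'|$, so $Wg$ is $C_v$-Lipschitz; since this holds for every $n$, the resolution set is all of $\mathbb{N}$. (One should double-check the precise normalization of ``$C_v$-Lipschitz graphon'' used in the statement so the constant comes out as $C_v$ rather than, say, $2C_v$ or $C_v\|g\|_\infty$; if a factor is lost, note it is harmless since $C_v$ is a free parameter entering the bounds.)

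For item (3), I would argue purely combinatorially at resolution $N$. Identify the vertex set $V(G)$ with a subset of (a disjoint union indexed by) the color classes $C^{-1}(1), \dots, C^{-1}(N)$, and put the natural measure placing equal mass on each color class; the partition $(u - 1/N, u]$, $u \in [N]/N$, then corresponds exactly to the $N$ color classes. The color-regularity hypothesis — any two vertices of the same color have the same multiset of neighbor-colors — says precisely that the adjacency operator, applied to a signal $f$ that depends only on the color of a vertex, produces a signal $(Af)(v) = \sum_{c=1}^N m_{C(v),c}\, f_c$ that again depends only on $C(v)$, where $m_{i,c}$ is the (color-$i$-independent) number of color-$c$ neighbors. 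Hence constant-on-color-classes is preserved, which is exactly Assumption \ref{assumption:constant_to_constant} at resolution $N$. For the hypercube $\{0,1\}^N$ (and Hamming graphs), I would exhibit the coloring: color a vertex by its Hamming weight, giving $N+1$ colors, and check that a weight-$w$ vertex has exactly $N - w$ neighbors of weight $w+1$ and $w$ of weight $w - 1$, independent of which weight-$w$ vertex we chose; then observe that the same graph also admits coarser compatible colorings giving the resolution set $\{2^n\}_{n \in [N]}$ (e.g.\ by grouping coordinates into blocks), which is what lets one take the discretization limit $n \to \infty$.

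The main obstacle I anticipate is item (1): making precise the existence of the locally equivalent graphing $G'_N$ whose measure structure is simultaneously compatible with the interval partition at \emph{every} divisor $n$ of $N$, and verifying that the ``shift'' operator sends blocks to (essentially) unions of blocks. This requires a careful choice of the representing probability space for the infinite path and its grid generalizations — the naive $[0,1)$-with-irrational-rotation model does not obviously have this compatibility, so one likely wants a ``odometer''/dyadic-adic model or an explicit relabeling of vertices along the path by base-related digit expansions — and then an honest check that local equivalence (sameness of the local-global limit) is preserved under this relabeling. The other two items are routine by comparison.
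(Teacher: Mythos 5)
Your item (2) and the general part of item (3) are correct and essentially match the paper's arguments (Lemma~\ref{lemma:well_behaved_graphons} and Lemma~\ref{lemma:well_behaved_general}). The gaps are in item (1) and the hypercube half of item (3).

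For item (1), you correctly flag the obstacle but do not resolve it, and the paper's actual construction is different from your anticipated ``odometer / dyadic-adic'' route. The paper defines $G'_N$ by placing $N$ \emph{disjoint} shrunk copies of the irrational-rotation graphing on $[j-1/N,\,j)$ for $j \in [N]/N$ (with the rotation step divided by $N$ and taken $\mathrm{mod}\ 1/N$). Because edges never cross block boundaries, the adjacency action is automatically block-local for any divisor $n \mid N$, which gives Assumption~\ref{assumption:constant_to_constant} at that resolution set. Local equivalence is then verified via Lemma~\ref{lemma:bilocal_isomorphism} (bi-local isomorphism), using $G'_{2N}$ as the intermediate graphing and explicit maps $x \mapsto 2x \bmod 1$ and $x \mapsto 2Nx \bmod 1$. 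So the key idea you are missing is not a fancy measure space but simply: make the copies disconnected across blocks, and let local equivalence (which only sees connected components up to a null set) absorb the discrepancy.

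For the hypercube part of item (3), your proposed Hamming-weight coloring is a valid color-regular coloring, but it produces $N+1$ color classes, not the $2^n$ classes needed to land in $\{2^n\}_{n\in[N]}$. It also fails the ``equal cardinality per color class'' hypothesis used in the paper's Lemma~\ref{lemma:well_behaved_general} (binomial coefficients are not all equal), which matters because the paper needs each color class to map to an equal-length interval in $[0,1]$. The paper instead colors each vertex by the first $n$ bits of its binary string: each of the $2^n$ classes has equal size $2^{N-n}$, and a vertex has exactly one neighbor in each of the $n$ classes obtained by flipping one of the first $n$ bits, plus $N-n$ neighbors in its own class. Your ``group coordinates into blocks'' remark could be made to work (e.g.\ via block parities), but as written it is a placeholder; you should either spell out a coloring with exactly $2^n$ equal-size color classes, or simply adopt the bit-truncation coloring.
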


 All our results also hold with a less restrictive assumption that allows for a  failure of Assumption \ref{assumption:constant_to_constant} and \ref{assumption:lipschitz_to_lipschitz} in a small set (see Assumption \ref{assumption:constant_to_constant_whp} and \ref{assumption:lipschitz_to_lipschitz_whp} in the Appendix). The most general results are proven in Appendix  \ref{appendix:proofs_from_section_discretization} and hold in even slightly more relaxed conditions which require the operators to map constant pieces to \emph{Lipschitz} pieces (Assumption \ref{assumption:constant_to_lipschitz}, \ref{assumption:constant_to_lipschitz_whp} in Appendix \ref{appendix:proofs_from_section_discretization}).
\subsection{Deviations and Justifications}\label{section:deviation}

All our theorems hold in slightly modified settings than those by \citet{backhausz_szegedy_2022}. Namely, we allowed for nonlinear $P$-operators, assumed that they have finite $\|\cdot\|_{2 \to 2}$ norm, and used $(k,L)$-profiles where we focus on Lipschitz functions (while \citet{backhausz_szegedy_2022} consider all measurable functions in their profiles). Therefore, we need to ensure that our changes still give us a useful mode of convergence that generalizes dense and sparse graph convergences. 

First, without the linearity assumption, the convergence proof by \citet{backhausz_szegedy_2022} does not hold: we do not know if all limits of nonlinear graphops are still graphops. However, our approximation results (Theorem \ref{theorem:approximation_via_discretization}) show special convergent sequences of nonlinear operators, which go beyond the settings in \citep{backhausz_szegedy_2022}. Studying special nonlinear operator sequences is interesting since graphop NNs themselves can be viewed as nonlinear operators. We also assert that our restriction to operators acting on $L^2$ spaces does not affect convergence guarantees (Theorem 2.14 in \citep{backhausz_szegedy_2022}). 

Next, we show that our restriction to Lipschitz profiles, which is necessary for our proof technique, does not affect convergence either, if we allow our Lipschitz constant to grow with the sequence: 
\begin{theorem}[Growing profiles]\label{theorem:growing_profiles}
  Let $L: \mathbb{N} \to \R$ be a strictly increasing sequence such that $L(n) \xrightarrow{n \to \infty} \infty$. Consider a sequence of $P$-operators  $(A_n: \mathcal{F}_n \to \mathcal{F}_n)_{n \in \mathbb{N}}$ that is Cauchy in the sense that $d_{M}(A_n, A_m) = \sum_{k = 1}^\infty 2^{-k} d_H(\mathcal{S}_{k,L(n)}(A_n), \mathcal{S}_{k,L(m)}(A_m))$ becomes arbitrarily small as $m,n \to \infty$. Then  $(A_n)_{n \in \mathbb{N}}$ converges to the same limit as action convergence.
\end{theorem}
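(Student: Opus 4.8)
The goal is to show that a sequence $(A_n)$ which is Cauchy under the "growing-$L$" version of $d_M$ converges to the same limit as under ordinary action convergence (i.e.\ with a fixed finite Lipschitz bound, or equivalently with $L = \infty$ / all bounded measurable test functions). The plan is to compare, for a fixed operator $A$ with $\|A\|_{2\to2} \le M$ uniformly bounded, the profile $\mathcal{S}_{k,L}(A)$ at a finite Lipschitz level $L$ with the full profile $\mathcal{S}_{k,\infty}(A)$, and show the Hausdorff distance between them goes to $0$ as $L \to \infty$, uniformly over the sequence. Once that is in hand, a triangle-inequality argument over $d_H$ inside the $2^{-k}$-weighted sum upgrades Cauchy-ness (and hence convergence, by completeness of the space of $P$-operators under $d_M$, Theorem 2.14 of \citet{backhausz_szegedy_2022}) from the growing-$L$ metric to the fixed-$L$/full metric, and the limit must coincide.

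The key technical step is the following density/approximation claim: for any $v \in L^\infty_{[-1,1]}(\Omega)$, one can find $\tilde v \in L^\infty_{\mathrm{reg}(L)}(\Omega)$ with $\|v - \tilde v\|_2$ small (say $\le \varepsilon(L)$ with $\varepsilon(L) \to 0$) — this is just $L^2$-approximation of a bounded measurable function by Lipschitz functions, which holds on $[0,1]$ with Lebesgue measure and on the relevant finite spaces $[n]/n$ trivially (every function is $n$-Lipschitz there, so for $L \ge n$ the two profiles literally agree). Then, since $\|Av - A\tilde v\|_2 \le M\|v - \tilde v\|_2$, the tuple $(\tilde v_1,\dots,\tilde v_k, A\tilde v_1,\dots,A\tilde v_k)$ is $L^2$-close to $(v_1,\dots,v_k,Av_1,\dots,Av_k)$ coordinatewise, hence their pushforward measures $\mathcal D(\cdots)$ are close in $d_{LP}$ (an $L^2$ bound on coordinate differences controls Lévy–Prokhorov, e.g.\ via Markov's inequality on the event that the Euclidean distance between the coupled points exceeds $\delta$). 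This shows $\mathcal{S}_{k,L}(A)$ is $d_H$-dense in $\mathcal{S}_{k,\infty}(A)$ up to an error $\eta(k,L,M) \to 0$ as $L \to \infty$ for each fixed $k$ and $M$; the reverse inclusion $\mathcal{S}_{k,L}(A) \subseteq \mathcal{S}_{k,\infty}(A)$ is immediate since $L^\infty_{\mathrm{reg}(L)} \subseteq L^\infty_{[-1,1]}$.

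The main obstacle — and the place where the "growing $L$" is genuinely used — is that the error $\eta(k,L,M)$ depends on $k$, and in the weighted sum $d_M = \sum_k 2^{-k} d_H(\cdots)$ we need the total error to be small. For the finite operators $A_n$ with $L(n) \ge n$ this is free, as noted above. For the limit and for comparing $A_n$ to $A_m$ we must: (i) get a $k$-uniform bound, which is possible because each profile lives in $\mathcal{P}(\R^{2k})$ with uniformly bounded coordinates (range in $[-M,M]$ after scaling) so $d_{LP} \le $ diam is bounded and the approximation error can be made $O(\sqrt{\varepsilon(L)})$ independent of $k$, or alternatively split the sum at $k_0$ with tail $\le 2^{-k_0}$; (ii) handle that $L(n)$ varies with $n$ when comparing $\mathcal{S}_{k,L(n)}(A_n)$ with $\mathcal{S}_{k,L(m)}(A_m)$ — but since $L$ is strictly increasing to $\infty$, for $m,n$ large both $L(n),L(m)$ exceed any fixed threshold, so inserting the full profiles $\mathcal{S}_{k,\infty}(A_n)$ as an intermediary and applying the approximation bound twice gives $d_H(\mathcal{S}_{k,L(n)}(A_n),\mathcal{S}_{k,L(m)}(A_m)) \le d_H(\mathcal{S}_{k,\infty}(A_n),\mathcal{S}_{k,\infty}(A_m)) + (\text{small})$. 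Thus Cauchy in the growing metric $\Rightarrow$ Cauchy in the full $d_M$; completeness yields a limit $A$, and running the approximation bound once more between $A_n$ and $A$ shows the growing-metric limit equals the action-convergence limit. A minor point to check is that the uniform bound $\|A_n\|_{2\to2}\le M$ needed for step (i) is available — this should follow from the Cauchy hypothesis together with the boundedness baked into the $P$-operator definition (or can be added as a standing hypothesis, consistent with Theorem 2.14 of \citet{backhausz_szegedy_2022}).
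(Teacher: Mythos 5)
Your proposal follows the same strategic spine as the paper's proof: reduce to fixed $k$ by summability/completeness of $d_H$, then show the Lipschitz-restricted profile $\mathcal{S}_{k,L}$ approaches the unrestricted profile as $L\to\infty$ via $L^2$-approximation of bounded measurable functions by Lipschitz ones, and use an $L^2$ bound on tuple coordinates to control $d_{LP}$. The paper realizes the density step concretely by constructing a Lipschitz mollifier $\phi_\eps=\eps^{-1}\phi(\cdot/\eps)$ and taking $\eps=1/\sqrt{L(n)}$; you invoke the density abstractly. That is a cosmetic difference. You also correctly flag two things the paper glosses over: the need for a uniform bound $\sup_n\|A_n\|_{2\to2}\le M$ (the $P$-operator definition gives finiteness for each $n$, not uniformity), and the role of completeness. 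One aside in your argument does not apply to the intended regime: the observation that when $L(n)\ge n$ the finite profiles coincide with the full ones is true but moot, since the theorem is explicitly meant to allow $L(n)$ to grow extremely slowly (e.g.\ $\log\log n$), so that case holds for at most finitely many $n$.

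There is, however, a genuine gap in the way you quantify the approximation step, which the paper's argument avoids (implicitly). You claim an error bound $\eta(k,L,M)\to0$ as $L\to\infty$ that depends only on $k$, $L$, and the operator-norm bound $M$, and use it inside the supremum defining $d_H$. No such uniform rate exists: for a fixed $L$, a bounded measurable $f$ with rapid oscillation (e.g.\ $\mathrm{sign}(\sin 2\pi m x)$ for large $m$) has $L^2$-distance $\Theta(1)$ from every $L$-Lipschitz function, so $\sup_{\eta\in\mathcal{S}_{k,\infty}(A)}\inf_{\eta'\in\mathcal{S}_{k,L}(A)}d_{LP}(\eta,\eta')$ need not be small at any finite $L$ in a way controlled only by $M$. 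The paper instead fixes a single $\eta\in\mathcal{S}_k(A)$ with witness tuple $F$, mollifies that particular $F$, and obtains pointwise $d_{LP}$-convergence; the upgrade to a Hausdorff statement must come from a monotone-compactness argument — the closed sets $\overline{\mathcal{S}_{k,L}(A)}$ increase in $L$, their union is dense in $\overline{\mathcal{S}_{k,\infty}(A)}$, and both live in the compact space $\mathcal{P}([-M,M]^{2k})$, so the Hausdorff convergence follows — not from a uniform rate. Your proof as written elides this and asserts a rate that is false in general; fixing it requires replacing the claimed $\eta(k,L,M)$ bound with this qualitative compactness step.
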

This theorem allows us to replace the $C_v$ constant in our bound with an extremely slowly growing function in  $n$ and get back action convergence as described in \cite{backhausz_szegedy_2022} without any realistic slowdown in the bound. 
\paragraph{Proof sketch} First, by the completeness of the Hausdorff metric over the closed subsets of $\mathcal{P}(\R^{2k})$ - set of probability measures supported on $\R^{2k}$, for any $k$, the statement is equivalent to showing $d_H(\mathcal{S}_k(A), \mathcal{S}_{k,L(n)}(A_n)) \to 0$ as $n \to \infty$. The proof uses a Lipschitz mollification argument to smooth out an arbitrary measurable functions $f_1,\ldots,f_k$ that witness a measure in the $k$-profile of  $A$ for some $k \in \mathbb{N}$. By selecting a Lipschitz mollifier $\phi$, we ensure that convolving  $f_j$ with  $\phi_\eps: x \mapsto \eps^{-1}\phi(x \eps^{-1})$ results in a Lipschitz function that converges to $f$ in  $L^2$ as  $\eps$ goes to  $0$. 



\section{Discussion and Future directions}



In this paper, we study size transferability of finite GNNs on graphs that are discretizations of graphop, a recent notion of graph limit introduced by \citet{backhausz_szegedy_2022}. We achieve this by viewing GNNs as operators that transform one graph signal into another. Under regularity assumptions, we proved that two GNNs, using two different-resolution GSOs discretized from the same graphop, are close in an operator metric built from weak convergence of measures. 

For future direction, a principled study of spectral properties of graphops and graphop neural networks would open doors for techniques from Fourier analysis as used in \citep{ruiz_transferability,cnn_transferability}. This leads to distinct challenges, e.g., the spectral gap is not continuous with respect to local-global limits and thus action convergence, but many more properties of spectral measures of bounded-degree graphs are recently studied \citep{virag_bounded_degree}. 


\bibliographystyle{plainnat.bst}
\bibliography{graphop.bib}

\newpage
\appendix

\section{Additional notation and assumptions}\label{appendix:additional_notations}

\subsection{Notation}
We will use the following extra notations in the proof:
\begin{enumerate}
  \item For some $P$-operator  $A \in L^2(\Omega) \to L^2(\Omega)$, $k \in \mathbb{N}$ and $\{f_1,\ldots,f_k\} =: F \subset L^2(\Omega)$, let $F_A$ be the ordered $2k$-tuple $(f_1,\ldots,f_k, Af_1,\ldots, Af_k)$ and denote $$F_A(x) := (f_1(x),\ldots,f_k(x),Af_1(x),\ldots,Af_k(x)) \in \R^{2k.}$$
  \item For some $P$-operator  $A \in L^2(\Omega) \to L^2(\Omega)$, $k \in \mathbb{N}$ and $\{f_1,\ldots,f_k\} =: F \subset L^2(\Omega)$, let $\mathcal{D}_{A}(F)$ be the entry distribution $\mathcal{D}(f_1, \ldots, f_k, Af_1,\ldots, Af_k)$ .
\end{enumerate}

\paragraph{Cut norm and cut metric}
Here we define concretely the cut metric over the space of graphons. Recall that graphons are $L^{1}([0,1], \mathcal{F}, \lambda)$ Lebesgue-integrable functions. The space of graphons is equipped with a norm known as the \emph{cut norm}:
\begin{equation}
  \|W\|_{\square} := \sup_{S, T \in \mathcal{F}} \left|\int_{S \times T} W(x,y) \D \lambda(x) \D \lambda(y)\right|,
\end{equation}
and a metric known as the \emph{cut metric}:
\begin{equation}
  d_{\square}(W_1,W_2) := \inf_{\phi}  \|W_1 - W_2 \circ \phi\|_{\square},
\end{equation}
where $\phi$ is taken over all measure-preserving bijections from  $[0,1]$ to  $[0,1]$. Intuitively, taking the $\inf$ over all measure-preserving bijections allows the cut metric to identify graphons that are just a rearrangement away from another. This generalizes symmetries in graphs, where permuting the vertices (and the corresponding edges) do not change the graph itself. \citet{lovasz_graph_limit} shows that graphon convergence under the cut metric is well-behaved: every graphon is a limit of a convergent sequence of graphons; and every Cauchy sequences converges to a graphon. This mode of convergence is known as dense graph convergence. 

\paragraph{L\'evy-Prokhorov metric}
The definition of L\'evy-Prokhorov metric on $\mathcal{P}(\mathbb{R}^{2k})$ is:
\begin{align*}
  d_{LP}(\eta_1, \eta_2) := \inf \{\eps > 0 : \eta_1(U) \leq \eta_2(U^{\eps})+ \eps 
  \wedge \eta_2(U) \leq \eta_1(U^{\eps}) + \eps, \forall U \in \mathcal{B}_{2k}\},
\end{align*} 
where $\mathcal{B}_{2k}$ is the Borel $\sigma$-algebra generated from open subsets of  $\R^{2k}$ and $U^\eps := \{ y : \exists x \in  \mathbb{R}^{2k}\|x - y\|_2 < \eps\}$.

\subsection{Assumptions}
As mentioned in the main text, we will work with the following slightly less restrictive set of Assumptions.
\begin{subassumption}{assumption}
        \begin{assumption}[Maps constant pieces to constant pieces with high probability]\label{assumption:constant_to_constant_whp}
      We say that an operator $A: \mathcal{F} \to \mathcal{F}$ \emph{maps constant pieces to constant pieces whp} at resolutions in $\mathcal{N} \subset \mathbb{N}$ if there exists a set $E \subset [0,1]$ with Lebesgue measure $\lambda(E) < \inf_{n \in \mathcal{N}} 1 / n$, such that for any $n \in \mathcal{N}$, and for any $f \in \mathcal{F}_{[-1,1]}$ that is a.e. constant on each interval  $(u - 1 / n, u]$ for $u \in [n] / n$, $Af$ is constant on $(u - 1 / n, u] \backslash E$ and  $\|Af\mathbbm{1}_E\|_1 < \inf_{n \in \mathcal{N}}\frac{1}{n}$. 
    \end{assumption}
    \begin{assumption}[Maps Lipschitz functions to Lipschitz functions with high probability]\label{assumption:lipschitz_to_lipschitz_whp}
      We say that an operator $A: \mathcal{F} \to \mathcal{F}$ \emph{maps Lipschitz functions to Lipschitz functions whp} at resolutions in $\mathcal{N} \subset \mathbb{N}$ if there exists a set $E \subset [0,1]$ with $\lambda(E) < \inf_{n \in \mathcal{N}} 1 / n$, such that for any $n \in \mathcal{N}$, and for any $f \in \mathcal{F}_{[-1,1]}$, $Af$ is  $C_v$ Lipschitz on  $[0,1] \backslash E$ and $\|Af\mathbbm{1}_E\|_1 < \inf_{n \in \mathcal{N}}\frac{1}{n}$. 
    \end{assumption}
\end{subassumption}
\begin{subassumption}{assumption}
    \begin{assumption}[Maps constant pieces to Lipschitz pieces]\label{assumption:constant_to_lipschitz}
      We say that an operator $A: \mathcal{F} \to \mathcal{F}$ \emph{maps constant pieces to Lipschitz pieces} at resolutions in $\mathcal{N} \subset \mathbb{N}$ and constant $C$ if for any $n \in \mathcal{N}$, and for any $f \in \mathcal{F}_{[-1,1]}$ that is a.e. constant on each interval  $(u - 1 / n, u]$ for $u \in [n] / n$, we have that $Af$ is $C$-Lipschitz on each $(u - 1 / n, u]$, for all $u \in  [n] / n$.
    \end{assumption}
    \begin{assumption}[Maps constant pieces to Lipschitz pieces with high probability]\label{assumption:constant_to_lipschitz_whp}
      We say that an operator $A: \mathcal{F} \to \mathcal{F}$ \emph{maps constant pieces to Lipschitz pieces whp} at resolutions in $\mathcal{N} \subset \mathbb{N}$ and constant $C$ if for any $n \in \mathcal{N}$, there exists a set $E \subset [0,1]$ with  $\lambda(E) < \frac{1}{n}$ such that for any $f \in \mathcal{F}_{[-1,1]}$ that is a.e. constant on each interval  $(u - 1 / n, u]$ for $u \in [n] / n$, it holds that $Af$ is $C$-Lipschitz on each $(u - 1 / n, u] \backslash E$, for all $u \in  [n] / n$ and $\|Af\mathbbm{1}_E\|_1 < \inf_{n \in \mathcal{N}}\frac{1}{n}$.
    \end{assumption}
  \end{subassumption}

%
%

\section{Omitted proofs from Section \ref{section:graphopnn}}\label{appendix:proof_from_section_graphopnn}

%
%
%

\begin{proof}[Proof of Lemma \ref{lemma:discretization_of_selfadjoint}]
  Fix $m \in \mathbb{N}$ and $f,g \in \mathcal{F}_m$. Since $P$-operators are bounded, to show that they are self-adjoint, it suffices to show that  $\ip{A_m f}{g} = \ip{f}{A_m g}$ where $\ip{\cdot}{\cdot}$ is the usual inner product in the Hilbert space $\mathcal{F}_m$. We have:
  \begin{align}
    \ip{A_m f}{g} &= \sum_{u \in 1 / m[m]} (A_m f)(u) g(u)\\
                  &= \sum_{u \in  1 / m[m]} \int_{u - \frac{1}{m}}^u Af' \D\lambda \cdot g(u)\\
                  &= \sum_{u \in  1 / m[m]} \int_{u - \frac{1}{m}}^u (Af')g' \D\lambda = \int_0^1 (Af')g' \D\lambda\\
                  &= \int_0^1 f' (Ag') \D\lambda\\
                  &= \sum_{u \in 1 / m [m]} \int_{u - \frac{1}{m}}^u f'(Ag') \D \lambda = \ip{f}{Ag},
  \end{align}
  where the first line is the definition of the inner product in $\mathcal{F}_m$, the second line is the definition of the discretization $A_m$ (recall that  for  $f \in \mathcal{F}_m$, $f' \in \mathcal{F}$ is the extension of  $f$ defined as $f'(x) = f(\left\lceil xm \right\rceil  / m)$), the third line is because $g'$ is constant on fixed  $[u - 1 / m, u]$ intervals for each  $u$ and the fourth line is because $A$ is self-adjoint.
\end{proof}

\section{Theory of graphings}\label{appendix:graphings}
In this subsection, we highlight definitions and key characteristics of graphings so that the paper is self-contained. A much more in-depth discussion can be found in \cite{lovasz_graph_limit}.

\begin{definition}[Borel graphs]
  Let $\Omega$ be a topological space and $(\Omega, \mathcal{F})$ be the corresponding Borel space. A \emph{Borel graph} is a graph $(\Omega, E)$ such that $E \in \mathcal{F} \times \mathcal{F}$.
\end{definition}

The following proposition asserts that bounded-degree graphs without automorphisms are always Borel:
\begin{lemma}[Proposition 18.6 from \cite{lovasz_graph_limit}]
  If $G$ is a bounded-degree graph without automorphisms then there exists a topology $\tau$ on  $V(G)$ (called the \emph{local topology}) such that $G$ is Borel with respect to the Borel space built from $\tau$. 
\end{lemma}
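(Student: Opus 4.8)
The statement to prove is: if $G$ is a bounded-degree graph with no nontrivial automorphisms, then there is a topology $\tau$ on $V(G)$ making $G$ a Borel graph. The plan is to build the topology from the ``rooted neighborhood'' data that each vertex carries, exploiting rigidity (no automorphisms) to guarantee that vertices are separated. First I would fix a degree bound $D$ and, for each vertex $v \in V(G)$ and each radius $r \in \mathbb{N}$, consider the isomorphism type of the rooted ball $B_r(v)$ (the induced subgraph on vertices at graph-distance at most $r$ from $v$, rooted at $v$). Since degrees are bounded by $D$, each $B_r(v)$ has at most $1 + D + \cdots + D^r$ vertices, so there are only countably many possible rooted isomorphism types across all $r$; enumerate them and thereby assign to each $v$ a point $\Theta(v)$ in the countable product (or disjoint union over $r$) of these finite type-sets. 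The map $\Theta: V(G) \to \prod_r T_r$ into a product of countable discrete sets is what I would use to pull back a topology: declare $U \subseteq V(G)$ open iff $U = \Theta^{-1}(W)$ for some open $W$ in the product topology. Equivalently, a basic open set around $v$ is $\{w : B_r(w) \cong B_r(v) \text{ rooted}\}$ for each $r$.

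Next I would check that $\Theta$ is injective, which is exactly where the no-automorphism hypothesis enters. If $\Theta(u) = \Theta(v)$ with $u \ne v$, then for every $r$ the rooted balls $B_r(u)$ and $B_r(v)$ are isomorphic; a standard compactness / König's-lemma argument on the tree of finite partial isomorphisms (valid because each ball is finite, degrees bounded) produces a single isomorphism of the whole connected component sending $u \mapsto v$, hence a nontrivial automorphism of $G$ (extend by the identity on other components, or note $u,v$ lie in the same component) — contradiction. So $\Theta$ is injective, the topology $\tau$ is Hausdorff (in fact it makes $V(G)$ homeomorphic to a subspace of a countable product of countable discrete spaces, hence second countable and metrizable), and it is this metrizable structure that I would carry forward. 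Let $(\,V(G),\mathcal{F}\,)$ be the Borel $\sigma$-algebra generated by $\tau$.

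Finally I would verify $E \in \mathcal{F} \times \mathcal{F}$, i.e.\ that the edge relation is Borel in the product. The key observation is that adjacency is \emph{detected by the rooted $2$-ball data}: whether $u \sim v$ can in principle be read off once we know the rooted balls $B_r(u), B_r(v)$ for large enough $r$ together with how they overlap — but more cleanly, I would write $E$ as a countable union over ``local patterns''. For each vertex type $t \in T_1$ and each way a neighbor sits inside a rooted $1$-ball, the set of pairs $(u,v)$ realizing that pattern is open-times-open (or a countable Boolean combination thereof), because membership is determined by the finitely many coordinates $\Theta(u)_r$, $\Theta(v)_r$ for $r$ up to a fixed value. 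Summing the countably many patterns gives $E$ as a Borel (indeed $F_\sigma$) subset of $V(G) \times V(G)$. The main obstacle I anticipate is making the last step rigorous: adjacency is a \emph{relation between two roots}, and the product topology on $V(G)\times V(G)$ only sees the two vertices' intrinsic type-data, not a priori ``which'' vertex of $u$'s ball is $v$. I would resolve this by noting that, because $G$ is rigid, the isomorphism type $\Theta(u)$ already pins down $u$ uniquely, so ``$v$ is the $i$-th neighbor of $u$ in the canonical labeling of $B_r(u)$'' becomes a well-defined, and Borel, condition on the pair $(\Theta(u), \Theta(v))$; carefully checking that this labeling can be made Borel-measurable in $u$ (using rigidity to break ties canonically, e.g.\ via a fixed well-ordering of the countable type-set) is the technical heart of the argument, and is exactly the content of Lovász's Proposition 18.6 that I would cite or reproduce in detail.
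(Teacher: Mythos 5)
The paper does not prove this lemma at all --- it is quoted verbatim as Proposition 18.6 of \cite{lovasz_graph_limit} --- so your attempt can only be measured against the standard argument. Your setup is the right one: the type map $\Theta$ sending $v$ to the sequence of rooted $r$-ball isomorphism types, the pulled-back local topology, and injectivity of $\Theta$ via a K\"onig's-lemma limit of ball isomorphisms combined with rigidity are all correct (one small repair: if $u\neq v$ lie in \emph{different} components, the limiting isomorphism is between two distinct components, and the nontrivial automorphism is obtained by swapping these components via the isomorphism and its inverse, not by ``extending by the identity'').

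The genuine gap is the Borelness of $E$, and neither mechanism you offer closes it. First, adjacency is \emph{not} determined by finitely many coordinates of $(\Theta(u),\Theta(v))$: the one-way infinite ray $0-1-2-\cdots$ is rigid with degree bound $2$, and for every radius $r$ and every $n>r$ the adjacent pair $(n,n+1)$ and the non-adjacent pair $(n,n+2)$ have identical $r$-ball type data in both coordinates; hence $E$ is not open in the product topology and cannot be a countable union of finite-coordinate cylinder sets, so the ``local patterns give an $F_\sigma$ set'' argument collapses. Second, rigidity of $G$ does not yield a canonical labeling of the finite balls $B_r(u)$ (finite balls can have many automorphisms even when $G$ has none), and injectivity of $\Theta$ only makes the indicator of adjacency a \emph{well-defined} function of $(\Theta(u),\Theta(v))$; well-definedness alone gives no measurability, and ending by saying you would ``cite or reproduce Lov\'asz'' concedes that the decisive step is absent. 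The standard completion runs instead through doubly rooted graphs: let $X$ be the compact metric space of isomorphism classes of connected rooted graphs with degrees at most $D$ (local topology) and $X_2$ the analogous compact space of classes with an ordered pair of adjacent roots; the map $X_2\to X\times X$ forgetting one root at a time is continuous, so its image $R$ (pairs of rooted classes admitting a common realization with adjacent roots) is compact, hence closed. Writing $\Phi(v)$ for the class of the rooted component of $v$ (equivalent to your $\Theta$ by the same K\"onig argument), rigidity gives not only injectivity of $\Phi$ but the key identity $E=(\Phi\times\Phi)^{-1}(R)$: an isomorphism of components carrying $u$ to a neighbour of $v$ must be the identity when the components coincide (else it extends to a nontrivial automorphism) and cannot exist between distinct components (else swapping them gives one). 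Thus $E$ is closed in the product topology, hence lies in $\mathcal{F}\times\mathcal{F}$; this is the argument your sketch needs but does not contain.
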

When the graph does have automorphisms, one can break the symmetries by coloring the nodes with some set of colors (the fact that the graph has bounded degree means that one only needs finitely many colors).

We next introduce the main object of interest:
\begin{definition}[Graphings]
  A \emph{graphing} is a quadruple $G = (\Omega, \mathcal{F}, \lambda, E)$ such that $\mathcal{F}$ is a Borel $\sigma$-algebra that makes  $(\Omega, E)$ a Borel graph, and  $\lambda$ is a probability measure on  $(\Omega, \mathcal{F})$ satisfying: for any $A, B \in \mathcal{F}$,
  \begin{equation}
    \int_A \text{deg}_B(x) \D \lambda(x) = \int_B \text{deg}_A (x) \D \lambda(x),
  \end{equation}
  where $\text{deg}_A(x)$ counts the number of neighbors in  $A$ of  $x$.
\end{definition}

As an example, we now describe paths and cycles in terms of graphings: let $V$ be  $[0,1]$, $\mathcal{F}$ the Borel $\sigma$-algebra generated by open intervals with rational endpoints and for each  $x \in [0,1]$, put $(x, x \pm a)$ in $E$ if $x \pm a \in [0,1]$ for some real number $a < 1$. For $a < \frac{1}{2}$, each connected component of the graphing is a finite path. If we consider the edge set $(x, x \pm a \mod 1)$ for rational  $a$, then it is not hard to see that  connected components of the graphing are finite cycles. If $a$ is irrational, then  the resulting graphing is a two-way infinite path, i.e., a path with no ``beginning'' and no ``end''. A formal argument, with an appropriate metric on the space of graphings, can be made to show that the limit of cycles and paths coincides to be the two-way infinite path.  

Graphings are not unique in representing certain graphs. There are weak equivalences between pairs of graphings that are formalized through the notion of local isomorphisms.
\begin{definition}[Local isomorphisms of graphings]
  Let $G_1$,  $G_2$ be two graphings. A measure-preserving map $\varphi: V(G_1) \to V(G_2)$ is a \emph{local isomorphism} if its restriction to almost every connected component of $G_1$ (outside of a set of connected component of measure $0$) is a graph isomorphism with one of the connect components of  $G_2$. More formally,
  \begin{equation}
    \Pr_{\lambda(G_1)}((G_1)_x \equiv (G_2)_{\varphi(x)}) = 1,
  \end{equation}
  where $\equiv$ is rooted graph isomorphism and  $(G_1)_x$ is the connected component of  $G_1$ rooted at  $x$.
\end{definition}

Note that local isomorphism of graphings are not symmetric and the map $\varphi$ needs not be invertible. A stronger notion of equivalence, which is symmetric and transitive is local equivalence:
\begin{definition}[Local equivalence (informal)]\label{def:local-equiv}
  $G_1$ and  $G_2$ are locally equivalent if they have the same subgraph densities $t^*(F,G_1) = t^*(F,G_2)$ for every connected simple graph $F$. 
\end{definition}
The above definition is informal since we have not defined subgraph densities (which is done via the Benjamini-Schramm interpretation of graphings). We state Definition~\ref{def:local-equiv} for readers who are familiar with dense graph convergence of graphons since this is how such convergences are defined. In fact, we can conveniently bypass formally defining local equivalence by the following characterization:
\begin{lemma}[Bi-local isomorphism (Theorem 18.59 in \cite{lovasz_graph_limit})]\label{lemma:bilocal_isomorphism}
  Two graphings are locally equivalent iff there is a third graphing with a local isomorphism to each of them - a property called \emph{bi-local isomorphism}.
\end{lemma}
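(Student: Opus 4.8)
The plan is to handle the two implications separately. The reverse implication (bi-local isomorphism $\Rightarrow$ local equivalence) is short; the forward implication is the substantive one -- it is Theorem~18.59 of \citet{lovasz_graph_limit} -- so I would present it as a proof sketch following that argument, phrased in Benjamini--Schramm language.

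\textbf{Bi-local isomorphism $\Rightarrow$ local equivalence.} Suppose a graphing $G_3$ admits local isomorphisms $\varphi_i : V(G_3) \to V(G_i)$ for $i = 1, 2$. By definition each $\varphi_i$ is measure-preserving and, for $\lambda(G_3)$-almost every $x$, restricts to a rooted isomorphism $(G_3)_x \equiv (G_i)_{\varphi_i(x)}$. Hence the pushforward of $\lambda(G_3)$ under $\varphi_i$ is $\lambda(G_i)$, and the distribution of the random rooted component is left unchanged, so $t^\ast(F, G_3) = t^\ast(F, G_i)$ for every connected simple graph $F$. Therefore $t^\ast(F, G_1) = t^\ast(F, G_3) = t^\ast(F, G_2)$ for all $F$, i.e.\ $G_1$ and $G_2$ are locally equivalent in the sense of Definition~\ref{def:local-equiv}.

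\textbf{Local equivalence $\Rightarrow$ bi-local isomorphism.} Let $\rho$ be the common law of the random rooted connected component of $G_1$ and $G_2$ (this is exactly the data encoded by $(t^\ast(F, \cdot))_F$). First realize $\rho$ as a graphing $U_\rho$: on the space $\mathcal{G}_\ast$ of isomorphism classes of rooted connected graphs of the ambient degree bound, put the measure $\rho$ and the root-relocation edges $(\Gamma, o) \sim (\Gamma, o')$ whenever $o \sim o'$ in $\Gamma$; measurability and the symmetry breaking needed for components with nontrivial automorphisms come from the local topology and the colored version of Proposition~18.6 of \citet{lovasz_graph_limit}. Then $x \mapsto \big((G_i)_x, x\big)$ is a local isomorphism $G_i \to U_\rho$, being measure-preserving (its pushforward is $\rho$ by definition) and a rooted graph isomorphism on a.e.\ component. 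Next build the fiber product $G_3 := G_1 \times_{U_\rho} G_2$: disintegrate $\lambda(G_1)$ and $\lambda(G_2)$ over $\rho$ along these canonical maps and couple them fiberwise (e.g.\ conditionally independently given the common rooted component), obtaining a probability measure $\nu$ on $V(G_1)\times V(G_2)$ concentrated on pairs $(x,y)$ whose rooted components are identified in $U_\rho$; declare $(x,y) \sim (x',y')$ exactly when $x \sim x'$ in $G_1$ and the matched $y \sim y'$ in $G_2$. One checks $\big(V(G_1)\times V(G_2), \nu, \text{edges}\big)$ is again a graphing, the only nontrivial point being the mass-transport identity $\int_A \DEG_B\,\D\nu = \int_B \DEG_A\,\D\nu$, which is inherited from the same identities for $G_1$ and $G_2$ and the product structure of $\nu$. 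Finally the coordinate projections $\pi_i : G_3 \to G_i$ are measure-preserving by construction and rooted isomorphisms on a.e.\ component (the component of $(x,y)$ is an isomorphic copy of the common component), hence local isomorphisms, exhibiting the required third graphing.

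\textbf{Main obstacle.} The delicate part lies entirely in the forward direction: making the disintegration over $\rho$ and the fiberwise coupling rigorous -- in particular choosing the identifying rooted isomorphisms measurably, which is where the automorphism/coloring refinement of Proposition~18.6 is essential -- and then verifying that $\nu$ together with the matched edge set still satisfies the graphing (mass-transport) axiom. Once $G_3$ is constructed, checking that the two projections are local isomorphisms is routine bookkeeping.
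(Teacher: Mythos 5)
The paper does not prove this lemma --- it is quoted directly from Theorem~18.59 of \citet{lovasz_graph_limit}, so there is no internal proof to compare against. Judging your attempt on its own terms: the reverse implication is correct and essentially complete; a local isomorphism pushes forward the vertex measure, is a rooted isomorphism on a.e.\ component, hence preserves the Benjamini--Schramm law and so every $t^*(F,\cdot)$. The forward implication is where the theorem actually lives, and your sketch has the right shape (realize the common Benjamini--Schramm law as a graphing $U_\rho$, then couple $G_1$ and $G_2$ over it), but it elides exactly the content of the theorem. Two concrete gaps: (i) the unlabeled $U_\rho$ on isomorphism classes of rooted connected graphs is not automatically a graphing once components carry nontrivial automorphisms --- the measurable symmetry-breaking (Bernoulli decoration, or the coloring refinement of Proposition~18.6 you allude to) is the substance of the argument, not a remark to be absorbed into one clause; and (ii) ``disintegrate over $\rho$ and couple fiberwise conditionally independently'' is underdetermined, since a local isomorphism is generally not injective, its fibers are countable transversals inside components, and one must exhibit a measurable, edge-compatible identification of the paired components, which is again precisely where automorphisms bite. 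You honestly flag these as the main obstacle, but as written the forward direction is a plan rather than a proof. Since the paper treats this as a background citation, a citation is the right treatment there; a self-contained version would require carrying out the decoration and the coupling in full following Lov\'asz.
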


\section{Omitted proofs from Section \ref{section:discretization}}\label{appendix:proofs_from_section_discretization}
\subsection{Proof of Lemma \ref{lemma:constant_to_constant_graphs}}

We prove each bullet point in  Lemma \ref{lemma:constant_to_constant_graphs} separately, in the next three lemmas. 

In the following lemma, we show that graphings corresponding to infinite paths and high dimensional grids satisfy the assumptions in the main results. Similar to how isomorphic graphs represent the same graph, we only need to specify a locally isomorphic graphing that represent the equivalence class of graphings containing the infinite path and high dimensional grids.
\begin{lemma}[Well-behaved GSOs - Graphings]\label{lemma:constant_to_constant_graphs_graphings}
  Let $G$ be a graphing corresponding to the Cayley graph of $\mathbb{Z}$ (two-way infinite paths) or high-dimensional generalizations (infinite 2D and 3D grids). For a graphing $H$, let  $A(H)$ be its adjacency operator. If  $H$ is regular, let  $\text{deg}(H)$ be the degree of any of its vertices. For each $N \in \mathbb{N}$, there exists locally equivalent graphings:
  \begin{enumerate}
    \item $G'_N$ such that $A(G'_N)$ satisfies Assumption \ref{assumption:constant_to_constant} with resolution set $d(N) = \{x \in \mathbb{N} : x = \alpha N \text{ for some }\alpha \in \mathbb{N}\}$. 
    \item $G''_N$ such that $A(G''_N)$ satisfies Assumption \ref{assumption:constant_to_constant_whp} with resolution set $[N]$.
    \item $G'''_N$ such that $A(G'''_N) / \text{deg}(G)$ satisfies Assumption \ref{assumption:lipschitz_to_lipschitz_whp} with resolution set $[N]$. 
  \end{enumerate}
\end{lemma}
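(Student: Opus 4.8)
\textbf{Proof plan for Lemma \ref{lemma:constant_to_constant_graphs_graphings}.}

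The plan is to realize each of the three graphings explicitly as a graphing on $[0,1]$ (or on a product of copies of $[0,1]$ for the higher-dimensional grids) with the Lebesgue measure, chosen so that its connected components are finite paths/cycles of a prescribed combinatorial length, and then check the relevant regularity assumption directly from the edge structure. For the one-dimensional case, recall from Appendix~\ref{appendix:graphings} that the Cayley graph of $\mathbb{Z}$ is the local limit of finite cycles, and that a cycle of length $q$ is represented by the graphing on $[0,1]$ with edge set $(x, x \pm \tfrac{1}{q} \bmod 1)$. So for part (1), I would take $G'_N$ to be the graphing whose edges are $(x, x \pm \tfrac{1}{N} \bmod 1)$ — a disjoint union of $N$-cycles that is locally equivalent to $G$ (same rooted neighborhood statistics: every vertex sees a two-way infinite path locally once we take limits, and finite $N$-cycles are locally equivalent to each other up to the $\bmod 1$ identification exactly as discussed after Definition~\ref{def:local-equiv}). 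For the grid cases I would take the analogous product construction on $[0,1]^d$ with shifts by $\tfrac{1}{N}$ in each coordinate.

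The key step for part (1) is then the verification of Assumption~\ref{assumption:constant_to_constant}: fix $n \in d(N)$, so $n = \alpha N$, and let $f \in \mathcal{F}_{[-1,1]}$ be a.e.\ constant on each interval $(u - \tfrac1n, u]$, $u \in [n]/n$. Since $A(G'_N) f(x) = f(x + \tfrac1N) + f(x - \tfrac1N)$ (mod $1$), and since $\tfrac1N = \alpha \cdot \tfrac1n$ is an integer multiple of the mesh size $\tfrac1n$, shifting by $\pm\tfrac1N$ maps each mesh interval $(u-\tfrac1n, u]$ onto another mesh interval; hence $x \mapsto f(x \pm \tfrac1N)$ is again constant on each mesh interval, and so is the sum $A(G'_N)f$. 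The same divisibility argument works coordinatewise for the grids. This is the cleanest of the three and I expect it to go through with only the bookkeeping of the $\bmod 1$ wraparound.

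For parts (2) and (3) the difficulty is that one now wants the resolution set to be all of $[N]$, not just multiples of $N$, so the shift $\pm\tfrac1N$ is no longer aligned with an arbitrary mesh of size $\tfrac1n$ for $n \in [N]$. The plan is to absorb the misalignment into the small exceptional set $E$ of Assumptions~\ref{assumption:constant_to_constant_whp}/\ref{assumption:lipschitz_to_lipschitz_whp}. Concretely, for a mesh of size $\tfrac1n$ with $n \le N$, the set of points $x$ for which $x$ and $x \pm \tfrac1N$ lie in different mesh cells, or for which the $\bmod 1$ wraparound is triggered, is a union of at most $O(n)$ intervals each of length at most $\tfrac1N$, so it has measure $O(n/N) \le O(1/\cdot)$ — here I would actually need to pick the representative graphing $G''_N$, $G'''_N$ with a slightly finer shift, e.g.\ shift by $\tfrac{1}{N!}$ or by $\tfrac{1}{\mathrm{lcm}(1,\dots,N)}$, so that the exceptional set for every $n \in [N]$ simultaneously has measure below $\inf_{n \in [N]} \tfrac1n = \tfrac1N$; off that set $A(G''_N)f$ is still piecewise constant (resp.\ $A(G'''_N)f/\deg(G)$ is an average of $C_v$-Lipschitz — indeed constant — shifted copies, hence $C_v$-Lipschitz), and on $E$ the bound $\|A f \mathbbm 1_E\|_1 \le \|f\|_\infty \cdot \|A\|_{\infty\to 1}\cdot(\text{something}) $ is controlled since $f$ is bounded by $1$, $A$ has bounded degree, and $\lambda(E)$ is tiny. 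The main obstacle is getting the quantitative measure bound on $E$ to come out strictly below $\inf_{n\in[N]}\tfrac1n$ with a clean choice of shift parameter and confirming that the normalization by $\deg(G)$ in part~(3) indeed turns the adjacency action into a genuine averaging operator so that the Lipschitz-to-Lipschitz (in fact constant-to-constant, which is stronger) property holds off $E$; once the right representative graphing is chosen, each verification is a short direct computation.
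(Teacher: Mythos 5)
Your overall strategy — realize each graphing explicitly as an interval graphing on $[0,1]$ whose connected components are one-dimensional paths/cycles and verify the assumptions from the edge structure — is in the same spirit as the paper's proof, but there is a genuine gap in your choice of representative graphings that breaks the local-equivalence requirement, which is half of what the lemma claims.

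For part (1) you take $G'_N$ to be the graphing with edges $(x, x \pm 1/N \bmod 1)$. Since $1/N$ is rational, every connected component of this graphing is a finite $N$-cycle. But then $t^*(C_N, G'_N) > 0$ while $t^*(C_N, G) = 0$ for the two-way infinite path $G$, so by Definition~\ref{def:local-equiv} the two graphings are \emph{not} locally equivalent. Finite cycles converging to the infinite path as $N\to\infty$ (Benjamini--Schramm) does not help here: the lemma requires, for each fixed $N$, a graphing that is locally equivalent to $G$. The same defect carries over to your parts (2)--(3), where you contemplate shifts $1/N!$ or $1/\mathrm{lcm}(1,\dots,N)$, both rational and hence again producing cycles. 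Your parenthetical ``(in fact constant-to-constant, which is stronger)'' for part (3) is also off: Assumptions~\ref{assumption:constant_to_constant} and~\ref{assumption:lipschitz_to_lipschitz} have different input classes (piecewise-constant vs.\ Lipschitz) and neither implies the other.

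The paper avoids this by never using a rational shift. For part (1) it takes $G'_N$ to be $N$ disjoint copies of $G$, each scaled into a subinterval $[j-1/N, j)$ with an \emph{irrational} shift $a/N$, so that every connected component is still a two-way infinite path; local equivalence is then proved via bi-local isomorphism (Lemma~\ref{lemma:bilocal_isomorphism}) through a third graphing $G'_{2N}$. The constant-to-constant property holds at resolutions dividing $N$ (the proof uses divisors $D \mid N$; the appendix statement's ``$x = \alpha N$'' appears to be a typo for the main text's ``$x \mid N$''), because a vertex's neighbors never leave its $1/N$-piece and a function constant on $1/D$-pieces is a fortiori constant on each contained $1/N$-piece. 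For parts (2) and (3) the paper keeps the shift irrational by perturbing $\tfrac{1}{4N^2}$ resp.\ $\tfrac{1}{4N}$ by a tiny irrational $\delta(N)$, and takes the exceptional set $E$ to be small neighborhoods of the mesh boundaries, which has measure below $1/N$ essentially for free because the shift itself is $O(1/N^2)$ or $O(1/N)$. If you replace your rational shifts with this kind of irrationally perturbed shift, your verification sketch does go through; but as written, the local-equivalence half of all three parts fails.
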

\begin{proof}[Proof of Lemma \ref{lemma:constant_to_constant_graphs_graphings}]
  We will first show the results for two-way infinite paths. Higher dimensional versions follow almost verbatim. Fix $a \in \R \backslash \mathbb{Q}$ irrational. Recall from \cite{lovasz_graph_limit} that the graphing $G = ([0,1],\mathcal{F},\lambda,E)$ where $\mathcal{F}$ is the Borel $\sigma$-algebra generated by open intervals with rational endpoints, $\lambda$ is some probability measure on  $([0,1],\mathcal{F})$ and $E \in \mathcal{F} \times \mathcal{F}$ is defined as:
  \begin{equation}
    E := \{(x, x \pm a \mod 1) \mid x \in [0,1]\}.
  \end{equation}

  That $G$ is a graphing and each of its connected components is a copy of the Cayley graph of $\mathbb{Z}$ generated by $\{-1,1\}$ is asserted in \cite{lovasz_graph_limit}. 
  \begin{enumerate}
    \item Fix $N \in \mathbb{N}$, the goal is to define a graphing $G'_N$ that is locally equivalent to  $G$ such that  $A(G'_N)$ satisfies Assumption \ref{assumption:constant_to_constant} with resolution set  $d(N)$ - the divisor set of  $N$.

      \paragraph{Defining $G'_N$.} Let  $G'_N := ([0,1], \mathcal{F}, \lambda, E'_N)$ where,
      \begin{equation}
        E'_N := \left\{\left(x_j, j - \frac{1}{N} + \left(x_j \pm \frac{a}{N} \mod \frac{1}{N}\right)\right) : j \in [N] / N, x_j \in \left[j - \frac{1}{N}, j\right)\right\}.
      \end{equation}      

      Intuitively, $G'_N$ consists of $N$ disjoint copies of $G$ shrunk to the space  $[0, 1 /N)$. Since $E \subset [0,1] \times [0,1]$,  $G'_N$ is a graphing in the same Borel space as  $G$. 

      \paragraph{$G'_N$ is locally equivalent to  $G$.} From Lemma \ref{lemma:bilocal_isomorphism} it suffices to display a local isomorphism from a third graphing to each of them. Let $G'_{2N}$, defined similarly as $G'_N$, be the third graphing. We claim that $\varphi_1: x \mapsto  2x  \mod 1$ is a local isomorphism from $G'_{2N}$ to $G'_N$ and  $\varphi_2: x \mapsto 2Nx \mod 1$ is a local isomorphism from $G'_{2N}$ to  $G$. Intuitively, $G'_{2N}$ contains $2N$ copies of  $G$ while  $G'_N$ contains  $N$ copies of  $G$. Thus, our local isomorphisms only need to make sure that a connected component in one copy of $G$ in $G'_{N}$ is mapped bijectively to a connected component in another copy of $G$ in $G'_{2N}$. We give a rigorous argument below.

      For  $x$ picked randomly according to  $\lambda$, let  $j \in [2N] / (2N)$ be such that  $x \in [j - 1 / (2N), j)$. By definition of $G'_{2N}$, the connected component  $(G'_{2N})_x$ consists of vertices of the form 
      $$v(k) = j - \frac{1}{2N} + \left( x \pm ak / (2N) \mod 1 / (2N) \right),$$
    for some $k \in \mathbb{Z}$ and there is an edge from $v(k)$ to  $v(k \pm 1)$. Consider  $\varphi_1(x) = 2x \mod 1$ that sends  $v(k)$ to  
      $$\varphi_1(v(k)) := \left(2j - \frac{1}{N} + \left( 2x \pm \frac{ak}{N} \mod \frac{1}{N} \right)\right) \mod 1. $$

      Now consider the connected component of $\varphi_1(x) = 2x \mod 1$ in  $G'_N$. Since $x \in [j - \frac{1}{2N}, j), \varphi_1(x) \in [j' - 1 /N, j')$ where $[N] / N \ni j' := 2j \mod \frac{1}{N}$. Thus, the connected component of $(G'_{N})_{\varphi_1(x)}$ consists of vertices of the form:
      \begin{align}
        v'(k) &:= j' - \frac{1}{N} + \left( \varphi_1(x) \pm ak / N \mod 1 / N \right) \\
              &= \left(2j \mod \frac{1}{N}\right) - \frac{1}{N} + \left( (2x \mod 1) \pm \frac{ak}{N} \mod \frac{1}{N} \right).
      \end{align}

      With some modulo arithmetic manipulation, it is not hard to see that $\varphi_1(v(k)) = v'(k)$ for all $k \in \mathbb{Z}$. By definition of $G'_N$, there is an edge  $(v'(k), v'(k \pm 1))$. Therefore, there is an edge  $(\varphi_1(v(k)), \varphi_1(v(k \pm 1)))$. If $\lambda$ is the uniform measure then we can ignore vertices at the endpoints of our intervals ( points in $[N] / N$ ) and conclude that $(G'_{2N})_x \equiv (G'_{N})_{\varphi_1(x)}$ with probability $1$ when  $x \sim \lambda$.

      Now consider $\varphi_2(x) = 2Nx \mod 1$ that sends  $v(k)$ to:
       \begin{equation}
        \varphi_2(v(k)) := \left(2Nj - 1 + \left( 2Nx \pm ak \mod 1 \right)\right) \mod 1.
      \end{equation}

      The connected component of $\phi_2(x)$ in  $G$ is in  $[0,1]$  and consists of vertices of the form:
      \begin{align}
        v''(k) &:= (2Nj - 1 \mod 1) + (2Nx \pm ak \mod 1). 
      \end{align}
      Thus $v''(k) = \varphi_2(v(k))$ for each  $k \in \mathbb{Z}$ and the definition of $G$ implies that  $(G'_{2N})_x \equiv G_{\varphi_2(x)}$ since both are isomorphic to the two-way infinite path. If  $\lambda$ is the uniform measure then the above hold for a.e.  $x \in [0,1]$. 

      \paragraph{$G'_N$ satisfies Assumption \ref{assumption:constant_to_constant} with resolution set  $d(N)$.} Let $D$ be a divisor of  $N$, then $D \alpha = N$ for some  $\alpha \in \mathbb{N}$. The intuition is rather straightforward, since $D$ is a divisor of  $N$, the partition into  $N$ equal intervals of $[0,1]$ is simply a finer partition into  $D$ equal intervals. By construction, a vertex $x$ of $G'_N$  only has neighbors in the $1 / N$ interval containing it. Therefore,  if $f$ is constant on each of the  $D$ pieces, it is also constant on each of the $N$ pieces and  $f(y) = f(x) = f(x') = f(y')$ for any $(x,y), (x',y')$ neighbors such that  $x$ and  $x'$ come from the same  $1 / D$ pieces. 

      Here is a more formal argument. Fix $f \in \mathcal{F}_{[-1,1]}$ that is a.e. constant on each interval $(u - 1 / d, u]$ for  $u \in [D] / D$. We need to show that $A(G'_N)f$ is a.e. constant on the same pieces where  $A(H)$ is the adjacency operator of a regular graphing  $H$. Fix $d \in [D]$, fix $x,x' \in [d - 1 / D, d)$. Let $j$ be such that  $x \in [ j - 1 / N, j)$ and $j'$ be such that  $x' \in  [j' - 1 / N, j')$. Since $D \alpha = N$, we have  $[j - 1 / N, j), [j' - 1 / N, j') \subseteq [d - 1 /D, d)$. Furthermore, by definition of $G'_N$, all neighbors of  $x$ and  $x'$ are in  $[j - 1 / N, j)$ and  $[j' - 1 / N, j')$ respectively, and thus in  $[d - 1 / D, d)$. Therefore,  $A(G'_N)f(x) = 2 f(x) = 2 f(x') = A(G'_N)f(x')$ for all  $x, x' \in [d - 1 / D, d)$, which means that $A(G'_N)f$ is constant on each of the pieces  $[d - 1 / D, d)$ for each $ d \in  [D]$.

    \item 
      Fix $N \in \mathcal{N}$, the goal is to define a graphing $G''_N$ that is locally equivalent to  $G$ such that  $A(G''_N)$ satisfies Assumption \ref{assumption:constant_to_constant_whp} with resolution set $[N]$. 

      Since $\mathbb{Q}$ is dense in $\mathbb{R}$, there exists a number  $\delta(N) < \frac{1}{4N^2}$ such that $\frac{1}{4N^2} + \delta(N)$ is an irrational number. Let  $G''_N = ([0,1], \mathcal{F}, \lambda, E''_N)$ where,
       \begin{align}
         E''_N := \left\{\left( x, x \pm \left(\frac{1}{4N^2} + \delta(N)\right)\mod 1 \right)  : x \in [0,1]\right\}.
      \end{align}

      That $G''_N$ is locally equivalence to  $G$ is easily seen via the ambiguity of selecting  $a$ when defining  $G$. Now we show that  $A(G''_N)$ satisfies Assumption \ref{assumption:constant_to_constant_whp} with resolution set  $[N]$. Pick $M \leq N$  and $f \in \mathcal{F}_{[-1,1]}$ that is a.e. constant on each pieces $(m -  1 / M, m]$ for each  $m \in  [M] / M$. Let 
      $$E = \bigcup_{m \in [M] / M} \left( m - \frac{1}{4N^2} -  \delta(N), m + \frac{1}{4N^2} + \delta(N) \right] $$
      then 
        $$\lambda(E) = \sum_{m' = 1}^M 2 (1 / (4N^2) +  \delta(N)) \leq \frac{M}{N^2} < \frac{1}{N}.$$

        Pick $x,x' \in (m - 1 / M , m] \backslash E$ then $x, x' \in (m - 1 / M + \eps, m - \eps]$ where $\eps = 1 / (4N^2) + \delta(N)$. Thus we have both $x \pm \eps$ and $x' \pm \eps$ are in  $(m - 1 / M, m]$. By definition of  $G''_N$, all neighbors of  $x$ and  $x'$ are in the same  $1 / M$ piece as  $x$ and  $x'$. Therefore, if  $f \in \mathcal{F}_{[-1,1]}$ are constant on these pieces, so is $Af$. 

    \item 
      Fix $N \in \mathcal{N}$, the goal is define a graphing $G'''_N$ that is locally equivalent to  $G$ such that  $A(G'''_N)$ satisfies Assumption \ref{assumption:lipschitz_to_lipschitz_whp} with resolution set  $[N]$.

      Since  $\mathbb{Q}$ is dense in  $\mathbb{R}$, there exists a number  $\delta(N) < 1 / (4N)$ such that  $1 / (4N) + \delta(N)$ is an irrational number. Let $G'''_N = ([0,1], \mathcal{F}, \lambda, E'''_N)$ where 
       \begin{equation}
         E'''_N := \left\{\left(x, x \pm \left(\frac{1}{4N} + \delta(N)\right) \mod 1\right) : x \in [0,1]\right\}
      \end{equation}

      That $G'''_N$ is locally equivalent to  $G$ is easily seen via the ambiguity of selecting $a$ when defining $G$. Now we show that  $A(G'''_N)$ satisfies Assumption \ref{assumption:lipschitz_to_lipschitz_whp} with resolution set  $[N]$. Pick  $M \leq N$ and  $f \in \mathcal{F}_{\text{reg}(C_v)}$ that is a.e. $C_v$-Lipschitz on $[0,1]$. Set $\eps = \frac{1}{4N} + \delta(N)$ and let:
      \begin{equation}
        E = [0, \eps) \cup [1 - \eps, 1).
      \end{equation}
      Then $\lambda(E) = 2 \eps < \frac{1}{N}$. Pick  $x,x' \in [0,1] \backslash E$, then $x \pm \eps$ and  $x' \pm \eps$ do not `loop over' in the interval  $[0,1]$ ( $y \mod 1 = y$, for  $y \in \{x,x'\} + \{\pm \eps\}$ ). Thus we have:
      \begin{align}
        \left|Af(x) - Af(x')\right| &= \frac{\left|f(x + \eps) + f(x - \eps) - f(x' + \eps) - f(x' - \eps)\right|}{2}\\
                                    &\leq \frac{|f(x+\eps) - f(x' +\eps)| + |f(x - \eps) - f(x' - \eps)|}{2}\\
                                    &\leq C_v |x - x'|,
      \end{align}
where in the last line we use Lipschitz property of $f$. This finishes the proof.
  \end{enumerate}  
\end{proof}

\begin{lemma}[Well-behaved operators - Graphons]\label{lemma:well_behaved_graphons}
    Let $W: [0,1]^2 \to [0,1]$ be a Lipschitz graphon, with Lipschitz constant $C$. In other words, $W$ has finite $L^2$ norm and is Lipschitz in both variables. Then the Hilbert-Schmidt integral operator $H$ that defines the adjacency operator of $W$ satisfies Assumption \ref{assumption:lipschitz_to_lipschitz} at any resolution when applied to graph(on) signal $f$ with $L^1$ norm $1$.
\end{lemma}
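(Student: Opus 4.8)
The plan is to read off the Lipschitz bound directly from the kernel $W$ via Hölder's inequality, using crucially that the admissible signals have $L^1$ norm at most one (which holds automatically for any $f \in \mathcal{F}_{[-1,1]}$ on $[0,1]$, since then $\|f\|_1 \le \|f\|_\infty \le 1$). Note that the resolution $n$ never enters the argument: Assumption~\ref{assumption:lipschitz_to_lipschitz} quantifies over $n$ but its conclusion does not reference the partition into $1/n$-intervals, so it suffices to establish a single, $n$-free Lipschitz estimate on $Hf$.

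First I would fix $x,x' \in [0,1]$ and an admissible signal $f$, and compute, using linearity of the integral operator $Hf(x) = \int_0^1 W(x,y)f(y)\,\D y$ and the Lipschitz continuity of $W$ in its first argument:
\begin{align}
  |Hf(x) - Hf(x')| &= \left| \int_0^1 \bigl( W(x,y) - W(x',y) \bigr) f(y)\,\D y \right|\\
  &\le \int_0^1 \bigl| W(x,y) - W(x',y) \bigr|\, |f(y)|\,\D y\\
  &\le C\,|x - x'| \int_0^1 |f(y)|\,\D y = C\,|x-x'|\,\|f\|_1 \le C\,|x - x'|.
\end{align}
Thus $Hf$ --- more precisely its continuous representative, which exists since it is given by integration against a continuous kernel --- is $C$-Lipschitz; taking $C = C_v$ (as in the statement) this is exactly the conclusion of Assumption~\ref{assumption:lipschitz_to_lipschitz}, valid for every resolution, i.e., with resolution set $\mathcal{N} = \mathbb{N}$. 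As a free byproduct, setting $x = x'$ in the same estimate gives $\|Hf\|_\infty \le \|f\|_1 \le 1$, so $Hf \in \mathcal{F}_{[-1,1]}$ as well, matching what the downstream profile and network arguments expect.

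There is no substantial obstacle here; the only points requiring care are bookkeeping ones. One should work with the continuous representative of $Hf$ so that ``Lipschitz'' has pointwise meaning (not merely a.e.), and one should observe that the output Lipschitz constant is inherited entirely from the \emph{kernel} $W$ and is independent of how regular the input $f$ is --- so it is enough that $W$'s Lipschitz constant is $\le C_v$. Symmetry of the graphon is not used; only Lipschitz continuity in the variable over which $x$ ranges matters.
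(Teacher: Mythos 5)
Your proof is correct and follows essentially the same approach as the paper's: pull the difference inside the integral, apply the triangle inequality, use Lipschitzness of $W$ in the first slot, and absorb $\int|f|$ using the $L^1$-norm bound. Your added remarks (that $\|f\|_1\le 1$ follows from $f\in\mathcal F_{[-1,1]}$, that one should use the continuous representative, and the byproduct bound $\|Hf\|_\infty\le 1$) are harmless elaborations; you also quietly fix a small slip in the paper, which writes $\|f\|_{L^2}=1$ at the start of the proof but then uses the $L^1$ normalization in the final step.
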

\begin{proof}
    The proof follows from definition. Take $f \in L^2([0,1])$ such that $\|f\|_{L^2} = 1$, then:
    \begin{align}
        |Hf(x) -  Hf(y)| &= \left| \int_{0}^1 W(x,z) f(z) \D z - \int_{0}^1 W(y,z) f(z) \D z\right| \\
        &\leq  \int_{0}^1 \left|W(x,z) - W(y,z))\right| \cdot |f(z)| \D z\\
        &\leq \int_{0}^1 C|x - y| \cdot |f(z)| \D z = C|x - y|,
    \end{align}
    where the first line is definition of the Hilbert-Schmidt operator $H$, the second line is triangle inequality and the last line is due to Lipschitzness of $W$ and $L^1$ norm of $f$
\end{proof}

\begin{lemma}[Well-behaved operators - General graphs] \label{lemma:well_behaved_general}
Let $G$ be a (potentially countably infinite) graph with a coloring $C: V(G) \to [N]$ for some $N$ such that for each vertex $u,v$ with the same color, the multisets of their neighbors' colors $\{C(u') : (u',u) \in E\}$ are the same. Additionally, assume that the cardinality of vertices of each color is the same: that there is a bijection from $\{v : C(v) = c\}$ to $\{v : C(v) = c;\}$ for any colors $c,c'$. Then its adjacency operator satisfies Assumption \ref{assumption:constant_to_constant} with resolution $N$. 
\end{lemma}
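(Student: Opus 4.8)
The plan is to realize the adjacency operator of $G$ as a $P$-operator on $\mathcal{F}=L^2([0,1])$ in which the $N$ colour classes occupy exactly the $N$ equal subintervals of the resolution-$N$ partition, and then to observe that the ``equitable partition'' hypothesis forces the operator to send colour-class-constant signals to colour-class-constant signals. For Step~1 (realization), I would use the stated bijection between colour classes to identify $V(G)$ with $[N]\times\mathcal{I}$, where $\mathcal{I}$ is a fixed common index set (finite of size $|V(G)|/N$ when $G$ is finite, countable otherwise) and the first coordinate records the colour $C$; equip $\mathcal{I}$ with a probability measure $\mu$ (uniform when finite) and $[N]$ with the uniform measure, and push the product measure forward to $[0,1]$ so that colour $c$ occupies $I_c:=((c-1)/N,\,c/N]$. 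Under this identification (and using, when $G$ is infinite, the graphing/local-topology framework of Appendix~\ref{appendix:graphings} to make the Borel structure precise) the adjacency operator $A:=A(G)$, defined by $Af(v)=\sum_{u:(u,v)\in E}f(u)$ up to the chosen normalization, becomes a self-adjoint operator $\mathcal{F}\to\mathcal{F}$; it is a $P$-operator since adjacency is symmetric and the graphs of interest (finite graphs, Hamming graphs) are locally finite, so the relevant operator norms are finite. Note the intervals $(u-1/N,u]$, $u\in[N]/N$, appearing in Assumption~\ref{assumption:constant_to_constant} are precisely $I_1,\dots,I_N$.

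For Step~2 (the constancy argument), take $f\in\mathcal{F}_{[-1,1]}$ that is a.e.\ constant on each $I_c$, say $f\equiv\alpha_c$ on $I_c$. Then for any vertex $v$ with $C(v)=c$,
\[
Af(v)\;=\;\sum_{u:(u,v)\in E} f(u)\;=\;\sum_{u:(u,v)\in E}\alpha_{C(u)}\;=\;\sum_{i=1}^{N} m_{c,i}\,\alpha_i,
\]
where $m_{c,i}$ is the number of neighbours of $v$ of colour $i$. By hypothesis the multiset $\{C(u):(u,v)\in E\}$ depends only on $C(v)$, so $m_{c,i}$ does not depend on which representative $v$ of colour $c$ is chosen; hence $Af$ takes the single constant value $\sum_i m_{c,i}\alpha_i$ on all of $I_c$, for every $c\in[N]$. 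Carrying this same computation through whichever normalization is used for the realized operator only introduces a global scalar and does not affect constancy. Thus $Af$ is a.e.\ constant on each $(u-1/N,u]$, which is exactly the conclusion of Assumption~\ref{assumption:constant_to_constant} with $\mathcal{N}=\{N\}$.

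The combinatorial heart of the argument (Step~2) is immediate once the setup is in place; the work, and the main obstacle, is entirely in Step~1 --- making the realization of a possibly countably infinite graph as a genuine $P$-operator on $L^2([0,1])$ rigorous (choosing the Borel/local-topology structure in the graphing sense when the colour classes are infinite, and checking $\|A\|_{2\to2}<\infty$), and verifying that the equal-cardinality hypothesis is precisely what makes the colour classes line up with the equal subintervals of the resolution-$N$ partition. For the finite graphs and Hamming graphs used in Lemma~\ref{lemma:constant_to_constant_graphs} this collapses to the standard step-function realization of a finite graph, and $N$ divides the vertex count automatically because all colour classes have the same size, so no further care is needed there.
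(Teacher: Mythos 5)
Your proposal is correct and takes essentially the same approach as the paper: map the colour classes to the $N$ equal subintervals of $[0,1]$ (possible because the classes are equinumerous), then use the equitable-partition hypothesis to show that $Af$ on $I_c$ evaluates to the same weighted sum $\sum_i m_{c,i}\alpha_i$ independent of the representative chosen. The paper's proof is a touch more informal about the infinite case (``drop the countable requirement by invoking an appropriate integral definition''), whereas you explicitly flag the graphing/Borel realization as the technical step needing care, but the substance is identical.
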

\begin{proof}
Given that the cardinality of vertices of each color is the same, it is straightforward to map (via a Lebsesgue-measure preserving bijection) vertices of $V$ into equipartition of $[0,1]$ into $N$ pieces $I_1,..., I_N$ such that each partition contains vertices of the same color and vertices from different partitions will have different colors. Let $A$ be the normalized adjacency operator of $G$ and $f$ be a function with finite $L^2$ norm such that $f$ is constant on each $I_j$ for each $j$ from $1$ to $N$. The goal is to show that $Af$ is also constant on these pieces.

We show this by direct computation. Pick a vertex $x$ and another vertex $y$ from the same piece, say $I_k$ for some particular $k$. By our construction, $x$ and $y$ have the same color since they come from the same piece. By our assumption, the multisets of their neighbors' colors is the same. However, since $f$ is constant on each pieces, we have $f(z) = f(u)$ for $u,v$ of the same color. Therefore, the multisets $\{f(z) : (z,x) \in E\}$ and $\{f(z) : (z,y) \in E\}$ is exactly the same. Taking the appropriate countable sum over each multiset thus result in the same number, i.e. $Af(x) = Af(y)$, which finishes the proof. 
\end{proof}
 Note that we can drop the countable requirement of the previous proof by invoking an appropriate integral definition. Now we show that the hypercubes - an intermediate graph that is neither dense nor bounded degree, satisfies the requirements of Lemma \ref{lemma:well_behaved_general}. 

\begin{lemma}[Well-behaved operators - Hypercubes] \label{lemma:well_behaved_hypercubes}
Let $C$ be a hypercube of dimension $N$. Then the normalized adjacency matrix of $C$ satisfies assumption \ref{assumption:constant_to_constant} with resolutions $2^{[n]}$ for each $n < N$.
\end{lemma}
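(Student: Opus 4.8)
The plan is to reduce the statement about the $N$-dimensional hypercube to an application of Lemma \ref{lemma:well_behaved_general}. Recall that the vertex set of the hypercube $C$ is $\{0,1\}^N$ and two vertices are adjacent iff they differ in exactly one coordinate. For a fixed $n < N$, I would exhibit a coloring $C: \{0,1\}^N \to [2^n]$ witnessing the hypothesis of Lemma \ref{lemma:well_behaved_general} with $2^n$ colors, so that the resolution $2^n$ is achieved; ranging over $n \in [N]$ gives the resolution set $2^{[n]}$ (more precisely $\{2^n : n \in [N], n < N\}$, or all $n\le N$ with the obvious convention).

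The key step is the choice of coloring: color a vertex $v = (v_1,\ldots,v_N) \in \{0,1\}^N$ by the tuple of its first $n$ coordinates $(v_1,\ldots,v_n) \in \{0,1\}^n$, which I identify with an element of $[2^n]$. I then need to check the two requirements of Lemma \ref{lemma:well_behaved_general}. First, \emph{equicardinality of color classes}: each color class $\{v : (v_1,\ldots,v_n) = c\}$ has exactly $2^{N-n}$ elements, independent of $c$, so the color classes all have the same size and a bijection between any two of them exists. Second, \emph{color-regularity of neighborhoods}: fix two vertices $u,v$ with the same color, i.e.\ $u_i = v_i$ for $i \le n$. A neighbor of $u$ is obtained by flipping one coordinate $j \in [N]$. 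If $j > n$, the flip does not change the first $n$ coordinates, so that neighbor has the same color as $u$; there are $N-n$ such neighbors, all of color $c$. If $j \le n$, flipping coordinate $j$ changes the color to $c$ with its $j$-th bit toggled; there is exactly one such neighbor for each $j \le n$, giving colors $c \oplus e_j$ for $j \in [n]$. The same description holds verbatim for $v$ since it depends only on the first $n$ coordinates of $u$ (which equal those of $v$) and on $N$ and $n$. Hence the multiset of neighbors' colors of $u$ equals that of $v$: it is $\{c^{\times (N-n)}\} \cup \{c\oplus e_1, \ldots, c\oplus e_n\}$. Applying Lemma \ref{lemma:well_behaved_general} then shows the normalized adjacency operator of $C$ satisfies Assumption \ref{assumption:constant_to_constant} at resolution $2^n$, and since this works for every $n < N$ (and trivially for $n = N$ as well), we obtain resolutions $2^{[n]}$ as claimed.

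I expect no serious obstacle here; the only mild subtlety is bookkeeping in the definition of the coloring and in verifying equicardinality, which requires the hypercube's product structure, and in making sure the indexing $2^{[n]}$ matches the cases $n < N$ versus $n = N$ consistently with the statement of Lemma \ref{lemma:well_behaved_general} (which wants resolution equal to the number of colors $N$ there, i.e.\ $2^n$ here). Writing $c \oplus e_j$ for the color whose $j$-th bit is flipped keeps the neighbor-color multiset computation transparent. One should also note that since the hypercube is $N$-regular, ``adjacency operator'' here means the $\frac1N$-normalized one, matching the normalization used throughout Lemma \ref{lemma:well_behaved_general} and the surrounding lemmas; this normalization does not affect the constancy-on-pieces property, only scales the output.
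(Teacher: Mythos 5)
Your proposal is correct and follows essentially the same route as the paper: color vertices by their first $n$ binary coordinates, observe that any vertex of color $c$ has exactly one neighbor of color $c\oplus e_j$ for each $j\le n$ and $N-n$ neighbors of color $c$, and invoke Lemma \ref{lemma:well_behaved_general}. The only cosmetic difference is that the paper realizes the coloring via an explicit binary-expansion embedding of (copies of) the hypercube into $[0,1]$, whereas you verify equicardinality of the color classes directly and let Lemma \ref{lemma:well_behaved_general} supply the embedding; the combinatorial core is identical.
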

\begin{proof}
    We first display a mapping of the vertices in the hypercube over the interval $[0,1]$ such that Assumption \ref{assumption:constant_to_constant} will be shown to hold. Recall that a hypercube vertices can be represented by a binary string of $N$ numbers. Here, similarly, we will associate a vertex $v$ of a hypercube with a number in $[0,1]$ by adding a $0.$ in front of its binary representation. For example, $0.110101$ (as a binary number) is a vertex correspond to the string $110101$ when $N = 6$. We call this representation mapping $f: V \to [0,1]$. Two vertices $u,v$ are connected by an edge iff $f(v)$ and $f(v)$ differs by exactly one digit in their binary representation. 

    To get a hypercube representation over the interval $[0,1]$, we simply take disconnected copies of (uncountably) infinitely many hypercubes described above. To be more precise, for a number $x \in [0,1]$, let $A(x)$ be the $N$-letter binary string such that $0.A$ (as a string) is the trunction of $x$ to the $N$-th digit after the binary point. Then, $x$ is connected to $y \in [0,1]$ iff $A(x)$ and $A(y)$ differs in exactly one digit and $x - A(x) = y - A(y)$. For example, for $N = 4$, we connect $0.110101$ with $0.010101$ , $0.100101$, $0.111101$ and $0.110001$. Notice how the first four digits after the dots differ from the original number at exactly one letter, and the last few digits always stay the same. In this example, $A(x) = 0.1101$ for $x = 0.110101$.

    With this vertex representation in mind, we verify the conditions of Lemma \ref{lemma:well_behaved_general}. Let $V$. Fix $n < N$ and divide $[0,1]$ into $2^n$ equipartitions and let vertices in the same partition enjoy the same color. We will also number the partition/color by the binary representation of their left hand endpoint. For example, the first interval is labelled $00..0$ ($n$ digits), the second interval is labelled $00..01$ ($n$ digits). Select a vertex $x$ from a partition with label $a_1a_2\ldots a_n$. Then, by the definition of our hypercube representation, the neighbors of $x$ have colors:
    \begin{itemize}
        \item Exactly $1$ neighbor with color $(a_1 + 1 \mod 1) a_2 \ldots a_n$.
        \item Exactly $1$ neighbor with color $a_1(a_2+1 \mod 1)a_3\ldots a_n$.
        \item \ldots
        \item Exactly $1$ neighbor with color $a_1a_2 \ldots a_{n-1}(a_n + 1 \mod 1)$.
        \item Remaining $N - n$ neighbors all have color $a_1a_2\ldots a_n$.
    \end{itemize}

    Therefore, the multiset of neighbors' colors of $x$ only depends on the fact that $x$ comes from a partition with label $a_1 \ldots a_n$ and thus, two vertices of the same color has the same multiset of neighbor's colors. Since the condition of Lemma \ref{lemma:well_behaved_general} holds, we have the conclusion for this particular resolution $n$. Since $n$ was chosen arbitrarily, the result holds for every $n < N$.
\end{proof}
In future work, we formalizes the way in which the above proof holds for limiting object of hypercube.

\subsection{Proof of Theorem \ref{theorem:approximation_via_discretization}}
In this section, we prove a slightly more general version of Theorem \ref{theorem:approximation_via_discretization}.
\begin{theorem}[General approximation theorem]
  Let $A: \mathcal{F} \to \mathcal{F}$ be a $P$-operator satisfying Assumption \ref{assumption:lipschitz_map} with constant $C_A$; Assumption \ref{assumption:constant_to_lipschitz} with constant $C_c$ and resolutions in $\mathcal{N}$. Fix $n \in \mathcal{N}$ and consider $(k, C_v)$-profiles. Let $A_n: \mathcal{F}_n \to \mathcal{F}_n$ be a discretization of  $A$ as defined in Equation \eqref{eqn:discretization}. Then:
  \begin{equation}
    d_M\left(A, A_n\right) \leq 8\left(\sqrt{\frac{C_A C_v}{n}} + \frac{C_v + C_c}{n}\right).
  \end{equation}

  If instead of Assumption \ref{assumption:constant_to_lipschitz}, $A$ satisfies Assumption \ref{assumption:constant_to_lipschitz_whp} with constant  $C_c$ and resolution set  $\mathcal{N}$, then:
  \begin{equation}
    d_M\left(A, A_n\right) \leq 8\left(\sqrt{\frac{C_A C_v + 1}{n}} + \frac{C_v + C_c + 1}{n}\right).
  \end{equation}

  If instead of Assumption \ref{assumption:constant_to_lipschitz} and \ref{assumption:constant_to_lipschitz_whp}, $A$ satisfies Assumption \ref{assumption:lipschitz_to_lipschitz_whp} with resolution set  $\mathcal{N}$ then:
  \begin{equation}
    d_M\left(A, A_n\right) \leq 8\left(\sqrt{\frac{C_A C_v + 1}{n}} + \frac{C_v + 1}{n}\right).  
  \end{equation}
\end{theorem}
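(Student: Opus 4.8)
The plan is to bound $d_M(A,A_n)=\sum_{k\ge 1}2^{-k}\,d_H(\mathcal{S}_{k,C_v}(A),\mathcal{S}_{k,C_v}(A_n))$ term by term, producing for each $k$ a bound on the Hausdorff distance of the two $(k,C_v)$-profiles that grows only like $k^{1/4}+\sqrt k$, so that the geometric weights $2^{-k}$ collapse the series to an absolute constant. Fix $k$ and look first at $\sup_{\overline\eta\in\mathcal{S}_{k,C_v}(A)}\inf_{\overline\eta_n\in\mathcal{S}_{k,C_v}(A_n)}d_{LP}(\overline\eta,\overline\eta_n)$. An arbitrary $\overline\eta$ is, by definition, the entry distribution $\mathcal{D}_A(F)$ of a tuple $F=(f_1,\dots,f_k)$ with each $f_i\in L^\infty_{\mathrm{reg}(C_v)}([0,1])$. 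I would discretize each $f_i$ by interval averaging, $\widehat f_i(v):=n\int_{v-1/n}^{v}f_i\,\D\lambda$ for $v\in[n]/n$; since averaging preserves both the range $[-1,1]$ and the $C_v$-Lipschitz bound, the tuple $\widehat F=(\widehat f_1,\dots,\widehat f_k)$ is admissible, so $\overline\eta_n:=\mathcal{D}_{A_n}(\widehat F)\in\mathcal{S}_{k,C_v}(A_n)$. I then couple $\overline\eta$ and $\overline\eta_n$ through a shared $x\sim\mathrm{Unif}[0,1]$ and its image $v:=\lceil xn\rceil/n$ (whose law is exactly the uniform probability measure on $[n]/n$ underlying the profile of $A_n$), so that the coupled $2k$-vectors are $F_A(x)=(f_i(x))_i\oplus(Af_i(x))_i$ and $(\widehat F)_{A_n}(v)=(\widehat f_i(v))_i\oplus(A_n\widehat f_i(v))_i$.

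The core estimate is a decomposition of the coupled distance. Writing $\widetilde{\,\cdot\,}$ for the piecewise-constant extension from $[n]/n$ to $[0,1]$ and using the triangle inequality in $\ell^2(\mathbb{R}^{2k})$,
\[
\big\|F_A(x)-(\widehat F)_{A_n}(v)\big\|_2\ \le\ \Big(\textstyle\sum_i |f_i(x)-\widehat f_i(v)|^2\Big)^{1/2}+\Big(\textstyle\sum_i \big|A\widetilde{\widehat f_i}(x)-A_n\widehat f_i(v)\big|^2\Big)^{1/2}+\Big(\textstyle\sum_i \big|Af_i(x)-A\widetilde{\widehat f_i}(x)\big|^2\Big)^{1/2}.
\]
The first term is at most $\sqrt k\,C_v/n$ for \emph{every} $x$, since $\widehat f_i(v)$ is the average of the $C_v$-Lipschitz $f_i$ over the length-$1/n$ interval containing $x$. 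In the second term, the ``discretization defect of the image'', $A_n\widehat f_i(v)$ is the average of $A\widetilde{\widehat f_i}$ over that same interval, and since $\widetilde{\widehat f_i}$ is constant on each $(u-1/n,u]$, the regularity hypothesis in force --- Assumption \ref{assumption:constant_to_lipschitz}, or its whp variant, or Assumption \ref{assumption:lipschitz_to_lipschitz_whp} (each applies to this piecewise-constant input) --- makes $A\widetilde{\widehat f_i}$ Lipschitz on each piece, with some constant $C_c$ (namely $0$, $C_c$, or $C_v$ according to the case), off an exceptional set $E$ with $\lambda(E)<1/n$; hence $A\widetilde{\widehat f_i}$ differs from its interval average by at most $C_c/n$ outside $E$, and the contribution of $E$ is absorbed using $\lambda(E)<1/n$ together with $\|A\widetilde{\widehat f_i}\mathbbm{1}_E\|_1<1/n$. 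Only the third term is genuinely random: by Assumption \ref{assumption:lipschitz_map} and $\|f_i-\widetilde{\widehat f_i}\|_2\le C_v/n$ one gets $\|Af_i-A\widetilde{\widehat f_i}\|_2\le C_AC_v/n$, so its mean over $x$ is at most $\sqrt k\,C_AC_v/n$.

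It remains to convert these $L^2$ (and, on $E$, $L^1$) estimates into a Lévy--Prokhorov bound. I would use the standard coupling bound $d_{LP}(\mathrm{law}(X),\mathrm{law}(Y))\le\inf\{\varepsilon>0:\Pr(\|X-Y\|_2>\varepsilon)\le\varepsilon\}$, apply Markov's inequality to the single random term, and choose the threshold $s=k^{1/4}\sqrt{C_AC_v/n}$ so that its failure probability equals $s$; discarding in addition the probability-$<1/n$ event $\{x\in E\}$ yields $d_{LP}(\overline\eta,\overline\eta_n)\le \sqrt k\,(C_v+C_c)/n+k^{1/4}\sqrt{C_AC_v/n}$, with an additional $O(1/n)$ in the additive part and an additional $+1$ inside the square root in the whp cases, both stemming from the exceptional-set bookkeeping. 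The other direction $\sup_{\overline\eta_n}\inf_{\overline\eta}$ is symmetric: I would extend a discrete witness $(g_i)$ of a measure in $\mathcal{S}_{k,C_v}(A_n)$ to its piecewise-linear interpolation $f_i\in L^\infty_{\mathrm{reg}(C_v)}([0,1])$ (whose piecewise-constant companion $\widetilde{g_i}$ satisfies $\|f_i-\widetilde{g_i}\|_\infty\le C_v/n$) and run the same three-term estimate, again invoking the regularity hypothesis only on the piecewise-constant $\widetilde{g_i}$. Summing $\sum_k 2^{-k}$ of the resulting bound on $d_H(\mathcal{S}_{k,C_v}(A),\mathcal{S}_{k,C_v}(A_n))$, with $\sum_k 2^{-k}k^{1/4}$ and $\sum_k 2^{-k}\sqrt k$ absolute constants, and tracking constants through the exceptional-set handling and the two directions, gives the stated factor $8$ in the three displayed inequalities according to which regularity hypothesis holds.

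The hard part is this last conversion. Assumption \ref{assumption:lipschitz_map} only controls $Af_i-A\widetilde{\widehat f_i}$ in $L^2$ and provides no pointwise handle on $Af_i$ (unlike the spectral decomposition exploited for graphons by \citet{ruiz_transferability}, which is not available for sparse limits), so the passage to weak convergence of the entry distributions has to route through a Markov/Chebyshev step, and the optimal balance ``failure probability $=$ threshold'' inevitably converts an $O(1/n)$ $L^2$ gap into an $O(n^{-1/2})$ Prokhorov gap --- exactly the source of the extra $n^{-1/2}$ term compared with \citet{ruiz_transferability}. A secondary technical burden is the whp assumptions: one must show that a set $E$ of measure $<1/n$ on which $A\widetilde{\widehat f_i}$ may be unbounded but has $L^1$-mass $<1/n$ still contributes only $O(1/n)$, by combining ``discard a small-probability event'' with ``Markov on the $L^1$-mass of the overflow part''; and throughout one must keep verifying that each constructed discrete or continuous witness still lies in $L^\infty_{\mathrm{reg}(C_v)}$, so that it is a legitimate element of the corresponding profile.
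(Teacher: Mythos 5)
Your proposal follows essentially the same route as the paper: discretize the continuous witnesses $f_i$ by interval averaging to get admissible discrete witnesses, couple the two entry distributions through a shared uniform $x\in[0,1]$ and its quantization $v=\lceil xn\rceil/n$, split the coupled $\ell^2$ gap by the triangle inequality into an input-discrepancy term, an intra-interval oscillation term of the image, and an $L^2$-controlled mismatch term, handle the first two deterministically (Lipschitzness of $f_i$; piecewise regularity of $Af$), bound the third in expectation via Assumption \ref{assumption:lipschitz_map} and then apply Markov, fold in the exceptional set $E$ for the whp variants, treat the reverse Hausdorff direction by piecewise-linear interpolation, and sum the geometric series in $k$. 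Using the coupling formulation of the L\'evy--Prokhorov distance rather than the paper's direct $\overline\eta(U)\le\overline\eta_n(U^\eps)+\eps$ manipulation is a standard equivalent, and the slightly different Markov bookkeeping (balance at $k^{1/4}$ rather than the paper's per-coordinate $k^{3/4}$) changes constants but not the $n^{-1/2}$ rate or the structure.

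There is one genuine misstep, confined to the third displayed inequality. You apply your single middle-term decomposition uniformly, in particular invoking the regularity hypothesis on the piecewise-constant extension $\widetilde{\widehat f_i}$ in \emph{all three} cases, with the remark that ``each applies to this piecewise-constant input.'' For Assumption \ref{assumption:lipschitz_to_lipschitz_whp} this is not so: the assumption is meant to control $Af$ only for Lipschitz $f$ (the paper's own proof of that case says ``Notice that $f_j$ is Lipschitz in $[0,1]$ and thus by Assumption \ref{assumption:lipschitz_to_lipschitz_whp}, $Af_j$ is $C_v$-Lipschitz$\ldots$''), and a piecewise-constant $\widetilde{\widehat f_i}$ is not in $\mathcal F_{\mathrm{reg}(C_v)}$. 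The literal text of Assumption \ref{assumption:lipschitz_to_lipschitz_whp} does read ``for any $f\in\mathcal F_{[-1,1]}$,'' but that reading would subsume the constant-to-Lipschitz assumptions entirely, so it is almost certainly a typo, and the paper's own usage confirms the Lipschitz-input reading. In that case your middle term is not controlled. The fix is exactly the alternate triangle split the paper uses there:
\begin{equation*}
|Af_j(x) - A_n f'_j(y_x)| \ \le\ n\int_{y_x-\frac1n}^{y_x}\big|Af_j(x) - Af_j(z)\big|\,\D\lambda(z)
\ +\ n\int_{y_x-\frac1n}^{y_x}\big|Af_j(z) - A\tilde f_j(z)\big|\,\D\lambda(z),
\end{equation*}
so the Lipschitz-to-Lipschitz regularity is invoked only on the genuinely Lipschitz $f_j$ in the first summand, while the second summand is the $L^2$-controlled piece that goes through Markov as before. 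For the first two displays (Assumptions \ref{assumption:constant_to_lipschitz} and \ref{assumption:constant_to_lipschitz_whp}) your decomposition is correct, since the piecewise-constant input is exactly what those assumptions act on. A small additional caveat: your parenthetical ``(namely $0$, $C_c$, or $C_v$ according to the case)'' does not line up with the three cases of the general theorem being proved (constant-to-Lipschitz, its whp variant, Lipschitz-to-Lipschitz whp), and suggests conflating the ``constant-to-constant'' special case with the constant-to-Lipschitz hypothesis; that is cosmetic but worth straightening before writing the argument in full.
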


\begin{proof}[Proof of Theorem \ref{theorem:approximation_via_discretization}]
  Fix $k \in \mathbb{N}$. In order to derive an upper bound for $d_M$, we find an upper bound for the Hausdorff distance $d_H$ between the two $k$-profiles of  $A$ and  $A_n$. 

  \paragraph{Bounding $\sup_{\eta \in \mathcal{S}_{k,C_v}(A)} \inf_{\eta_n \in \mathcal{S}_{k,C_v}(A_n)} d_{LP}(\eta, \eta_n)$.} To bound the $\sup \inf$ quantity, we first select an arbitrary  $\overline{\eta} \in \mathcal{S}_{k,C_v}(A)$. From this measure, we will construct a measure $\overline{\eta}_n \in \mathcal{S}_{k,C_v}(A_n)$. If we can upper bound $d_{LP}(\overline{\eta}, \overline{\eta}_n ) < M$ then we have:
  \begin{equation}
    \inf_{\eta_n \in \mathcal{S}_{k,C_v}(A_n)} d_{LP}(\overline{\eta},\eta_n) \leq d_{LP}(\overline{\eta},\overline{\eta}_n) < M, \text{ for all } \overline{\eta} \in \mathcal{S}_{k,C_v}(A).
  \end{equation}
  If further $M$ does not depend on the choice of  $\overline{\eta}$, then we have $$\sup_{\eta \in \mathcal{S}_{k,C_v}(A)} \inf_{\eta_n \in \mathcal{S}_{k,C_v}(A_n)} d_{LP}(\eta, \eta_n) \leq M.$$

  We now proceed with this plan. Fix an arbitrary $\overline{\eta} \in \mathcal{S}_k(A)$, by definition of $k$-profiles, there is a corresponding tuple $F = (f_1, \ldots, f_k)$ with elements in $\mathcal{F}_{\text{reg}(C_v)}$ such that $\mathcal{D}_A(F) = \overline{\eta}$. 

  Form:
  \begin{equation}
    F' := \left\{\mathcal{F}_{n, \text{reg}(C_v)} \ni f'_j: u \mapsto n\int_{u - 1 / n}^u f_j \D \lambda \mid j \in [k] \right\}, \qquad \overline{\eta}_n := \mathcal{D}_{A_n}(F').
  \end{equation}
  That $f'_j \in \mathcal{F}_{n, \text{reg}(C_v)}$ is asserted in Lemma \ref{lemma:restriction_of_bounded_range_is_bounded}. 

  \paragraph{Bounding $d_{LP}(\overline{\eta}, \overline{\eta}_n)$}
  For some $\eps$ to be we want to show that $d_{LP}(\overline{\eta}, \overline{\eta}_n) \leq \eps$, which is equivalent to showing, for any $U \in \mathcal{B}_{2k}$, 
  \begin{equation}
    \overline{\eta}(U) \leq \overline{\eta}_n(U^{\eps}) + \eps \qquad \text{and} \qquad \overline{\eta}_n(U) \leq \overline{\eta}(U^{\eps})+\eps.
  \end{equation}

Fix any $U \in \mathcal{B}_{2k}$, we have 
\begin{align}
  \overline{\eta}(U) = \int_{\R^{2k}} \mathbbm{1}_{U} \D \mathcal{D}_A(F) = \int_0^{1} \mathbbm{1}_{F_A \in U} \D \lambda,
\end{align}
and 
\begin{align}
  \overline{\eta}_n(U^{\eps}) = \int_{\R^{2k}} \mathbbm{1}_{U^{\eps}} \D \mathcal{D}_{A_n}(F') &= \sum_{u \in [n] / n} \frac{1}{n}\mathbbm{1}_{F'_{A_n} \in U^{\eps}}. 
\end{align}

Subtracting both sides yield:
\begin{align}  
\overline{\eta}(U) - \overline{\eta}_n(U^{\eps}) =\sum_{u \in [n] / n} \int_{u - 1 /n }^{u} \mathbbm{1}_{F_A(x) \in U}  -\mathbbm{1}_{F'_{A_n}(u) \in U^{\eps}}\D \lambda(x).\label{eqn:xu_yueps}
\end{align}
 
Similarly, we have:
\begin{align}  
\overline{\eta}_n(U) - \overline{\eta}(U^{\eps})= \sum_{u \in [n] / n} \int_{u - 1 / n}^u \mathbbm{1}_{F'_{A_n}(u) \in U} -\mathbbm{1}_{F_A(x) \in U^{\eps}}  \D \lambda(x)\label{eqn:yu_xueps}
\end{align}

Let $y_x = \left\lceil xn \right\rceil / n$. Recall that $E \subset [0,1]$ defined in Assumption \ref{assumption:constant_to_lipschitz_whp} and \ref{assumption:lipschitz_to_lipschitz_whp} is the set of $x$ where Lipschitzness of the image under $A$ may fail. Let  $E = \emptyset$ if we are using Assumption \ref{assumption:constant_to_lipschitz} alternatively. Define the events:
\begin{align}
  \mathcal{E}^1_U(\eps) &= \{x  : F_A(x) \in U \wedge F'_{A_n}(y_x) \not \in U^{\eps} \}\\
  \mathcal{E}^2_{U}(\eps) &= \{x  : F'_{A_n}(y_x) \in U \wedge F_A \not \in U^{\eps} \}\\
  \mathcal{E}'(\eps) &= \{x : \|F_A(x)  - F'_{A_n}(y_x)\|_2 > \eps\}\\
  \mathcal{E}_j(\eps) &= \{x : |f_j(x) - f'_j(y_x)| > \eps / \sqrt{2k} \}, &j = 1..k\\
\mathcal{E}_{j,A}(\eps) &= \{x : |Af_j(x) - A_nf'_j(y_x)| > \eps / \sqrt{2k} \}, &j = 1..k.
\end{align}

Using this notation, one also has
\begin{align}
  \overline{\eta}(U) - \overline{\eta}_n(U^{\eps}) \leq \lambda(\mathcal{E}^1_U) \qquad \text{and} \qquad \overline{\eta}_n(U) - \overline{\eta}(U^{\eps}) \leq \lambda(\mathcal{E}^2_U).
\end{align}

It is straightforward to see $\mathcal{E}^l_U(\eps) \subseteq \mathcal{E}'(\eps), l = 1,2$ for any $U$ by definition of  $U^\eps$. Furthermore,  
\begin{equation}
  \mathcal{E}'(\eps) \subseteq \bigcup_{j = 1}^k \mathcal{E}_j(\eps) \cup \mathcal{E}_{j,A}(\eps),
\end{equation}
since if all $2k$ dimensions are bounded in absolute value by $\eps / \sqrt{2k}$ then the Euclidean distance of the vector is bounded by $\eps$. 

Therefore it suffices to bound $\lambda(\mathcal{E}_j(\eps)) + \lambda(\mathcal{E}_{j,A}(\eps))$ for each $j \in [k]$.

\paragraph{Bounding $\lambda(\mathcal{E}_j(\eps))$.}
Since $f_j$ is  $C_v$-Lipschitz for all $j$, we have:
\begin{align}
  |f_j(x) - f'_j(y_x)| &= \left|f_j(x) - n \int_{y_x - 1 / n}^{y_x} f_j(z) \D\lambda(z)\right|\\
                     &= n\left|\int_{y_x - 1 / n}^{y_x} f_j(x) - f_j(z) \D \lambda(z)\right|\\
                     &= n \int_{y_x - 1 / n}^{y_x}|f_j(x) - f_j(z)| \D \lambda(z)\\
                     &\leq n \int_{y_x - 1 / n}^{y_x} \frac{C_v}{n} \D\lambda(z) = \frac{C_v}{n},
\end{align}
where the first line is definition of $f'_j$, the second line is because  $\lambda((u - 1 / n, u]) = \frac{1}{n}$, the third line is by triangle inequality and the last line is because of Lipschitzness of $f_j$. 

Thus, choosing $\eps > \sqrt{2k} C_v / n$ means that  $\lambda(\mathcal{E}_{j}(\eps)) = 0$. (We can tighten this bound by only assuming that $f_j$ is  $C_v$-Lipschitz outside a set of small measure.)

\paragraph{Bounding $\lambda(\mathcal{E}_{j,A})$.}
Let $\mathcal{F}_{[-1,1]} \ni \tilde{f}$ be the extension of $f'$ defined as  $\tilde{f}(x) = f'(\left\lceil xn \right\rceil / n)$ for all $x \in [0,1]$. Note that $\tilde{f}$ is not continuous in general and hence not Lipschitz. We have for any $x \in [0,1]$:
\begin{align}
  |Af_j(x) - A_n f'_j(y_x)| &= \left| Af_j(x) - n\int_{y_x - \frac{1}{n}}^{y_x} A\tilde{f}_j(z) \D \lambda(z) \right|\\
                            &\leq n \int_{y_x - \frac{1}{n}}^{y_x} \left|Af_j(x) - A\tilde{f}_j(z)\right| \D \lambda(z),
\end{align}
where we used uniformity of $\lambda$ and triangle inequality. From here, we proceed slightly differently depending on the specific assumptions. 

\paragraph{Proof via Assumption \ref{assumption:constant_to_lipschitz} or \ref{assumption:constant_to_lipschitz_whp}.}
If $A$ satisfies Assumption \ref{assumption:constant_to_lipschitz} or Assumption \ref{assumption:constant_to_lipschitz_whp}, then we have for each $x$:
\begin{align}
  |Af_j(x) - A_n f'_j(y_x)| &\leq n \int_{y_x - \frac{1}{n}}^{y_x} \left|Af_j(x) - A \tilde{f}_j(x)\right| + \left|A \tilde{f}_j(x) - A\tilde{f}_j(z)\right| \D \lambda(z)\\
                            &\leq \left|Af_j(x) - A\tilde{f}_j(x)\right| + n \int_{y_x - \frac{1}{n}}^{y_x}\left|A \tilde{f}_j(x) - A\tilde{f}_j(z)\right| \D \lambda(z).
\end{align}

Define the following events:
\begin{align}
  \mathcal{E}_{j,A}^1(\eps) &:= \left\{x : \left|A f_j(x) - A \tilde{f}_j(x) \right| > \frac{\eps}{2\sqrt{2k}}\right\}\\
  \mathcal{E}_{j,A}^2(\eps) &:= \left\{x \not \in E : n \int_{y_x - 1 / n}^{y_x} \left|A \tilde{f}_j (x) - A \tilde{f}_j(z)\right| \D \lambda(z) > \frac{\eps}{2\sqrt{2k}}\right\} 
\end{align}

Then it is clear that $\mathcal{E}_{j,A} \subseteq \mathcal{E}_{j,A}^1 \cup \mathcal{E}_{j,A}^2 \cup E$ and thus $\lambda(\mathcal{E}_{j,A}) \leq \lambda(\mathcal{E}_{j,A}^1) + \lambda(\mathcal{E}_{j,A}^2) + \lambda(E)$.

\paragraph{Bounding $\lambda(\mathcal{E}^1_{j,A}(\eps))$ via Assumption \ref{assumption:constant_to_lipschitz} or \ref{assumption:constant_to_lipschitz_whp}.}
Because of Assumption \ref{assumption:lipschitz_map}, we have:
$\|A\tilde{f}_j - Af_j\|_2 \leq C_A \|\tilde{f}_j - f_j\|_2$. By $L^{p}$ norms inequality, we have 
\begin{equation}
  \|A\tilde{f}_j(x) - Af_j(x)\|_1 \leq C_A \|\tilde{f}_j - f_j\|_2 \leq \frac{C_A C_v}{n}.
\end{equation}
where the last inequality is due to 
\begin{align}
  \|\tilde{f}_j - f_j\|^{2}_2 &= \int_0^{1} (\tilde{f}_j(x) - f_j(x))^{2} \D \lambda(x) \\
                              &= \sum_{u \in [n] / n} \int_{u - 1 / n}^u (f'_j(u / n) - f_j(x))^{2} \D \lambda(x) \\
                              &= \sum_{u \in  [n] / n} \int_{u - 1 / n}^u \left(n\int_{(u - 1) / n}^{u / n} f_j(z) \D \lambda(z) - f_j(x) \right)^2  \D \lambda(x)\\
                              &\leq \sum_{u \in [n] / n} \int_{u - 1 / n}^u n^2 \left(\int_{u - 1 / n}^u |f_j(z) - f_j(x)| \D \lambda(z)\right)^2 \D \lambda(x)\\
                              &\leq \frac{C_v^2}{n^2}.
\end{align}

 We have:
\begin{align}
  \frac{C_AC_v}{n} &\geq \int_0^1 |A \tilde{f}_j(x) - Af_j(x)| dx \\
                   &= \int_{\mathcal{E}^1_{j,A}(\eps)} |A \tilde{f}_j(x) - Af_j(x)| dx + \int_{[0,1] \backslash \mathcal{E}^1_{j,A}(\eps)}|A\tilde{f}_j(x) - Af_j(x)| dx\\
                   &\geq \frac{\eps}{2\sqrt{2k}} \lambda(\mathcal{E}^1_{j,A}(\eps)) + 0.
\end{align}
Thus selecting $\eps > 2\sqrt{(\sqrt{2} k\sqrt{k} C_A C_v) / n} = 2^{\frac{5}{4}} k^{\frac{3}{4}} (C_AC_v / n)^{\frac{1}{2}}$ gives $\lambda(\overline{\mathcal{E}}_{j,A}( \eps)) \leq \frac{\eps}{2k}$.

\paragraph{Bounding $\lambda(\mathcal{E}^2_{j,A}(\eps))$ via Assumption \ref{assumption:constant_to_lipschitz} or \ref{assumption:constant_to_lipschitz_whp}.}
Notice that $\tilde{f}_j$ is constant in each  $(u - 1 / n, u]$ and thus by Assumption \ref{assumption:constant_to_lipschitz} or \ref{assumption:constant_to_lipschitz_whp}, $A \tilde{f}_j$ is  $C_c$-Lipschitz in each  $(u - 1 / n, u] \backslash E$. Therefore we have, for $x \in [0,1]\backslash E$ and $z \in (y_x - 1 / n, y_x] \backslash E$:
\begin{align}
  |A\tilde{f}_j(x) - A\tilde{f}_j(z)|  &\leq C_c |x - z| \leq \frac{C_c}{n}.
\end{align}

When $z \in E$, we use the second condition in Assumption \ref{assumption:constant_to_lipschitz_whp} to get $\|A\tilde{f}_j \mathbbm{1}_E\|_1 \leq \frac{1}{n}$ and thus for any $x \not \in E$:
\begin{align}
  &n \int_{y_x - 1 / n}^{y_x} \left|A \tilde{f}_j(x) - A \tilde{f}_j(z) \right| \D \lambda(z) \\
  =& n \int_{(y_x - 1 / n, y_x] \cap E} \left|A \tilde{f}_j(x) - A \tilde{f}_j(z) \right| \D \lambda(z) +  n \int_{(y_x - 1 / n, y_x] \backslash E} \left|A \tilde{f}_j(x) - A \tilde{f}_j(z) \right| \D \lambda(z)\\
\leq&   \frac{C_c}{n} + n \int_{(y_x - 1 / n, y_x] \cap E} \left|A \tilde{f}_j(x)\right|+\left|A \tilde{f}_j(z) \right| \D \lambda(z)\\
\leq& \frac{C_c}{n} + |A\tilde{f}_j(x)| n \lambda(E_x) + \|A\tilde{f}_j \mathbbm{1}_{E_x}\|_1\\
\leq& \frac{C_c + 1}{n} + |A\tilde{f}_j(x)| n \lambda(E_x),
\end{align}
where $E_x = (y_x - 1 / n, y_x] \cap E$.

For $x \in \mathcal{E}_{j,A}^2(\eps)$, we have: 
\begin{equation}
  \frac{C_c + 1}{n} + |A\tilde{f}_j(x)| n \lambda(E_x) > \frac{\eps}{2\sqrt{2k}},
\end{equation}
or equivalently,
\begin{equation}
  |A\tilde{f}_j(x)| > \frac{1}{n\lambda(E_x)}\left(\frac{\eps}{2\sqrt{2k}} - \frac{C_c + 1}{n}\right)
\end{equation}

Since $1 / n \geq \|A\tilde{f}_j \mathbbm{1}_E\|_1$, we have:
\begin{align}
  1 &\geq \left(\frac{\eps}{2\sqrt{2k}} - \frac{C_c + 1}{n}\right) \int_{\mathcal{E}^2_{j,A}}\frac{1}{\lambda(E_x)} \D \lambda(x) + 0\\
    &\geq \left(\frac{\eps}{2\sqrt{2k}} - \frac{C_c + 1}{n}\right) \sum_{u \in  [n] / n}\frac{1}{\lambda(E_u)} \int_{\mathcal{E}_{j,A}^2 \cap (u - 1 / n, u]} 1 \D \lambda(x)\\
    &\geq \left(\frac{\eps}{2\sqrt{2k}} - \frac{C_c + 1}{n}\right) n \lambda(\mathcal{E}_{j,A}^2).
\end{align}

Thus choosing $\eps > 4\sqrt{2k} (C_c + 1) / n$ means that:
\begin{align}
  1 \geq \frac{\eps}{4\sqrt{2k}} n \lambda(\mathcal{E}_{j,A}^2),
\end{align}
or
\begin{align}
  \lambda(\mathcal{E}_{j,A}^2) \leq \frac{4\sqrt{2k}}{\eps n}. 
\end{align}

Finally, choosing $\eps > \sqrt{\frac{16k\sqrt{2k}}{n}} = \frac{2^{\frac{9}{4}} k^{\frac{3}{4}}}{n^{\frac{1}{2}}}$ makes $\lambda(\mathcal{E}_{j,A}^2) \leq \frac{\eps}{4k}$; and choosing $\eps > \frac{4k}{n}$ makes  $\lambda(E) < \frac{1}{n} < \frac{\eps}{4k}$.

\paragraph{Putting everything together via Assumption \ref{assumption:constant_to_lipschitz} or \ref{assumption:constant_to_lipschitz_whp}.}
Thus, we can choose $\overline{\eps} = 8k\left(\sqrt{\frac{C_AC_v + 1}{n}} + \frac{C_v+ C_c + 1}{n}\right) $ to get:
\begin{equation}
  \lambda(\mathcal{E}'(\eps)) \leq \sum_{j = 1}^k \frac{\overline{\eps}}{k} = \overline{\eps},
\end{equation}
which allows us to conclude:
\begin{equation}
  d_{LP}(\overline{\eta}, \overline{\eta}_n) \leq \overline{\eps}.
\end{equation}

Since $\overline{\eta}$ was chosen arbitrarily, we have for all $\overline{\eta} \in \mathcal{S}_{k}(A)$, 
\begin{equation}
  \inf_{\eta_n \in \mathcal{S}_{k,C_v}(A_n)} d_{LP}(\overline{\eta}, \eta_n) \leq d_{LP}(\overline{\eta}, \overline{\eta}_n) \leq \overline{\eps}.
\end{equation}
Thus we also have:
\begin{equation}
  \sup_{\eta \in \mathcal{S}_{k,C_v}(A)} \inf_{\eta_n \in \mathcal{S}_{k,C_v}(A_n)} d_{LP}(\eta, \eta_n) \leq \overline{\eps}.
\end{equation}

\paragraph{Proof via Assumption \ref{assumption:lipschitz_to_lipschitz_whp}.} Here, we use the other triangle inequality to get for each $x$:
\begin{align}
  |Af_j(x) - A_n f'_j(y_x)| &\leq n \int_{y_x - \frac{1}{n}}^{y_x} \left|Af_j(x) - Af_j(z)\right| + \left|A f_j(z) - A\tilde{f}_j(z)\right| \D \lambda(z)\\
                            &\leq  n \int_{y_x - \frac{1}{n}}^{y_x} \left|Af_j(x) - Af_j(z)\right| \D \lambda(z) + n \int_{y_x - \frac{1}{n}}^{y_x}\left|A f_j(z) - A\tilde{f}_j(z)\right| \D \lambda(z).
\end{align}

Define the following events:
\begin{align}
  \mathcal{E}_{j,A}^1(\eps) &:= \left\{x : n \int_{y_x - \frac{1}{n}}^{y_x}\left|A f_j(z) - A\tilde{f}_j(z)\right| \D \lambda(z) > \frac{\eps}{2\sqrt{2k}}\right\}\\
  \mathcal{E}_{j,A}^2(\eps) &:= \left\{x \not \in E : n \int_{y_x - \frac{1}{n}}^{y_x} \left|Af_j(x) - Af_j(z)\right| \D \lambda(z) > \frac{\eps}{2\sqrt{2k}}\right\} 
\end{align}

Then it is clear that $\mathcal{E}_{j,A} \subseteq \mathcal{E}_{j,A}^1 \cup \mathcal{E}_{j,A}^2 \cup E$ and thus $\lambda(\mathcal{E}_{j,A}) \leq \lambda(\mathcal{E}_{j,A}^1) + \lambda(\mathcal{E}_{j,A}^2) + \lambda(E)$.

\paragraph{Bounding $\lambda(\mathcal{E}^1_{j,A}(\eps))$ via Assumption \ref{assumption:lipschitz_to_lipschitz_whp}.}
Because of Assumption \ref{assumption:lipschitz_map}, we have:
$\|A\tilde{f}_j - Af_j\|_2 \leq C_A \|\tilde{f}_j - f_j\|_2$. By $L^{p}$ norms inequality, we have 
\begin{equation}
  \|A\tilde{f}_j - Af_j\|_1 \leq C_A \|\tilde{f}_j - f_j\|_2 \leq \frac{C_A C_v}{n}.
\end{equation}
where the last inequality is due to 
\begin{align}
  \|\tilde{f}_j - f_j\|^{2}_2 &= \int_0^{1} (\tilde{f}_j(x) - f_j(x))^{2} \D \lambda(x) \\
                              &= \sum_{u \in [n] / n} \int_{u -1 / n}^{u} (f'_j(u) - f_j(x))^{2} \D \lambda(x) \\
                              &= \sum_{u \in [n] / n} \int_{u - 1 / n}^u \left(n\int_{u - 1 / n}^u f_j(z) \D \lambda(z) - f_j(x) \right)^2  \D \lambda(x)\\
                              &\leq \sum_{u \in [n] / n} \int_{u - 1 / n}^u n^2 \left(\int_{u - 1 / n}^u |f_j(z) - f_j(x)| \D \lambda(z)\right)^2 \D \lambda(x)\\
                              &\leq \frac{C_v^2}{n^2}.
\end{align}

Consider:
\begin{align}
  \lambda(\mathcal{E}^1_{j,A}(\eps)) \cdot \frac{\eps}{2\sqrt{2k}}&\leq\int_0^1 n \int_{y_x - \frac{1}{n}}^{y_x}\left|A f_j(z) - A\tilde{f}_j(z)\right| \D \lambda(z) \D \lambda(x)\\
                                                                  &= \sum_{u \in [n] / n}  \int_{u - 1 / n}^u \left|A f_j(z) - A\tilde{f}_j(z)\right| \D \lambda(z) \\
                                                                  &= \|A f_j - A\tilde{f}_j\|_1 \leq \frac{C_AC_v}{n}.
\end{align}
Thus selecting $\eps > \sqrt{\frac{C_AC_v 4k \sqrt{2k}}{n}} = 2^{\frac{5}{4}} k^{\frac{3}{4}} (C_AC_v / n)^{\frac{1}{2}}$ gives $\lambda(\overline{\mathcal{E}}_{j,A}( \eps)) \leq \frac{\eps}{2k}$.

\paragraph{Bounding $\lambda(\mathcal{E}^2_{j,A}(\eps))$ via Assumption \ref{assumption:lipschitz_to_lipschitz_whp}.}
Notice that $f_j$ is Lipschitz in $[0,1]$ and thus by Assumption \ref{assumption:lipschitz_to_lipschitz_whp}, $Af_j$ is  $C_v$-Lipschitz in $[0,1] \backslash E$. Therefore we have, for $x \in [0,1] \backslash E$ and $z \in (y_x - 1 / n, y_x] \backslash E$ and:
\begin{align}
  |Af_j(x) - Af_j(z)|  &\leq C_v |x - z| \leq \frac{C_v}{n}.
\end{align}

When $z \in E$, we use the second condition in Assumption \ref{assumption:lipschitz_to_lipschitz_whp} to get $\|Af_j \mathbbm{1}_E\|_1 \leq \frac{1}{n}$ and thus for any $x \not \in E$:
\begin{align}
  &n \int_{y_x - 1 / n}^{y_x} \left|A f_j(x) - A f_j(z) \right| \D \lambda(z) \\
  =& n \int_{(y_x - 1 / n, y_x] \cap E} \left|A f_j(x) - A f_j(z) \right| \D \lambda(z) +  n \int_{(y_x - 1 / n, y_x] \backslash E} \left|A f_j(x) - A f_j(z) \right| \D \lambda(z)\\
\leq&   \frac{C_v}{n} + n \int_{(y_x - 1 / n, y_x] \cap E} \left|A f_j(x)\right|+\left|A f_j(z) \right| \D \lambda(z)\\
\leq& \frac{C_v}{n} + |Af_j(x)| n \lambda(E_x) + \|Af_j \mathbbm{1}_{E_x}\|_1\\
\leq& \frac{C_v + 1}{n} + |Af_j(x)| n \lambda(E_x),
\end{align}
where $E_x = (y_x - 1 / n, y_x] \cap E$.

For $x \in \mathcal{E}_{j,A}^2(\eps)$, we have: 
\begin{equation}
  \frac{C_v + 1}{n} + |Af_j(x)| n \lambda(E_x) > \frac{\eps}{2\sqrt{2k}},
\end{equation}
or equivalently,
\begin{equation}
  |Af_j(x)| > \frac{1}{n\lambda(E_x)}\left(\frac{\eps}{2\sqrt{2k}} - \frac{C_v + 1}{n}\right)
\end{equation}

Since $1 / n \geq \|Af_j \mathbbm{1}_E\|_1$, we have:
\begin{align}
  1 &\geq \left(\frac{\eps}{2\sqrt{2k}} - \frac{C_v + 1}{n}\right) \int_{\mathcal{E}^2_{j,A}}\frac{1}{\lambda(E_x)} \D \lambda(x) + 0\\
    &\geq \left(\frac{\eps}{2\sqrt{2k}} - \frac{C_v + 1}{n}\right) \sum_{u \in  [n] / n}\frac{1}{\lambda(E_u)} \int_{\mathcal{E}_{j,A}^2 \cap (u - 1 / n, u]} 1 \D \lambda(x)\\
    &\geq \left(\frac{\eps}{2\sqrt{2k}} - \frac{C_v + 1}{n}\right) n \lambda(\mathcal{E}_{j,A}^2).
\end{align}

Thus choosing $\eps > 4\sqrt{2k} (C_v + 1) / n$ means that:
\begin{align}
  1 \geq \frac{\eps}{4\sqrt{2k}} n \lambda(\mathcal{E}_{j,A}^2),
\end{align}
or
\begin{align}
  \lambda(\mathcal{E}_{j,A}^2) \leq \frac{4\sqrt{2k}}{\eps n}. 
\end{align}

Finally, choosing $\eps > \sqrt{\frac{16k\sqrt{2k}}{n}} = \frac{2^{\frac{9}{4}} k^{\frac{3}{4}}}{n^{\frac{1}{2}}}$ makes $\lambda(\mathcal{E}_{j,A}^2) \leq \frac{\eps}{4k}$; and choosing $\eps > \frac{4k}{n}$ makes  $\lambda(E) < \frac{1}{n} < \frac{\eps}{4k}$.

\paragraph{Putting everything together via Assumption \ref{assumption:lipschitz_to_lipschitz_whp}.}
Thus, we can choose $\overline{\eps} = 8k\left(\sqrt{\frac{C_AC_v + 1}{n}} + \frac{C_v + 1}{n}\right) $ to get:
\begin{equation}
  \lambda(\mathcal{E}'(\eps)) \leq \sum_{j = 1}^k \frac{\overline{\eps}}{k} = \overline{\eps},
\end{equation}
which allows us to conclude:
\begin{equation}
  d_{LP}(\overline{\eta}, \overline{\eta}_n) \leq \overline{\eps}.
\end{equation}

Since $\overline{\eta}$ was chosen arbitrarily, we have for all $\overline{\eta} \in \mathcal{S}_{k}(A)$, 
\begin{equation}
  \inf_{\eta_n \in \mathcal{S}_{k,C_v}(A_n)} d_{LP}(\overline{\eta}, \eta_n) \leq d_{LP}(\overline{\eta}, \overline{\eta}_n) \leq \overline{\eps}.
\end{equation}
Thus we also have:
\begin{equation}
  \sup_{\eta \in \mathcal{S}_{k,C_v}(A)} \inf_{\eta_n \in \mathcal{S}_{k,C_v}(A_n)} d_{LP}(\eta, \eta_n) \leq \overline{\eps}.
\end{equation}

\paragraph{Bounding $\sup_{\eta_n \in \mathcal{S}_{k,C_v}(A_n)} \inf_{\eta \in \mathcal{S}_{k,C_v}(A)} d_{LP}(\eta, \eta_n)$.} In this direction, we proceed identically, but now choose $\overline{\eta}_n$ arbitrarily in $\mathcal{S}_{k, C_v}(A_n)$. By definition of  $(k,C_v)$-profiles, there exists a tuple  $F = (f_1, \ldots, f_k)$ each in $\mathcal{F}_{n, \text{reg}(C_v)}$ such that $\overline{\eta}_n = \mathcal{D}_{A_n}(F)$. 

Construct 
\begin{align}
  F' &:= \left\{\mathcal{F}_{\text{reg}(C_v)} \ni f'_j: x \mapsto  (1 - y_xn + xn) f_j(y_x) + (y_xn -xn) f_j(y_x - 1 / n)\right\},\\
  \overline{\eta} &= \mathcal{D}_A(F'),
\end{align} where $y_x =\left\lceil xn \right\rceil / n$ for each $j \in [k]$. That $f'_j \in \mathcal{F}_{\text{reg}(C_v)}$ is asserted in Lemma \ref{lemma:extension_of_bounded_range_is_bounded}. Intuitively, $f'_j$ is the continuous piecewise linear function that interpolates  $f_j$, created by joining  $f_j(u - 1 / n)$ and $f_j(u)$ with a line segment for each  $u \in  [n] / n$. Note also that $f'_j(u) = f_j(u)$ for all  $u \in [n] / n$.

\paragraph{Bounding $d_{LP}(\overline{\eta}, \overline{\eta}_n)$.}
As with the previous direction, we start with an arbitrary $U \in \mathcal{B}_k$ and write down the differences:
\begin{align}
  \overline{\eta}_n(U) - \overline{\eta}(U^{\eps}) &= \sum_{u \in [n] / n} \int_{u - 1 / n}^u \mathbbm{1}_{F_{A_n}(u) \in U}  -\mathbbm{1}_{F'_A(x) \in U^{\eps}}\D \lambda(x),\label{eqn:xu_yueps2}\\
 \overline{\eta}(U) - \overline{\eta}_n(U^{\eps}) &= \sum_{u \in [n] / n} \int_{u - 1 / n}^{u} \mathbbm{1}_{F'_A(x) \in U} -\mathbbm{1}_{F_{A_n}(u) \in U^{\eps}}  \D \lambda(x)\label{eqn:yu_xueps2}.
\end{align}

Define the events:
\begin{align}
  \mathcal{E}^1_U(\eps) &= \{x  : F_{A_n}(y_x) \in U \wedge F'_{A}(x) \not \in U^{\eps} \}\\
  \mathcal{E}^2_{U}(\eps) &= \{x  : F'_{A}(x) \in U \wedge F_{A_n}(y_x) \not \in U^{\eps} \}\\
  \mathcal{E}'(\eps) &= \{x : \|F_{A_n}(y_x)  - F'_{A}(x)\|_2 > \eps\}\\
  \mathcal{E}_j(\eps) &= \{x : |f_j(y_x) - f'_j(x)| > \eps / \sqrt{2k} \}, &j = 1..k\\
\mathcal{E}_{j,A}(\eps) &= \{x : |A_nf_j(y_x) - Af'_j(x)| > \eps / \sqrt{2k} \}, &j = 1..k.
\end{align}

Using this notation, one also has
\begin{align}
  \overline{\eta}(U) - \overline{\eta}_n(U^{\eps}) \leq \lambda(\mathcal{E}^1_U) \qquad \text{and} \qquad \overline{\eta}_n(U) - \overline{\eta}(U^{\eps}) \leq \lambda(\mathcal{E}^2_U).
\end{align}

It is straightforward to see $\mathcal{E}^l_U(\eps) \subseteq \mathcal{E}'(\eps), l = 1,2$ for any $U$ by definition of  $U^\eps$. Furthermore,  
\begin{equation}
  \mathcal{E}'(\eps) \subseteq \bigcup_{j = 1}^k \mathcal{E}_j(\eps) \cup \mathcal{E}_{j,A}(\eps),
\end{equation}
since if all $2k$ dimensions are bounded in absolute value by $\eps / \sqrt{2k}$ then the Euclidean distance of the vector is bounded by $\eps$. 

Therefore it suffices to bound $\lambda(\mathcal{E}_j(\eps)) + \lambda(\mathcal{E}_{j,A}(\eps))$ for each $j \in [k]$.

\paragraph{Bounding $\lambda(\mathcal{E}_j(\eps))$.}
Since $f'_j$ is  $C_v$-Lipschitz for all $j$ (Lemma \ref{lemma:extension_of_bounded_range_is_bounded}), we have:
\begin{align}
  |f_j(y_x) - f'_j(x)| &= |f'_j(y_x) - f'_j(x)| \leq \frac{C_v}{n}.
\end{align} 

Thus, choosing $\eps > \sqrt{2k} C_v / n$ means that  $\lambda(\mathcal{E}_{j}(\eps)) = 0$. (We can tighten this bound by only assuming that $f_j$ is  $C_v$-Lipschitz outside a set of small measure.)

\paragraph{Bounding $\lambda(\mathcal{E}_{j,A})$.}
Let $\mathcal{F}_{[-1,1]} \ni \tilde{f}$ be the extension of $f$ defined as  $\tilde{f}(x) = f(\left\lceil xn \right\rceil / n)$ for all $x \in [0,1]$. Note that $\tilde{f}$ is not continuous in general and hence not Lipschitz. We have for any $x \in [0,1]$:
\begin{align}
  |A f'_j(x) - A_nf_j(y_x)| &= \left| Af'_j(x) - n\int_{y_x - \frac{1}{n}}^{y_x} A\tilde{f}_j(z) \D \lambda(z) \right|\\
                            &\leq n \int_{y_x - \frac{1}{n}}^{y_x} \left|Af'_j(x) - A\tilde{f}_j(z)\right| \D \lambda(z),
\end{align}
where we used uniformity of $\lambda$ and triangle inequality. The last thing that we need to show is:
\begin{align}
  \|f'_j - \tilde{f}_j\|_2^2 &= \int_0^1 ((1 - y_xn + x n) f(y_x) + (xn - y_x n) f(y_x - 1 / n) -  f(y_x))^2\D \lambda(x)\\
                             &= \int_0^1 n^2 (y_x - x)^2 (f(y_x) - f(y_x - 1 / n))^2 \D \lambda(x) \\
                             &\leq \frac{C_v^2}{n^2}.
\end{align}
From here, by a word-for-word argument, we can show that the same choice of $\overline{\eps}$ does the trick to make $\lambda(\mathcal{E}'(\eps)) \leq \overline{\eps}$. This works since we only use the fact that $f'_j \in \mathcal{F}_{\text{reg}(C_v)}$ as well as assumption conditions in the previous proof.

\paragraph{Bounding $d_M$.} We have:
\begin{equation}
  d_{H}(\mathcal{S}_k(A), \mathcal{S}_k(A_n)) = \max(\sup_{\overline{\eta}_n \in \mathcal{S}_k(A_n)} \inf_{\eta \in \mathcal{S}_k(A)} d_{LP}(\overline{\eta}_n, \eta), \sup_{\overline{\eta} \in \mathcal{S}_k(A)} \inf_{\eta_n \in \mathcal{S}_k(A_n)} d_{LP}(\overline{\eta}, \eta_n) )\leq \overline{\eps}.
\end{equation}

Therefore,
\begin{align}
  d_M(A, A_n) \leq \sum_{k = 1}^\infty \frac{8k\left(\sqrt{\frac{C_AC_v + 1}{n}} + \frac{C_v + 1}{n}\right)}{2^k} \leq 8\left(\sqrt{\frac{C_A C_v + 1}{n}} + \frac{2C_c + C_v + 1}{n}\right).
\end{align}
\end{proof}

\begin{lemma}\label{lemma:restriction_of_bounded_range_is_bounded}
  Fix $n \in \mathbb{N}$. Fix $f \in \mathcal{F}_{\text{reg}(C_v)}$. Define the restriction $f': \frac{1}{n}[n] \to [-1,1]: u \mapsto n\int_{u - 1 / n}^u f(z) \D \lambda(z)$. Then $f' \in \mathcal{F}_{n,\text{reg}(C_v)}$.
\end{lemma}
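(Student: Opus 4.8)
The statement bundles two separate and elementary claims: that $f'$ has range in $[-1,1]$, and that $f'$ is $C_v$-Lipschitz as a function on the finite set $\frac{1}{n}[n]$. I would establish each directly from the corresponding property of $f$.

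For the range, I would fix $u \in \frac{1}{n}[n]$ and note that $f'(u)$ is by definition the average of $f$ over the interval $(u - 1/n, u]$, which has Lebesgue measure $1/n$; since $f \in \mathcal{F}_{\text{reg}(C_v)} \subseteq \mathcal{F}_{[-1,1]}$ means $-1 \le f \le 1$ a.e., the average lies in $[-1,1]$ as well (in fact for every $u$, not merely a.e.).

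For the Lipschitz bound, the key observation is that it suffices to bound $|f'(u) - f'(u - 1/n)|$ for consecutive grid points, since any two points of $\frac{1}{n}[n]$ are joined by finitely many unit steps and one can telescope; no boundary case arises because the smallest element of $\frac{1}{n}[n]$ is $1/n$, so $(u - 1/n, u] \subseteq [0,1]$ whenever both $u$ and $u - 1/n$ lie in $\frac{1}{n}[n]$. Writing both averages over intervals of length $1/n$ and performing the change of variables $z \mapsto z + 1/n$ in the integral defining $f'(u)$, I get
\begin{align*}
  f'(u) - f'(u - 1/n) = n\int_{u - 2/n}^{u - 1/n}\bigl(f(z + 1/n) - f(z)\bigr)\,\D\lambda(z),
\end{align*}
and the integrand is bounded a.e. by $C_v/n$ because $f$ is $C_v$-Lipschitz a.e.; hence the left side is at most $n\cdot\tfrac{1}{n}\cdot\tfrac{C_v}{n} = \tfrac{C_v}{n}$. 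Summing this estimate over the $m$ consecutive steps between two grid points at distance $m/n$ then yields $|f'(u) - f'(v)| \le C_v|u - v|$ for all $u, v \in \frac{1}{n}[n]$, so $f' \in \mathcal{F}_{n,\text{Lip}(C_v)}$; combined with the range bound, $f' \in \mathcal{F}_{n,\text{reg}(C_v)}$.

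There is essentially no obstacle here; the only point requiring care is the interaction of the ``a.e.'' qualifiers with the translation $z \mapsto z + 1/n$: since $f$ agrees a.e. with a genuinely $C_v$-Lipschitz representative off a Lebesgue-null set $N$, the bound $|f(z + 1/n) - f(z)| \le C_v/n$ holds for $z \notin N \cup (N - 1/n)$, which is again null, so the integral estimate goes through unchanged.
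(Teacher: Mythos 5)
Your proof is correct and follows essentially the same approach as the paper: bound the range by averaging, and bound the Lipschitz constant via a change of variables that turns $f'(u)-f'(v)$ into an integral of shifted differences of $f$, then apply the Lipschitz estimate on $f$. The only cosmetic difference is that you telescope over consecutive grid points while the paper shifts by the full distance $u'-u$ in one step; your explicit treatment of the a.e.\ qualifier under translation is a welcome extra precision that the paper leaves implicit.
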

\begin{proof}
  Firstly, we have for any $u \in [n]  / n$:
  \begin{align}
    |f'(u)| \leq n \int_{u - 1 / n}^u |f(z)| \D \lambda(z) \leq n \int_{u - 1 / n}^u  1 \D \lambda(z) = 1.
  \end{align}

  Therefore $\|f'\|_{L^2([n] / n)} \leq n / n = 1$ and thus  $f'$ is measurable.  Finally, for any $u < u' \in [n] / n$:
  \begin{align}
    |f'(u) - f'(u')|  \leq n\int_{u - 1 / n}^u \left|f(z) - f(z + (u' - u))\right| \D \lambda(z) \leq C_v(u' - u),
  \end{align}
  where we use Lipschitz property of $f$ in the last inequality. This shows that  $f'$ is also $C_v$-Lipschitz.
\end{proof}
\begin{remark}
  The restriction to $\mathcal{F}_{[-1,1]}$ is necessary for this lemma to work because $L^2([0,1])$ functions can blow up near $0$.
\end{remark}

\begin{lemma}\label{lemma:extension_of_bounded_range_is_bounded}
  Fix $n \in \mathbb{N}$. Fix $f \in \mathcal{F}_{n,\text{reg}(C_v)}$. Let $y_x := \left\lceil xn \right\rceil / n$ for any $x \in [0,1]$. Define the extension $f': x \mapsto (1 - y_x n + xn) f(y_x) + (xn - y_x n) f(y_x - 1 / n)$. Then $f' \in \mathcal{F}_{\text{reg}(C_v)}$.
\end{lemma}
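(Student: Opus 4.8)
The plan is to mirror the three-step structure of the proof of Lemma~\ref{lemma:restriction_of_bounded_range_is_bounded}: first control the range of $f'$, then deduce $f' \in L^2([0,1])$, and finally verify the Lipschitz bound. The key structural observation is that for $x$ in a cell $(u - 1/n, u]$ with $u \in [n]/n$ one has $y_x = u$, so on that cell $f'$ is the affine function interpolating the grid values $f(u - 1/n)$ and $f(u)$ at the two endpoints; in particular $f'(u) = f(u)$ for every $u \in [n]/n$, so $f'$ genuinely extends $f$, and (adopting the convention $f(0) := f(1/n)$ for the leftmost cell, which only affects a measure-zero issue at $x = 0$) $f'$ is continuous and piecewise affine on the finite partition of $[0,1]$ by the grid points.

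First I would bound the range. On each cell $(u-1/n,u]$ the value $f'(x)$ is a convex combination of $f(u-1/n)$ and $f(u)$: the two interpolation coefficients are nonnegative and sum to $1$ as $x$ ranges over the cell. Since $f \in \mathcal{F}_{n,\text{reg}(C_v)}$ takes values in $[-1,1]$, we get $f'(x) \in [-1,1]$ for all $x$, i.e. $f' \in \mathcal{F}_{[-1,1]}$. Consequently $\|f'\|_{L^2([0,1])} \le 1 < \infty$, and since $f'$ is continuous it is Lebesgue-measurable; hence $f' \in \mathcal{F}$.

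Next I would prove $C_v$-Lipschitzness. On a single cell $(u-1/n,u]$, $f'$ is affine with slope $n\big(f(u) - f(u-1/n)\big)$, whose absolute value is at most $n \cdot C_v \cdot \tfrac1n = C_v$ because $f$ is $C_v$-Lipschitz on the grid $[n]/n$ and $u,\ u-1/n$ are adjacent grid points. Thus $f'$ is a continuous, piecewise-affine function all of whose local slopes are bounded by $C_v$ in absolute value. For arbitrary $x < x'$ in $[0,1]$, refine $\{x,x'\}$ by the grid points lying in $(x,x')$ to get $x = t_0 < \cdots < t_m = x'$; on each $[t_{i-1},t_i]$ the function $f'$ is affine with slope bounded by $C_v$, so
\begin{equation}
  |f'(x) - f'(x')| \le \sum_{i=1}^m |f'(t_i) - f'(t_{i-1})| \le C_v \sum_{i=1}^m (t_i - t_{i-1}) = C_v\,|x' - x|.
\end{equation}
Hence $f' \in \mathcal{F}_{\text{Lip}(C_v)}$, and combining with the previous step, $f' \in \mathcal{F}_{\text{reg}(C_v)}$, as claimed.

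I do not expect a genuine obstacle here; this is a routine companion to Lemma~\ref{lemma:restriction_of_bounded_range_is_bounded}, so the ``main obstacle'' is purely bookkeeping: the half-open shape of the cells $(u-1/n,u]$ together with the leftmost endpoint (handled by the measure-zero convention above), and making sure the ``$C_v$-Lipschitz'' hypothesis on $f \in \mathcal{F}_{n,\text{reg}(C_v)}$ is read as a statement on the finite grid $[n]/n$ --- which is exactly what bounds the slope of each affine piece by $C_v$. As a minor caveat, for $f'$ to be the convex interpolant described informally, the coefficient of $f(y_x - 1/n)$ should read $(y_x n - xn)$ rather than $(xn - y_x n)$, matching the formula already used for $f'_j$ in the proof of Theorem~\ref{theorem:approximation_via_discretization}; the argument above uses that sign.
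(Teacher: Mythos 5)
Your proof is correct and takes essentially the same route as the paper's: bound the range by observing that $f'$ on each cell is a convex combination of two values of $f$ in $[-1,1]$, and prove Lipschitzness by bounding the slope on each affine piece by $C_v$ and telescoping through the intermediate grid points (the paper's version uses the shorter three-term split $|f'(x)-f'(u)|+|f'(u)-f'(u'-1/n)|+|f'(u'-1/n)-f'(x')|$, but the idea is identical). You are also right that the coefficient of $f(y_x-1/n)$ in the lemma statement should read $(y_x n - xn)$ rather than $(xn - y_x n)$---otherwise the two coefficients do not sum to $1$---and this matches the correct formula for $f'_j$ used inside the proof of Theorem~\ref{theorem:approximation_via_discretization}.
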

\begin{proof}
  Firstly, since $f'$ linearly interpolates between points of  $f$, its range cannot exceed that of  $f$. The restricted range immediately implies that the $L^2$ norm is bounded by $1$ since the support is also in  $[0,1]$. Finally, for any  $x < x' \in [0,1]$, if there is a $u$ such that $x,x' \in  [u - 1 / n, u)$ then the fact that the interpolation is linear means that the line segment from $f'(x)$ to $f'(x')$ shares the same slope as that from $f(u)$ to $f(u - 1 /n)$, which is at most $C_v$ since $f \in \mathcal{F}_{n, \text{reg}(C_v)}$. Otherwise, there exists a $u < u' \in [n] / n$ such that $x \in [u - 1 / n, u)$ and $x' \in [u' - 1 / n, u')$. We have:
  \begin{equation}
    |f'(x) - f'(x')| \leq |f'(x) - f'(u)| + |f'(u) - f'(u' - 1 / n)| + |f'(u'- 1 / n) - f'(x')| \leq C_v (x' - x),
  \end{equation} which proves Lipschitzness of $f'$.
\end{proof}

\section{Omitted proofs for Section \ref{section:graphopnn_theorems}}\label{appendix:proofs_from_graphopnn}

The following result characterizes the behaviors of Assumptions \ref{assumption:lipschitz_map}, \ref{assumption:constant_to_lipschitz}, \ref{assumption:constant_to_lipschitz_whp} and \ref{assumption:lipschitz_to_lipschitz_whp} under addition, multiplication by scalars, power, and element-wise composition with a $1$-Lipschitz map:
\begin{lemma}\label{lemma:regularity_of_assumptions}
  Fix  $k \in \mathbb{N}$ and $\alpha \in \R$. Recall that $\rho: \R\to\R$ is a $1$-Lipschitz map. Let $A_1, A_2: \mathcal{F} \to \mathcal{F}$ satisfy Assumption \ref{assumption:lipschitz_map} with constant $C_A^1$ and  $C_A^2$ respectively; and Assumption  \ref{assumption:constant_to_lipschitz} or \ref{assumption:constant_to_lipschitz_whp} or \ref{assumption:lipschitz_to_lipschitz_whp} with constant $C_c^1$ and  $C_c^2$ respectively with common resolution set $ \mathcal{N}$. Then: 
\begin{enumerate}
  \item $A_1 + A_2$ satisfy Assumptions \ref{assumption:lipschitz_map} and \ref{assumption:constant_to_lipschitz} or \ref{assumption:constant_to_lipschitz_whp} or \ref{assumption:lipschitz_to_lipschitz_whp} with constant  $(C_A^1 + C_A^2)$ and  $(C_c^1 + C_c^2)$ respectively.
  \item  $\alpha A_1$ satisfies Assumption \ref{assumption:lipschitz_map} and \ref{assumption:constant_to_lipschitz} or \ref{assumption:constant_to_lipschitz_whp} or \ref{assumption:lipschitz_to_lipschitz_whp} with constant  $|\alpha| C_A^1$ and  $|\alpha| C_c^1$ respectively.
  \item  $\rho A_1$ (where the composition is done element-wise) satisfies Assumption \ref{assumption:lipschitz_map} and \ref{assumption:constant_to_lipschitz} or \ref{assumption:constant_to_lipschitz_whp} or \ref{assumption:lipschitz_to_lipschitz_whp} with constant  $C_A^1$ and  $C_c^1$ respectively.
  \item Furthermore, if $C_c^1 = 0$ then $A_2 \circ A_1$ satisfies Assumption \ref{assumption:lipschitz_map} and \ref{assumption:constant_to_lipschitz} \ref{assumption:constant_to_lipschitz_whp} or \ref{assumption:lipschitz_to_lipschitz_whp} with constant  $C_A^1 C_A^2$ and $C_c^2$ respectively.
\end{enumerate}
\end{lemma}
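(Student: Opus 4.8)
The plan is to treat the four operations one at a time and, for each, verify separately the two requirements: the $L^2$-Lipschitz bound of Assumption~\ref{assumption:lipschitz_map}, and whichever of Assumptions~\ref{assumption:constant_to_lipschitz}, \ref{assumption:constant_to_lipschitz_whp}, \ref{assumption:lipschitz_to_lipschitz_whp} is in force. The Assumption~\ref{assumption:lipschitz_map} half is routine and uses only the fact that all operators involved are genuine maps $\mathcal{F}\to\mathcal{F}$ (which holds since $P$-operators here have finite $\|\cdot\|_{2\to2}$): for $A_1+A_2$ the triangle inequality in $\mathcal{F}$ gives
\begin{equation}
\|(A_1+A_2)f - (A_1+A_2)g\|_2 \le \|A_1 f - A_1 g\|_2 + \|A_2 f - A_2 g\|_2 \le (C_A^1 + C_A^2)\|f-g\|_2;
\end{equation}
for $\alpha A_1$ homogeneity of the norm scales the constant by $|\alpha|$; for $\rho A_1$ the pointwise estimate $|\rho(A_1 f)(x) - \rho(A_1 g)(x)| \le |(A_1 f)(x)-(A_1 g)(x)|$ coming from $\rho$ being $1$-Lipschitz integrates to $\|\rho A_1 f - \rho A_1 g\|_2 \le C_A^1\|f-g\|_2$; and for $A_2\circ A_1$ one chains the two bounds to get $C_A^1 C_A^2$.

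For the regularity half, the first three operations reduce to one elementary fact about \emph{functions} (not operators): if $g_1,g_2$ are $C_1$- and $C_2$-Lipschitz on a common set $S\subseteq[0,1]$ and $\rho$ is $1$-Lipschitz on $\R$, then $g_1+g_2$, $\alpha g_1$ and $\rho\circ g_1$ are $(C_1+C_2)$-, $|\alpha|C_1$- and $C_1$-Lipschitz on $S$, respectively. Taking $S$ to be a piece $(u-1/n,u]$ (for Assumptions~\ref{assumption:constant_to_lipschitz}, \ref{assumption:constant_to_lipschitz_whp}) or $[0,1]\setminus E$ (for Assumption~\ref{assumption:lipschitz_to_lipschitz_whp}), and $g_i=A_i f$ for an admissible input $f$, immediately yields the claimed constants $C_c^1+C_c^2$, $|\alpha|C_c^1$, $C_c^1$. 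In the ``whp'' versions one additionally replaces the exceptional set $E$ by the union $E_1\cup E_2$ and controls $\|(A_1+A_2)f\mathbbm{1}_{E_1\cup E_2}\|_1$ by the triangle inequality, using the $L^1$-smallness guaranteed for each operator on its own exceptional set; the resulting factor-of-two inflation of the measure and $L^1$ thresholds is harmless, as it only affects the implicit constants in the downstream bounds of Theorems~\ref{theorem:approximation_via_discretization} and~\ref{theorem:graphopnn_discretization}.

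\textbf{The main obstacle is part~4}, the operator composition, and this is where the hypothesis $C_c^1=0$ is indispensable. When $C_c^1=0$, Assumption~\ref{assumption:constant_to_lipschitz} for $A_1$ says that $A_1$ sends a function constant on each piece $(u-1/n,u]$ to a function that is $0$-Lipschitz — hence again constant — on each such piece. So for $f\in\mathcal{F}_{[-1,1]}$ constant on pieces, $A_1 f$ is constant on pieces, and \emph{provided} $A_1 f$ also has range in $[-1,1]$ we may feed it into Assumption~\ref{assumption:constant_to_lipschitz}/\ref{assumption:constant_to_lipschitz_whp}/\ref{assumption:lipschitz_to_lipschitz_whp} for $A_2$ to conclude $(A_2\circ A_1)f=A_2(A_1 f)$ is $C_c^2$-Lipschitz on each piece. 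The range condition is precisely the role of the $\mathcal{F}_{[-1,1]}$ restriction baked into all three assumptions; it is preserved by the normalized operators of interest (normalized adjacency operators, Markov kernels, and $L^1$-normalized Hilbert--Schmidt operators all satisfy $\|A\|_{\infty\to\infty}\le1$), and in general one carries such a range bound through the composition. For the ``whp'' variant the extra care needed is that $A_1 f$ is constant on pieces only outside $E_1$, so to legally invoke $A_2$ one passes to the genuine piecewise-constant $g\in\mathcal{F}_{[-1,1]}$ agreeing with $A_1 f$ off $E_1$, applies $A_2$'s assumption to $g$, and absorbs $\|A_2(A_1 f)-A_2 g\|_2\le C_A^2\|(A_1 f-g)\mathbbm{1}_{E_1}\|_2$ (small because $\lambda(E_1)$ and $\|A_1 f\,\mathbbm{1}_{E_1}\|_1$ are, using the range bound to pass from $L^1$- to $L^2$-smallness) into the $\|\cdot\,\mathbbm{1}_E\|_1$ slack of the composite assumption. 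Once this range- and exceptional-set bookkeeping is pinned down, every remaining step is just the triangle inequality, homogeneity of $\|\cdot\|_2$, the $1$-Lipschitz bound for $\rho$, or the defining inequality of the relevant assumption, and the lemma follows.
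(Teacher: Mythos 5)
Your proposal follows the paper's proof line for line: parts 1--3 are handled by the triangle inequality, norm homogeneity, and the $1$-Lipschitz bound on $\rho$, applied separately to the $L^2$-operator-Lipschitz condition and to the pointwise Lipschitz condition on each piece; part 4 uses that $C_c^1=0$ forces $A_1$ to map piecewise-constant functions to piecewise-constant functions, after which $A_2$'s assumption applies. You are, if anything, more careful than the paper in two places: you flag that invoking $A_2$'s assumption on $A_1 f$ requires $A_1 f\in\mathcal{F}_{[-1,1]}$ (a range-preservation condition that is \emph{not} among the lemma's hypotheses and that the paper's proof silently elides — the paper's argument, like yours, really needs something like $\|A_1\|_{\infty\to\infty}\le 1$ or an added boundedness hypothesis to be airtight); and you spell out the exceptional-set bookkeeping ($E_1\cup E_2$, $L^1$ smallness) for the whp variants, whereas the paper's displayed proof only verifies the non-whp Assumption~\ref{assumption:constant_to_lipschitz}.
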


\begin{proof}[Proof of Lemma \ref{lemma:regularity_of_assumptions}]
  To recall,  $A_1, A_2: \mathcal{F} \to \mathcal{F}$ are $P$-operators that satisfies  Assumption \ref{assumption:lipschitz_map} with constant  $C_A^1$ and  $C_A^2$ respectively and Assumption \ref{assumption:constant_to_lipschitz} or \ref{assumption:constant_to_lipschitz_whp} or \ref{assumption:lipschitz_to_lipschitz_whp} with constant  $C_c^1$ and  $C_c^2$ respectively with common resolution set  $\mathcal{N}$. We now show each part of the Lemma.
  \begin{enumerate}
    \item
      We have, for any $f, g \in \mathcal{F}$:
      \begin{align}
        \|(A_1 + A_2) f - (A_1 + A_2)g\|_2 &\leq \|A_1 f - A_1 g\|_2 + \|A_2 f - A_2 g\|_2\\
                                           &\leq C_A^1 \|f - g\|_2 + C_A^2 \|f - g\|_2,
      \end{align}
      where the first line is triangle inequality and the second line is Assumption \ref{assumption:lipschitz_map}.

    Let $f \in \mathcal{F}_{\text{reg}(C_v)}$ piecewise constant on intervals $[n] / n$ and $x,y \in (u - 1 / n, u]$ for some $n \in \mathcal{N}$ and $u \in [n]$, we have:
      \begin{align}
        |(A_1 + A_2)f(x) - (A_1 + A_2)f(y)| &\leq |[A_1f](x) - [A_1f](y)| +  |[A_2f](x) - [A_2f](y)|\\
                                            &\leq C_c^1 |x - y| + C_c^2 |x - y|,
      \end{align}
      where the first line is triangle inequality and the second line is Assumption \ref{assumption:constant_to_lipschitz}.
    \item 
      We have, for any $f, g \in \mathcal{F}$:
      \begin{align}
        \|(\alpha A_1) f - (\alpha A_1)g\|_2 &\leq |\alpha| \cdot \|A_1 f - A_1 g\|_2\\
                                           &\leq |\alpha| C_A^1 \|f - g\|_2, 
      \end{align}
      where the first line is property of norm and the second line is Assumption \ref{assumption:lipschitz_map}.

      Let $f \in \mathcal{F}_{\text{reg}(C_v)}$ piecewise constant on intervals $[n] / n$, and $x,y \in (u - 1 / n, u]$ for some $n \in \mathcal{N}$ and $u \in [n]$, we have:
      \begin{align}
        |(\alpha A_1)f(x) - (\alpha A_1)f(y)| &\leq |\alpha [A_1f](x) - \alpha [A_1f](y)| \\
                                            &\leq|\alpha| C_c^1 |x - y|,
      \end{align}
      where the second line is Assumption \ref{assumption:constant_to_lipschitz}.
    \item We have, for any $f, g \in \mathcal{F}$:
      \begin{align}
        \|(\rho A_1) f - (\rho A_1)g\|_2 &\leq \|A_1f - A_1g\|_2\\
                                           &\leq C_A^1 \|f - g\|_2, 
      \end{align}
      where the first line is Lipschitz property of $\rho$ and the second line is Assumption \ref{assumption:lipschitz_map}.

    Let $f \in \mathcal{F}_{\text{reg}(C_v)}$ piecewise constant on intervals $[n] / n$ and $x,y \in (u - 1 / n, u]$ for some $n \in \mathcal{N}$ and $u \in [n]$, we have:
      \begin{align}
        |(\rho A_1)f(x) - (\rho A_1)f(y)| &\leq |\rho([A_1f](x)) - \rho([A_1f](y))| \\
                                            &\leq |A_1 f(x) - A_1f(y)| \leq C_c^1 |x - y|,
      \end{align}
      where the second line is Lipschitz property of $\rho$ and Assumption \ref{assumption:constant_to_lipschitz}.
    \item We have, for any $f, g \in \mathcal{F}$:
      \begin{align}
        \|(A_2 A_1) f - (A_2 A_1)g\|_2 &\leq C_A^2\|A_1f - A_1g\|_2\\
                                           &\leq C_A^1 C_A^2\|f - g\|_2, 
      \end{align}
      where the first line is Lipschitz property of $A_2$ and the second line is that of $A_1$.

      Let $f \in \mathcal{F}_{\text{reg}(C_v)}$ piecewise constant on intervals $[n] / n$. Since $A_1$ send constant pieces to constant pieces, the final implication follows direction from Assumption \ref{assumption:constant_to_lipschitz} of  $A_2$. 
  \end{enumerate}
\end{proof}

\begin{lemma}\label{lemma:recurrent1}
  Fix $n \in \mathcal{N}, k \in [K] \cup \{0\}, A: \mathcal{F} \to \mathcal{F}$ satisfies Assumption \ref{assumption:lipschitz_map} with constant $C_A$ and Assumption \ref{assumption:constant_to_lipschitz} with constant  $C_c$ and resolution set  $\mathcal{N}$. Let $n \in \mathcal{N}, f_1 \in \mathcal{F}_n, f_2 \in \mathcal{F}$. Recall that $\tilde{f} \in \mathcal{F}$ denotes the extension of  $f \in \mathcal{F}_n$ to  $[0,1]$ as $\tilde{f}(x) = f (\left\lceil xn \right\rceil) / n$. If $\|\widetilde{f_1} - f_2\|_2 \leq M$ for some positive constant $M$ then 
  \begin{equation}
    \|\widetilde{A_n^k f_1} - A^k f_2\|^2_2 \leq \frac{3^{k + 1} kC_c^2 C_A^{2k}}{n^2} + 3^k C_A^{2k}M.
  \end{equation} 

  If $A$ satisfies Assumption \ref{assumption:constant_to_lipschitz_whp} with constant $C_c$ instead then the  bound becomes:
  \begin{equation}
      \|\widetilde{A_n^k f_1} - A^k f_2\|^2_2 \leq \frac{3^{k + 1} k(C_c+1)^2 C_A^{2k}}{n^2} + 3^k C_A^{2k}M.
  \end{equation}

  If $A$ satisfies Assumption \ref{assumption:lipschitz_to_lipschitz_whp} instead then the bound becomes:
  \begin{equation}
      \|\widetilde{A_n^k f_1} - A^k f_2\|^2_2 \leq \frac{3^{k + 1} k(C_v+1)^2 C_A^{2k}}{n^2} + 3^k C_A^{2k}M.
  \end{equation}
  \end{lemma}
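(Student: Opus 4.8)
The plan is to prove all three displayed bounds simultaneously by induction on $k \in [K] \cup \{0\}$, tracking how the mismatch between the discretized iterate $\widetilde{A_n^k f_1}$ and the continuum iterate $A^k f_2$ accumulates under each additional application of $A$. This parallels the proof of Theorem \ref{theorem:approximation_via_discretization}, the difference being that we must now control a \emph{composition} of discretization steps rather than a single one; the key idea is that each step introduces a fresh discretization error of size $O(C_c/n)$ and multiplies the previously accumulated error by the Lipschitz constant $C_A$.

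The base case $k=0$ is immediate: $\widetilde{A_n^0 f_1} = \widetilde{f_1}$, $A^0 f_2 = f_2$, and the claimed right-hand side reduces to the hypothesis $\|\widetilde{f_1}-f_2\|_2^2 \le M$, since its $C_c$- (resp.\ $C_v$-)term carries a factor $k=0$. For the inductive step, set $g_1 := A_n^{k-1} f_1 \in \mathcal{F}_n$ and $g_2 := A^{k-1} f_2 \in \mathcal{F}$, so that $A_n^k f_1 = A_n g_1$, $A^k f_2 = A g_2$, and by the inductive hypothesis $\|\widetilde{g_1}-g_2\|_2^2 \le M_{k-1}$, the bound at level $k-1$. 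By the definition of the discretization in Eqn \eqref{eqn:discretization}, $\widetilde{A_n g_1}$ is constant on each block $(u-1/n,u]$, $u \in [n]/n$, with value the block-average of $A\widetilde{g_1}$ over that block. I would then decompose
\[
 \widetilde{A_n g_1} - A g_2 = \bigl(\widetilde{A_n g_1} - A\widetilde{g_1}\bigr)\mathbbm{1}_{[0,1]\setminus E} + \bigl(\widetilde{A_n g_1} - A\widetilde{g_1}\bigr)\mathbbm{1}_{E} + \bigl(A\widetilde{g_1} - A g_2\bigr)
\]
(with $E = \emptyset$ under Assumption \ref{assumption:constant_to_lipschitz}), apply $(a+b+c)^2 \le 3(a^2+b^2+c^2)$, and estimate the three $L^2$ pieces in turn.

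The third piece uses Assumption \ref{assumption:lipschitz_map}: $\|A\widetilde{g_1} - A g_2\|_2^2 \le C_A^2\|\widetilde{g_1}-g_2\|_2^2 \le C_A^2 M_{k-1}$. For the first piece, $\widetilde{g_1}$ is piecewise constant on the blocks (it is the canonical extension of an $\mathcal{F}_n$ function), so Assumption \ref{assumption:constant_to_lipschitz}/\ref{assumption:constant_to_lipschitz_whp} (resp.\ \ref{assumption:lipschitz_to_lipschitz_whp}) makes $A\widetilde{g_1}$ a $C_c$- (resp.\ $C_v$-)Lipschitz function on each block minus $E$; since a Lipschitz function deviates from its average over an interval of length $1/n$ by at most (constant)$/n$, this piece is $\le C_c^2/n^2$ when $E=\emptyset$ and $\le (C_c+1)^2/n^2$ (resp.\ $(C_v+1)^2/n^2$) in the high-probability variants, the extra $1/n$ coming from the portion of the block inside $E$ and controlled through $\|A\widetilde{g_1}\mathbbm{1}_E\|_1 < 1/n$; the second piece is bounded directly by the same $\|A\widetilde{g_1}\mathbbm{1}_E\|_1$-type quantity and absorbed. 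Altogether this yields a recursion $M_k \le 3C_A^2 M_{k-1} + 3(C_c+1)^2/n^2$ (with the appropriate constant for each variant), and unrolling the resulting geometric sum — bounding each of the $k$ terms by its largest — produces exactly $M_k \le \frac{3^{k+1}k(C_c+1)^2 C_A^{2k}}{n^2} + 3^k C_A^{2k} M$, and likewise for the other two bounds.

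I expect the main obstacle to be the single-step discretization error $\|\widetilde{A_n g_1} - A\widetilde{g_1}\|_2$ in the high-probability variants: because the block-average defining $A_n g_1$ is taken over the entire block, including its intersection with the exceptional set $E$ on which no Lipschitz control is available, one must show that the integrability bound $\|A\widetilde{g_1}\mathbbm{1}_E\|_1 < 1/n$ still keeps this deviation at the order $(C_c+1)/n$ (resp.\ $(C_v+1)/n$). A secondary point of care is to keep the compounding of $C_A$ and of the constant $3$ consistent through the recursion so that the final exponents match the statement; this is routine bookkeeping but has to be carried out uniformly across all three cases.
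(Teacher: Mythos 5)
Your proposal is correct and follows essentially the same route as the paper: induction on $k$, a three‑term triangle‑inequality decomposition (the paper organizes it as a Jensen double integral plus pointwise triangle inequality, you split $\widetilde{A_ng_1}-Ag_2$ directly with indicator functions, but these are equivalent), the block‑Lipschitz property of Assumption~\ref{assumption:constant_to_lipschitz}/\ref{assumption:constant_to_lipschitz_whp}/\ref{assumption:lipschitz_to_lipschitz_whp} to control the fresh per‑step discretization error, Assumption~\ref{assumption:lipschitz_map} to propagate the accumulated error, and unrolling the linear recurrence $M_k \le 3C_A^2M_{k-1}+3(C_c+1)^2/n^2$. You even flag the same soft spot the paper flags (the $\lambda(E_z)$‑weighted term, handled heuristically in the paper by shrinking $E$), so nothing is missing relative to the paper's own argument.
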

\begin{proof}
  When $k = 0$, the bound is vacuously true. We will hide the measure in integrals when it is clear from context in this proof.

  Assume that the bound is correct up to some  $k - 1$. 
  We have:
  \begin{align}
    &\|\widetilde{A_n^k f_1} - A^k f_2\|^2_2 \\
    = &\sum_{u \in 1 / n [n]} \int_{u - \frac{1}{n}}^u (A^k_n f_1(u) - A^k f_2(z) )^2\D z \\
    = &\sum_{u \in 1 / n [n]} \int_{u - \frac{1}{n}}^u \left(n\int_{u - \frac{1}{n}}^u A \widetilde{[A^{k - 1}_n f_1]}(y) \D y - A^k f_2(z) \right)^2\D z
  \end{align}

  Here we can use Jensen's inequality to get:
  \begin{align}
    &\|\widetilde{A_n^k f_1} - A^k f_2\|^2_2 \\
    = &n \sum_{u \in 1 / n [n]} \int_{u - \frac{1}{n}}^u \int_{u - \frac{1}{n}}^u \left(A \widetilde{[A^{k - 1}_n f_1]}(y) - A[A^{k - 1}f_2](z)  \right)^2\D y \D z\\
  \end{align}
  
We proceed slightly differently based on the exact assumptions that we have. If $A$ satisfies Assumption \ref{assumption:constant_to_lipschitz} or \ref{assumption:constant_to_lipschitz_whp} then:
  \begin{align}
&\|\widetilde{A_n^k f_1} - A^k f_2\|^2_2 \\
    \leq &n \sum_{u \in 1 / n [n]} \int_{u - \frac{1}{n}}^u \nonumber \\
    \qquad&\int_{u - \frac{1}{n}}^u \left(|A \widetilde{[A^{k - 1}_n f_1]}(y) - A \widetilde{[A^{k - 1}_n f_1]}(z)| + |A \widetilde{[A^{k - 1}_n f_1]}(z) - A[A^{k - 1} f_2](z)|   \right)^2\D y \D z\\
    \leq  &n \sum_{u \in 1 / n [n]} \int_{u - \frac{1}{n}}^u \nonumber\\
    \qquad&\int_{u - \frac{1}{n}}^u \left(\frac{C_c + 1}{n} + |A\widetilde{[A_n^{k - 1} f_1]}(z)| n \lambda(E_z) + |A \widetilde{[A^{k - 1}_n f_1]}(z) - A[A^{k - 1}f_2](z)| \right)^2\D y  \D z\\
    \leq  &\int_0^1\left( \frac{C_c + 1}{n} +  |A\widetilde{[A_n^{k - 1} f_1]}(z)| n \lambda(E_z)+ |A \widetilde{[A^{k - 1}_n f_1]}(z) - A[A^{k - 1} f_2](z)|  \right)^2\D z\\
    \leq &\frac{3(C_c+ 1)^2}{n^2} +  3 \int_0^1 (A\widetilde{[A_n^{k - 1} f_1]}(z) n \lambda(E_z))^2 \D z + 3 \|A \widetilde{[A^{k - 1}_n f_1]} - A[A^{k - 1} f_2]\|_2^2 \D z.
    \end{align}

  Here, we give a heuristic argument while the formal argument is exactly similar to that in Theorem \ref{theorem:approximation_via_discretization}. The term $ 3 \int_0^1 (A\widetilde{[A_n^{k - 1} f_1]}(z) n \lambda(E_z))^2 \D z$ can be made as close to $0$ as possible simply by changing the requirement on $\lambda(E)$ to be smaller and smaller (see the discussion after Assumption \ref{assumption:constant_to_constant_whp}). At the same time, our assumptions ensure that since $\widetilde{A_n^{k - 1}f_1}$ is a piecewise constant function, the action of $A$ on it cannot become too wild (Lipschitz outside of $E$ and bounded $L^1$ norm inside $E$). This heuristic argument, when made rigorous can help us conclude that:
   \begin{align}
 \|\widetilde{A_n^k f_1} - A^k f_2\|^2_2 \leq \frac{3(C_c+ 1)^2}{n^2} + 3C_A^2 \|\widetilde{A^{k - 1}_n f_1} - A^{k - 1} f_2\|_2^2 
  \end{align}
  
  Plug in the inductive hypothesis and solve the recurrent to get the bound.

  Similarly, if we are instead using Assumption \ref{assumption:lipschitz_to_lipschitz_whp} then we use the other triangle inequality to conclude:
  \begin{align}
&\|\widetilde{A_n^k f_1} - A^k f_2\|^2_2 \\
    \leq &n \sum_{u \in 1 / n [n]} \int_{u - \frac{1}{n}}^u \nonumber \\
    \qquad&\int_{u - \frac{1}{n}}^u \left(|A \widetilde{[A^{k - 1}_n f_1]}(y) - A [A^{k - 1} f_1](y)| + | A [A^{k - 1} f_1](y) - A[A^{k - 1} f_2](z)|   \right)^2\D y \D z\\
    \leq  &2n \sum_{u \in 1 / n [n]} \int_{u - \frac{1}{n}}^u \int_{u - \frac{1}{n}}^u \left(A \widetilde{[A^{k - 1}_n f_1]}(y) - A [A^{k - 1} f_1](y)\right)^2 \D y \D z \nonumber\\
    \qquad&+2n \sum_{u \in 1 / n [n]} \int_{u - \frac{1}{n}}^u \int_{u - \frac{1}{n}}^u\left(\frac{C_v + 1}{n} + |A[A^{k - 1} f_2](z)| n \lambda(E_z) \right)^2\D y  \D z\\
    \leq &\frac{3(C_v+ 1)^2}{n^2} +  3 \int_0^1 (A[A^{k - 1} f_1](z) n \lambda(E_z))^2 \D z + 3 \|A \widetilde{[A^{k - 1}_n f_1]} - A[A^{k - 1} f_2]\|_2^2.
    \end{align}

    Again, we will make use of a heuristic argument to argue that the middle term can be controlled by controlling $\lambda(E)$ since $A$ sends Lipschitz functions to Lipschitz functions outside of $E$ and has bounded $L^1$ norm inside of $E$. Therefore:
    \begin{align}
        \|\widetilde{A_n^k f_1} - A^k f_2\|^2_2 \leq &\frac{3(C_v+ 1)^2}{n^2}  + 3 \|A \widetilde{[A^{k - 1}_n f_1]} - A[A^{k - 1} f_2]\|_2^2.
    \end{align}

    Solve the recurrent to get the result in the statement of the Lemma.
\end{proof}

\begin{lemma}\label{lemma:recurrent2}
  In the same setting as Lemma \ref{lemma:recurrent1}, recall that a tilde over a function in $\mathcal{F}_n$ denotes its extension to $\mathcal{F}$. If $A$ satisfies Assumption \ref{assumption:lipschitz_map} with constant $C_A$ and Assumption \ref{assumption:constant_to_lipschitz} with constant $C_c$ and resolution set $\mathcal{N}$. Let $\Phi(h, A, \cdot) = \rho \left( \sum_{g \in n_l} \sum_{k = 0}^{K-1} A^k \Phi_g(h,A,\cdot) \right)$ for some $\Phi_g$ graphop neural network such that $\|\widetilde{\Phi_g}(h, A_n, f) - \Phi_g(h, A, \tilde{f})\|_2 < M$ for all $g \in [n_l]$. Then:
  \begin{equation}
  \|\widetilde{\Phi}(h, A_n, f) - \Phi(h, A, \tilde{f})\|_2^2 \leq K^2 n_l^2 3^{K} \left(Kn^{-2} C_c^2 C_A^{2K} + C_A^{2K} M \right) .
  \end{equation}

    If instead $A$ satisfies Assumption \ref{assumption:constant_to_lipschitz_whp} with constant $C_c$ then the bound becomes:
 \begin{equation}
  \|\widetilde{\Phi}(h, A_n, f) - \Phi(h, A, \tilde{f})\|_2^2 \leq K^2 n_l^2 3^{K} \left(Kn^{-2} (C_c+1)^2 C_A^{2K} + C_A^{2K} M \right) .
  \end{equation}
    If instead $A$ satisfies Assumption \ref{assumption:lipschitz_to_lipschitz_whp} then the bound becomes:
     \begin{equation}
  \|\widetilde{\Phi}(h, A_n, f) - \Phi(h, A, \tilde{f})\|_2^2 \leq K^2 n_l^2 3^{K} \left(Kn^{-2} (C_v+1)^2 C_A^{2K} + C_A^{2K} M \right) .
  \end{equation}
  
\end{lemma}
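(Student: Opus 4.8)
The plan is to peel off the outer nonlinearity and then reduce everything to a term-by-term application of Lemma~\ref{lemma:recurrent1}. First I would record two elementary facts about the extension operator $g\mapsto\widetilde g$ from $\mathcal{F}_n$ to $\mathcal{F}$ given by $\widetilde g(x)=g(\lceil xn\rceil/n)$: it is linear, so $\widetilde{\sum_j c_j g_j}=\sum_j c_j\widetilde{g_j}$, and it commutes with any pointwise map, so $\widetilde{\rho(g)}=\rho(\widetilde g)$. Consequently, setting $u:=\sum_{g=1}^{n_l}\sum_{k=0}^{K-1}\widetilde{A_n^k\,\Phi_g(h,A_n,f)}$ and $v:=\sum_{g=1}^{n_l}\sum_{k=0}^{K-1}A^k\,\Phi_g(h,A,\widetilde f)$, we get $\widetilde\Phi(h,A_n,f)=\rho(u)$ and $\Phi(h,A,\widetilde f)=\rho(v)$ (the convolution coefficients, which are $\le 1$ in absolute value under Assumption~\ref{assumption:lipschitz_activation}, can be carried through unchanged and only shrink the final bound, so I suppress them as in the lemma statement). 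Since $\rho$ is $1$-Lipschitz, $\|\widetilde\Phi(h,A_n,f)-\Phi(h,A,\widetilde f)\|_2^2\le\|u-v\|_2^2$.

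Next I would write $u-v=\sum_{g,k}\big(\widetilde{A_n^k\,\Phi_g(h,A_n,f)}-A^k\,\Phi_g(h,A,\widetilde f)\big)$, a sum of $n_l K$ summands, and apply $\|\sum_{i=1}^N w_i\|_2^2\le N\sum_i\|w_i\|_2^2$ with $N=n_lK$ to get $\|u-v\|_2^2\le n_lK\sum_{g,k}\|\widetilde{A_n^k\,\Phi_g(h,A_n,f)}-A^k\,\Phi_g(h,A,\widetilde f)\|_2^2$. For each fixed $g$, I apply Lemma~\ref{lemma:recurrent1} with $f_1=\Phi_g(h,A_n,f)\in\mathcal{F}_n$ and $f_2=\Phi_g(h,A,\widetilde f)\in\mathcal{F}$: its hypothesis $\|\widetilde{f_1}-f_2\|_2\le M$ is precisely the assumed bound $\|\widetilde{\Phi_g}(h,A_n,f)-\Phi_g(h,A,\widetilde f)\|_2<M$, so each summand is at most $3^{k+1}kC_c^2C_A^{2k}n^{-2}+3^kC_A^{2k}M$ (and the $C_c+1$ or $C_v+1$ variants under Assumption~\ref{assumption:constant_to_lipschitz_whp} or \ref{assumption:lipschitz_to_lipschitz_whp}). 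Now the constant bookkeeping: for $0\le k\le K-1$ one has $3^{k+1}k\le 3^KK$, $3^k\le 3^K$, and $C_A^{2k}\le C_A^{2K}$ (taking $C_A\ge 1$ without loss of generality, else replace $C_A$ by $\max(C_A,1)$), so each summand is $\le 3^K\big(Kn^{-2}C_c^2C_A^{2K}+C_A^{2K}M\big)$. Summing over the $n_lK$ pairs $(g,k)$ and multiplying by the outer $n_lK$ from the previous step gives $\|\widetilde\Phi(h,A_n,f)-\Phi(h,A,\widetilde f)\|_2^2\le(n_lK)^2\,3^K\big(Kn^{-2}C_c^2C_A^{2K}+C_A^{2K}M\big)=K^2n_l^2\,3^K\big(Kn^{-2}C_c^2C_A^{2K}+C_A^{2K}M\big)$, which is the claim; the other two displayed bounds are identical with $C_c\!+\!1$ or $C_v\!+\!1$ in place of $C_c$, using the corresponding case of Lemma~\ref{lemma:recurrent1}.

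The main point to be careful about is bookkeeping rather than any genuine obstacle: one must feed into Lemma~\ref{lemma:recurrent1} the powers of the \emph{discretized} operator $A_n$ composed with itself $k$ times in $\mathcal{F}_n$ (the object appearing in $\Phi(h,A_n,\cdot)$), which is \emph{not} the discretization $(A^k)_n$ of the $k$-th power, since discretization does not commute with composition --- this accumulated mismatch is exactly what Lemma~\ref{lemma:recurrent1} is built to absorb. One also has to verify the (routine) commutation of the extension operator with $\rho$ and with linear combinations so that the outer layer peels off cleanly, and track that the hidden convolution coefficients, being bounded by $1$, never enlarge the estimate.
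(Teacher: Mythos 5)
Your proposal is correct and follows essentially the same route as the paper's proof: peel off the $1$-Lipschitz $\rho$ after observing that extension commutes with pointwise sums and $\rho$, apply the $\ell_1$-to-$\ell_2$ bound $\|\sum_{i=1}^N w_i\|_2^2\le N\sum_i\|w_i\|_2^2$ with $N=n_lK$, invoke Lemma~\ref{lemma:recurrent1} term-by-term, bound each summand by its worst case over $k\le K-1$, and sum. Your explicit note that $C_A^{2k}\le C_A^{2K}$ needs $C_A\ge 1$ (else replace by $\max(C_A,1)$) is a small clarification the paper leaves implicit, but otherwise the decomposition and bookkeeping are identical.
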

\begin{proof}
  We have:
  \begin{align}
    &\|\widetilde{\Phi}(h, A_n, f) - \Phi(h, A, \tilde{f})\|_2^2\\
    = &\sum_{u \in  \frac{1}{n}[n]} \int_{u - 1 /n }^u \left(\rho\left( \sum_g \sum_k \widetilde{A_n^k \Phi_g}(h, A_n,f) (x)  \right) -  \rho\left( \sum_g \sum_k A^k \Phi_g(h, A,\tilde{f}) (x)  \right)\right)^2 \D \lambda(x)\\
    \leq &\sum_{u \in  \frac{1}{n}[n]} \int_{u - 1 /n }^u \left(\sum_g \sum_k \widetilde{A_n^k \Phi_g}(h, A_n,f) (x) -   A^k \Phi_g(h, A,\tilde{f}) (x)\right)^2 \D \lambda(x)\\
    \leq &\sum_{u \in  \frac{1}{n}[n]} \int_{u - 1 /n }^u \left(\sum_g \sum_k \left|\widetilde{A_n^k \Phi_g}(h, A_n,f) (x) -   A^k \Phi_g(h, A,\tilde{f}) (x)\right|\right)^2 \D \lambda(x)\\
    \leq &K n_l \sum_{u \in  \frac{1}{n}[n]} \int_{u - 1 /n }^u \sum_g \sum_k \left(\left|\widetilde{A_n^k \Phi_g}(h, A_n,f) (x) -   A^k \Phi_g(h, A,\tilde{f}) (x)\right|\right)^2 \D \lambda(x)\\
    = & K n_l \sum_g \sum_k \|\widetilde{A_n^k \Phi_g}(h, A_n,f) (x) -   A^k \Phi_g(h, A,\tilde{f}) (x)\|_2^2\\
  \leq &K n_l \sum_g \sum_k \frac{3^{k + 1}kC_c^2 C_A^{2k}}{n^2} + 3^k C_A^{2k} \|\widetilde{\Phi_g}(h, A_n, f) - \Phi_g(h, A, \tilde{f})\|_2^2\\
  \leq &n^{-2}n_L^2 \cdot K^3 \cdot C_c^2 \cdot 3^{K} C_A^{2K} + K^2 \cdot n_l^2 \cdot 3^K C_A^{2K}\cdot M \\
  = &K^2 n_l^2 3^{K} \left(Kn^{-2} C_c^2 C_A^{2K} + C_A^{2K} M \right),
  \end{align}
  where in the first line, we use the fact that continuous extension commutes with finite element-wise sum and element-wise application of $\rho$, while expanding  $L^2$ norm; the second line uses Lipschitz property of  $\rho$; the third line uses triangle inequality; the fourth line uses equivalence of  $p$-norms in finite dimensional vectors; the fifth line is again  $L^2$ norm definition; the sixth line applies Lemma \ref{lemma:recurrent1} and the rest is algebra.

  To get the rest of the cases, applies different versions of Lemma \ref{lemma:recurrent1}.
\end{proof}

\begin{lemma}\label{lemma:recurrent3}
  Let $\Phi$ be an  $L$-layer graphop neural network in the same setting as Lemma \ref{lemma:recurrent1}. If $A$ satisfies Assumption \ref{assumption:constant_to_lipschitz} with constant $C_c$ then:
   \begin{equation}
   \|\widetilde{\Phi}(h, A_n, f) - \Phi(h, A, \tilde{f})\|_2 \leq n^{-1}(3^K K n_{\max}  C_A^{K})^L \max\left(C_v, 3^K K^2 C_c n_{\max} C_A^{K}\right) .
  \end{equation}
  If instead $A$ satisfies Assumption \ref{assumption:constant_to_lipschitz_whp} with constant $C_c$ then the bound becomes:
 \begin{equation}
    \|\widetilde{\Phi}(h, A_n, f) - \Phi(h, A, \tilde{f})\|_2 \leq n^{-1}(3^K K n_{\max}  C_A^{K})^L \max\left(C_v, 3^K K^2 (C_c + 1) n_{\max} C_A^{K}\right) .
  \end{equation}
    If instead $A$ satisfies Assumption \ref{assumption:lipschitz_to_lipschitz_whp} then the bound becomes:
     \begin{equation}
        \|\widetilde{\Phi}(h, A_n, f) - \Phi(h, A, \tilde{f})\|_2 \leq n^{-1}(3^K K n_{\max}  C_A^{K})^L \max\left(C_v, 3^K K^2 (C_v + 1) n_{\max} C_A^{K}\right) .
  \end{equation}
\end{lemma}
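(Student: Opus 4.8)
The plan is to unroll the graphop neural network layer by layer and feed the single‑layer estimate of Lemma~\ref{lemma:recurrent2} into itself once per layer. Write $X_\ell := X_\ell(h,A,\tilde f)\in(\mathcal F)^{n_\ell}$ for the layer‑$\ell$ features of the graphop network applied to $\tilde f$, $Y_\ell := X_\ell(h,A_n,f)\in(\mathcal F_n)^{n_\ell}$ for those of its discretization applied to $f$, and set $M_\ell := \max_{g\in[n_\ell]}\|\widetilde{[Y_\ell]_g}-[X_\ell]_g\|_2^2$ (reading the constant $M$ in Lemma~\ref{lemma:recurrent2} as a bound on the \emph{squared} distance, consistently with its proof). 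For the base case I would note that at the input layer $\widetilde{[Y_0]_1}$ and $[X_0]_1$ agree up to the discretization error of the input signal: in the intended application the graphop network is fed a $C_v$‑Lipschitz witness $g$ and the GNN its average discretization, so $M_0\le\|\widetilde{g'}-g\|_2^2\le C_v^2/n^2$ (from the estimate already used in the proof of Theorem~\ref{theorem:approximation_via_discretization}); this is the term tracked by the ``$C_v$'' inside the final $\max$.

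For the inductive step, fix $\ell$ and an output feature $i\in[n_{\ell+1}]$. By Eqn~\eqref{eqn:graphopnn}, $[X_{\ell+1}]_i=\rho\bigl(\sum_{g=1}^{n_\ell}\sum_{k=0}^{K-1}h^{\ell+1}_{i,g,k}A^k[X_\ell]_g\bigr)$, and under Assumption~\ref{assumption:lipschitz_activation} every filter entry obeys $|h^{\ell+1}_{i,g,k}|\le1$. Since multiplying a summand by a scalar of modulus $\le1$ can only shrink its mismatch $\|\widetilde{A_n^k[Y_\ell]_g}-A^k[X_\ell]_g\|_2$ and, by Lemma~\ref{lemma:regularity_of_assumptions}, preserves Assumption~\ref{assumption:lipschitz_map} together with whichever of Assumptions~\ref{assumption:constant_to_lipschitz}, \ref{assumption:constant_to_lipschitz_whp}, \ref{assumption:lipschitz_to_lipschitz_whp} is in force, this layer is dominated by the normalized layer to which Lemma~\ref{lemma:recurrent2} applies with $\Phi_g=[X_\ell]_g$ and $M=M_\ell$. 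That yields the linear recursion
\begin{equation}
  M_{\ell+1}\ \le\ K^2 n_{\max}^2 3^K\Bigl(K n^{-2}C_c^2 C_A^{2K}+C_A^{2K}M_\ell\Bigr)\ =:\ a\,M_\ell+b,
\end{equation}
with $a=K^2 n_{\max}^2 3^K C_A^{2K}$ and $b=K^3 n_{\max}^2 3^K C_c^2 C_A^{2K}n^{-2}$. Iterating $L$ times gives $M_L\le a^L M_0+b\sum_{j=0}^{L-1}a^j\le a^L\!\left(M_0+b\right)\le a^L\bigl(C_v^2+K C_c^2\bigr)n^{-2}$ (the middle step using $a\ge1$ and $\sum_{j<L}a^j\le L a^{L-1}\le a^L$ after enlarging constants), and taking square roots with $a^{L/2}=(K n_{\max}3^{K/2}C_A^K)^L\le(3^K K n_{\max}C_A^K)^L$ and $\sqrt{C_v^2+K C_c^2}\le C_v+\sqrt K C_c\le 2\max\!\bigl(C_v,\,3^K K^2 C_c n_{\max}C_A^K\bigr)$ produces exactly the first displayed bound. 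The remaining two cases are word‑for‑word identical, invoking instead the variant of Lemma~\ref{lemma:recurrent2} proved under Assumption~\ref{assumption:constant_to_lipschitz_whp} (resp.\ Assumption~\ref{assumption:lipschitz_to_lipschitz_whp}), which replaces $C_c$ by $C_c+1$ (resp.\ $C_v+1$) throughout $b$; propagating that substitution through the same recursion gives the stated constants.

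The hard part — and the reason one must route everything through Lemmas~\ref{lemma:recurrent1} and \ref{lemma:recurrent2} rather than through closure of the assumptions under composition — is justifying that composing $L$ operators does not blow up the $O(1/n)$ scale. A priori, composing operators that only map constant pieces to \emph{Lipschitz} (not constant) pieces is not covered by Lemma~\ref{lemma:regularity_of_assumptions}(4), which needs the inner operator to be constant‑to‑constant. What saves the argument is the structural fact already exploited inside Lemma~\ref{lemma:recurrent1}: at every depth the discretized iterate $A_n^k[Y_\ell]_g$ lies in $\mathcal F_n$, so its extension is piecewise constant on the $1/n$‑grid and the constant‑to‑Lipschitz hypothesis can be re‑applied, while the Lipschitz constant $C_A$ from Assumption~\ref{assumption:lipschitz_map} controls the parallel continuous‑side iterate $A^k[X_\ell]_g$. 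Granting Lemmas~\ref{lemma:regularity_of_assumptions}, \ref{lemma:recurrent1} and \ref{lemma:recurrent2}, the only real work left is the bookkeeping of the geometric recursion above.
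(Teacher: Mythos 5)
Your proof follows exactly the paper's own route: the paper's stated proof of Lemma~\ref{lemma:recurrent3} is literally ``Solve the recurrent in Lemma~\ref{lemma:recurrent2},'' and your geometric iteration $M_{\ell+1}\le aM_\ell+b$ with base case $M_0\le C_v^2/n^2$ is that solution carried out explicitly, together with the correct reading of $M$ in Lemmas~\ref{lemma:recurrent1}--\ref{lemma:recurrent2} as a bound on the \emph{squared} $L^2$ distance (matching their proofs, not their stated hypotheses) and the correct observation that $|h|\le1$ lets one drop the filter coefficients via the triangle inequality. One line is arithmetically loose --- $a^L(M_0+b)\le a^L(C_v^2+KC_c^2)n^{-2}$ silently drops the factor $a=3^KK^2n_{\max}^2C_A^{2K}$ buried in $b$; the clean reduction is $b/a=KC_c^2n^{-2}$, giving $M_L\le a^L n^{-2}\bigl(C_v^2+LKC_c^2\bigr)$ --- but the claimed display still follows because the target bound writes $3^K$ where $a^{1/2}$ contributes only $3^{K/2}$, leaving a spare factor $3^{KL/2}$ that absorbs both the missing $a$ and the stray $\sqrt{LK}$.
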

\begin{proof}
  Solve the recurrent in Lemma \ref{lemma:recurrent2}.
\end{proof}

 We now prove a more general version of Theorem \ref{theorem:graphopnn_discretization}
\begin{theorem}\label{theorem:general_graphopnn}
  Let $A \in \mathcal{F}$ satisfying Assumption \ref{assumption:lipschitz_map} with constant $C_A$ and Assumption \ref{assumption:constant_to_lipschitz} with constant  $C_c$ and resolution set  $ \mathcal{N} \subseteq \mathbb{N}$. Let $n \in \mathcal{N}$ and form the discretization $A_n$ as per Theorem \ref{theorem:approximation_via_discretization}. Let  $h$ be normalized such that  $|h| \leq 1$ element-wise and form the graphop neural network $\Phi(h,A,\cdot): \mathcal{F} \to \mathcal{F}$ and $\Phi(h,A_n,\cdot): \mathcal{F}_n \to \mathcal{F}_n$. We have the following approximation bound:
  \begin{equation}
    d_M(\Phi(h,A,\cdot), \Phi(h,A_n,\cdot)) \leq n^{-1/2} P_1\sqrt{\overline{C_A} C_v + C_c   n_{\max} C_A^{K}\cdot P_2} + C_v n^{-1}
  \end{equation}
  where $P_1 = 3^{KL}$ and $P_2 = 3^K K^2$. 

If instead $A$ satisfies Assumption \ref{assumption:constant_to_lipschitz_whp} with constant $C_c$ then the bound becomes:
 \begin{equation}
 d_M(\Phi(h,A,\cdot), \Phi(h,A_n,\cdot)) \leq n^{-1/2} P_1\sqrt{\overline{C_A} C_v + (C_c+1)   n_{\max} C_A^{K}\cdot P_2} + (C_v + 1) n^{-1}
  \end{equation}
    If instead $A$ satisfies Assumption \ref{assumption:lipschitz_to_lipschitz_whp} then the bound becomes:
     \begin{equation}
     d_M(\Phi(h,A,\cdot), \Phi(h,A_n,\cdot)) \leq n^{-1/2} P_1\sqrt{\overline{C_A} C_v} + (C_v + 1) n^{-1}
  \end{equation}

\end{theorem}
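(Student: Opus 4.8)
The plan is to follow the template of the proof of Theorem~\ref{theorem:approximation_via_discretization}, with the operator $A$ replaced by the nonlinear map $\Phi(h,A,\cdot)$ and the discretization error tracked through the $L$ layers and order-$K$ convolutions via Lemmas~\ref{lemma:regularity_of_assumptions}--\ref{lemma:recurrent3}. First I would fix $k\in\mathbb{N}$ and bound $d_H\big(\mathcal{S}_{k,C_v}(\Phi(h,A,\cdot)),\mathcal{S}_{k,C_v}(\Phi(h,A_n,\cdot))\big)$, after which summing $\sum_{k}2^{-k}(\cdot)$ yields the bound on $d_M$. For the $\sup$--$\inf$ direction, fix an arbitrary $\overline\eta\in\mathcal{S}_{k,C_v}(\Phi(h,A,\cdot))$ witnessed by a tuple $F=(f_1,\dots,f_k)$ with $f_j\in\mathcal{F}_{\mathrm{reg}(C_v)}$, discretize each $f_j$ to $f'_j\in\mathcal{F}_{n,\mathrm{reg}(C_v)}$ via~\eqref{eqn:discretization} (well-defined by Lemma~\ref{lemma:restriction_of_bounded_range_is_bounded}), and set $\overline\eta_n=\mathcal{D}_{\Phi(h,A_n,\cdot)}(F')$. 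Exactly as in Theorem~\ref{theorem:approximation_via_discretization}, for a threshold $\eps$ it then suffices to make small the measures of $\mathcal{E}_j(\eps)=\{x:|f_j(x)-\widetilde{f'_j}(x)|>\eps/\sqrt{2k}\}$ (input coordinates) and $\mathcal{E}_{j,\Phi}(\eps)=\{x:|\Phi(h,A,f_j)(x)-\widetilde{\Phi(h,A_n,f'_j)}(x)|>\eps/\sqrt{2k}\}$ (output coordinates), whose union contains the set where the $2k$-dimensional sample vectors of $\overline\eta$ at $x$ and of $\overline\eta_n$ at $y_x$ differ by more than $\eps$ in Euclidean norm, and hence controls both $\overline\eta(U)-\overline\eta_n(U^\eps)$ and $\overline\eta_n(U)-\overline\eta(U^\eps)$. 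The input sets are handled as before: $\|f_j-\widetilde{f'_j}\|_2\le C_v/n$, so $\mathcal{E}_j(\eps)$ is null once $\eps>\sqrt{2k}\,C_v/n$.

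The new ingredient is $\lambda(\mathcal{E}_{j,\Phi}(\eps))$, where it is essential not to replace $\Phi(h,A_n,\cdot)$ by the nonlinear discretization $(\Phi(h,A,\cdot))_n$. I would insert $\Phi(h,A,\widetilde{f'_j})$ and use the triangle inequality: $\|\Phi(h,A,f_j)-\widetilde{\Phi(h,A_n,f'_j)}\|_2 \le \|\Phi(h,A,f_j)-\Phi(h,A,\widetilde{f'_j})\|_2 + \|\Phi(h,A,\widetilde{f'_j})-\widetilde{\Phi(h,A_n,f'_j)}\|_2$. The first term is $\le \overline{C}_A\,\|f_j-\widetilde{f'_j}\|_2 \le \overline{C}_A\,C_v/n$, where $\overline{C}_A=(n_{\max}\sum_{i=1}^K C_A^i)^L$ is the global Lipschitz constant of the network obtained by iterating Lemma~\ref{lemma:regularity_of_assumptions} (parts 1--4: additions, scalings, powers, and composition with the $1$-Lipschitz $\rho$). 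The second term is precisely Lemma~\ref{lemma:recurrent3} with input $f=f'_j$ (so $\tilde f=\widetilde{f'_j}$), bounding it by $n^{-1}(3^KKn_{\max}C_A^K)^L\max\!\big(C_v,\,3^KK^2C_c n_{\max}C_A^K\big)$ under Assumption~\ref{assumption:constant_to_lipschitz}, and by the analogous expression with $C_c$ replaced by $C_c+1$ (resp.\ $C_v+1$) under Assumption~\ref{assumption:constant_to_lipschitz_whp} (resp.\ \ref{assumption:lipschitz_to_lipschitz_whp}). Hence $\|\Phi(h,A,f_j)-\widetilde{\Phi(h,A_n,f'_j)}\|_1 \le \|\Phi(h,A,f_j)-\widetilde{\Phi(h,A_n,f'_j)}\|_2$ is $O(1/n)$ with these constants, and Markov's inequality gives $\lambda(\mathcal{E}_{j,\Phi}(\eps)) \le (\sqrt{2k}/\eps)\,O(1/n)$. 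Choosing $\eps$ of order $k^{3/4}n^{-1/2}P_1\sqrt{\overline{C}_A C_v + C_c n_{\max}C_A^K P_2}$ with $P_1=3^{KL}$ and $P_2=3^KK^2$ makes each $\lambda(\mathcal{E}_{j,\Phi}(\eps))\le\eps/(2k)$; adding the requirement $\eps>\sqrt{2k}\,C_v/n$ from the inputs (and the $\lambda(E)<1/n$ slack that produces the extra ``$+1$'' in the two whp cases) yields $d_{LP}(\overline\eta,\overline\eta_n)\le\overline\eps$ with $\overline\eps$ linear in $k$.

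The reverse direction $\sup_{\overline\eta_n}\inf_{\overline\eta}$ is symmetric: given a witness $F$ of $\overline\eta_n\in\mathcal{S}_{k,C_v}(\Phi(h,A_n,\cdot))$ with $f_j\in\mathcal{F}_{n,\mathrm{reg}(C_v)}$, I would extend by piecewise-linear interpolation to $f'_j\in\mathcal{F}_{\mathrm{reg}(C_v)}$ (Lemma~\ref{lemma:extension_of_bounded_range_is_bounded}), use $\|\widetilde{f_j}-f'_j\|_2\le C_v/n$, and run the same triangle-inequality split with Lemma~\ref{lemma:recurrent3} applied to $f=f_j\in\mathcal{F}_n$. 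This gives $d_H(\cdot,\cdot)\le\overline\eps$ for every $k$ with $\overline\eps$ linear in $k$, so $\sum_k 2^{-k}\overline\eps$ converges to a quantity of the claimed form (the polynomial-in-$k$ Markov factors absorbed into $P_1,P_2$). The hard part will be the middle step: ensuring the discretization mismatch does not compound multiplicatively as it passes through $L$ layers of order-$K$ convolutions, which is exactly why Lemmas~\ref{lemma:recurrent1}--\ref{lemma:recurrent3} are phrased as recurrences; the genuinely delicate point there is that $A$ must map constant pieces to constant (or at least Lipschitz) pieces at the relevant resolution, so that the intermediate piecewise-constant extensions $\widetilde{A_n^k f}$ remain controlled under further applications of $A$. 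Everything else is bookkeeping of the constants $\overline{C}_A$, $P_1$, and $P_2$.
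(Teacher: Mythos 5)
Your proposal matches the paper's proof in structure and all essential ingredients: fix $k$, discretize the witnesses $f_j\mapsto f'_j$, split the $d_{LP}$ control into input events $\mathcal{E}_j^0$ and output events $\mathcal{E}_j^1$, and use Cauchy--Schwarz plus the recurrence Lemmas~\ref{lemma:recurrent1}--\ref{lemma:recurrent3} to control $\|\Phi(h,A,f_j)-\widetilde{\Phi(h,A_n,f'_j)}\|_2$ by $O(1/n)$, then Markov's inequality and the $\sum_k 2^{-k}$ sum. The one place where you diverge is minor but worth flagging: the paper invokes Lemma~\ref{lemma:recurrent3} \emph{directly} on $\|\Phi f_j - \widetilde{\Phi_n}f'_j\|_2$, implicitly absorbing the mismatch $f_j\neq\widetilde{f'_j}$ into the nonzero base case $M_0\le C_v^2/n^2$ of the recurrence of Lemma~\ref{lemma:recurrent1} (which is where the $C_v$ in the $\max$ of Lemma~\ref{lemma:recurrent3} comes from), whereas you make this explicit by first applying the triangle inequality through the intermediate point $\Phi(h,A,\widetilde{f'_j})$, bounding one piece by the global Lipschitz constant $\overline{C}_A$ of the network and the other by Lemma~\ref{lemma:recurrent3} with a zero base case. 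The two bookkeeping conventions are equivalent up to constants; your explicit split is arguably the cleaner reading of how $C_v$ enters the layer recursion, and it also matches the remark in Section~\ref{section:graphopnn_theorems} that it would \emph{not} suffice to compare $\Phi(h,A,\cdot)$ with $(\Phi(h,A,\cdot))_n$. Everything else, including the reverse $\sup$--$\inf$ direction via piecewise-linear extension (Lemma~\ref{lemma:extension_of_bounded_range_is_bounded}) and the three cases for Assumptions~\ref{assumption:constant_to_lipschitz}, \ref{assumption:constant_to_lipschitz_whp}, \ref{assumption:lipschitz_to_lipschitz_whp}, is the same as the paper.
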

\begin{proof}
  The proof structure is similar to the proof of Theorem \ref{theorem:approximation_via_discretization}. For brevity of exposition, we will write $h$ as the (shared) set of parameters in the graphop neural networks and shorten  $\Phi := \Phi(h,A,\cdot)$ and  $\Phi_n := \Phi(h,A_n, \cdot)$. Fix $k \in \mathbb{N}$, we will bound  $\sup_{\eta \in \mathcal{S}_{k,C_v}(\Phi)} \inf_{\overline{\eta}_n \in \mathcal{S}_{k,C_v}(\Phi_n)} d_{LP}(\overline{\eta}, \overline{\eta}_n)$. 

  Fix arbitrary $\overline{\eta} \in \mathcal{S}_{k,C_v}(\Phi)$. By definition of a  $(k,C_v)$-profile, there exists  $f_1, \ldots, f_k \in L^\infty_{\text{reg}(C_v)}([0,1])$ such that $\overline{\eta} = \mathcal{D}_{\Phi}(f_1,\ldots,f_k)$. For all $j \in [k]$, let $\mathcal{F}_n \ni f'_j : [n] / n \ni u \mapsto \int_{u - 1 / n}^u f_j(z) \D \lambda(z)$. That $f'_j \in L_{\text{reg}(C_v)}^\infty([n] / n)$ is shown in Lemma \ref{lemma:restriction_of_bounded_range_is_bounded}. Set $\overline{\eta}_n = \mathcal{D}_{\Phi_n}(f_1',\ldots,f_k')$.

  \paragraph{Bounding $d_{LP}(\overline{\eta},\overline{\eta}_n)$.} Fix some $\eps > 0$ to be specified later, by definition of $d_{LP}$, we need to bound, for each $U \in \mathcal{B}_k$:
  \begin{align}
    \overline{\eta}(U) - \overline{\eta}_n(U^\eps) = \int_{0}^1 \mathbbm{1}_{(f_1(x),\ldots,\Phi f_k(x)) \in U} \D \lambda(x) - \sum_{u \in [n] / n} \frac{1}{n} \mathbbm{1}_{(f'_1(u),\ldots,\Phi_nf'_k(u)) \in U^\eps}.
  \end{align}

  Notice that since $\lambda$ is the Lebesgue measure,  one can denote  $y_x = \left\lceil xn \right\rceil / n$ and write:
  \begin{align}
    \overline{\eta}(U) - \overline{\eta}_n(U^\eps) = \int_{0}^1 \mathbbm{1}_{(f_1(x),\ldots,\Phi f_k(x)) \in U} - \mathbbm{1}_{(f'_1(y_x), \ldots, \Phi_n f'_k(y_x)) \in U^\eps} \D \lambda(x).
  \end{align}

  Since the integrand is only positive when $(f_1(x),\ldots,\Phi f_k(x)) \in U$ and $(f'_1(y_x),\ldots,\Phi_n f'_k(y_x)) \not \in U^\eps$ and this conjuction only happens when $\|(f_1(x),\ldots,\Phi f_k(x)) - (f'_1(y_x),\ldots,\Phi_n f'_k(y_x))\|_2 > \eps$, we have:
  \begin{align}
    \overline{\eta}(U) - \overline{\eta}_n(U^\eps) \leq \lambda(\mathcal{E}(\eps)) \leq \sum_{j = 1}^k \lambda(\mathcal{E}_j^0(\eps)) + \lambda(\mathcal{E}_j^1(\eps)),
  \end{align}where
  \begin{align}
    \mathcal{E}(\eps) &:= \{x : \|(f_1(x),\ldots,\Phi f_k(x)) - (f'_1(y_x),\ldots,\Phi_n f'_k(y_x))\|_2 > \eps\},\\
    \mathcal{E}_j^z(\eps) &= \{x : |\Phi^z f_j(x) - \Phi^z_n f'_j(y_x)| > \frac{\eps}{\sqrt{2k}}\}, \text{ for } z \in \{0,1\}, j \in [k],
  \end{align} where $B^0$ is the identity operator for any operator  $B$ in the appropriate space.

  \paragraph{Bounding $\lambda(\mathcal{E}^1_j(\eps))$.} Fix $j \in [k]$, we have:
\begin{align}
&\int_0^1 |\Phi f_j(x) -  \Phi_n f'_j (y_x)| \D x \leq \sqrt{\int_0^1 \left(\Phi f_j(x) - \Phi_n f'_j (y_x)\right)^2} \D x\\
= &\|\Phi f_j - \widetilde{\Phi_n}f'_j\|_{L^2}\\
\leq &n^{-1}(3^K K n_{\max}  C_A^{K})^L \max\left(C_v, 3^K K^2 C_c n_{\max} C_A^{K}\right) .
\end{align}
where the last line uses Lemma \ref{lemma:recurrent3} under Assumption \ref{assumption:constant_to_lipschitz} with constant $C_c$. Similar results are obtained for the other assumptions.

We also have:
\begin{equation}
 \int_0^1 |\Phi f_j(x) -  \Phi_n f'_j (y_x)| \D x \geq \lambda(\mathcal{E}_j^1(\eps)) \cdot \frac{\eps}{\sqrt{2k}} + 0. 
\end{equation}

Thus selecting 
\begin{equation}
    \eps > \sqrt{2k\sqrt{2k}\cdot n^{-1}(3^K K n_{\max}  C_A^{K})^L \max\left(C_v, 3^K K^2 C_c n_{\max} C_A^{K}\right)} 
\end{equation}
makes $\lambda(\mathcal{E}_j^1(\eps)) < \frac{\eps}{2k}$.
\end{proof} 

\paragraph{Bounding $\lambda(\mathcal{E}_j^0(\eps))$.} This step simply uses Lipschitzness of the graph signal to bound its discretization and is thus identical to that in Theorem \ref{theorem:approximation_via_discretization}. In short, choosing $\eps > \sqrt{2k} C_v / n$ gives $\lambda(\mathcal{E}_j^0 (\eps)) = 0$. 

\paragraph{Bounding $\sup_{\eta_n \in \mathcal{S}_{k, C_v}(\Phi_n)} \inf_{\eta \in \mathcal{S}_{k, C_v}(\Phi)}d_{LP}(\eta,\eta_n)$. } This direction is almost the same as the previous bounds, just as in Theorem \ref{theorem:approximation_via_discretization}. We will provide the complete proof in a later version of the paper but the same choice of $\eps$ works to bound this side of the Hausdorff metric.

\paragraph{Putting everything together}
Defining $\overline{C_A} = (n_{\text{max}} K C_A^K)^L$ gives a choice of:
\begin{equation}
    \overline{\eps} = n^{-1/2} P_1\sqrt{\overline{C_A} C_v + C_c   n_{\max} C_A^{K}\cdot P_2} + C_v n^{-1}
\end{equation}
where $P_1 = 3^{KL}$ and $P_2 = 3^K K^2$. 

For other choices of assumptions, we obtain similar bounds.

\section{Additional results}
\subsection{Proof of Theorem \ref{theorem:growing_profiles}}
Because of completeness of $d_H$ in the space of closed subsets of  $\mathcal{P}(\R^{2k})$ for every $k \in \mathbb{N}$, the statement in Theorem \ref{theorem:growing_profiles} is equivalent to showing that for each $k \in \mathbb{N}$, we have $\mathcal{S}_{k,L(n)}(A_n)$ converges to $\mathcal{S}_k(A)$ in $d_H$. We do this via a mollificaition  argument.
\begin{definition}[Lipschitz mollifier]
  A \emph{Lipschitz mollifier} in $\R$ is a smooth (infinitely differentiable) function  $\phi: \R\to \R$ satisfying:
  \begin{enumerate}
    \item $\int_{\R} \phi(x) \D \lambda(x) = 1$.
    \item $\lim_{\eps \to 0} \phi_\eps(x) := \lim_\eps \eps^{-1} \phi(x / \eps) = \delta(x)$ - the Dirac function. 
    \item Although not standard, we require $\phi$  to be $1$-Lipschitz and symmetric around $0$ ($\phi(x) = \phi(-x)$).
  \end{enumerate}

  Given a measureable function $f \in L^\infty_{[-1,1]}(\R)$, defines the convolution operation:
  \begin{equation}
    f \ast \phi: x \to \int_{\R} f(y) \phi(y - x) \D \lambda(y).
  \end{equation}
\end{definition}

The next result shows the existence of such a function:
\begin{lemma}
  Let $\phi: \R \to \R$ be:
  \begin{equation}
    \phi(x) = \begin{cases}
      e^{-(1 - x^2)^2} / Z&\text{ if } |x| \leq 1\\
      0 &\text{ otherwise,}
    \end{cases}
  \end{equation}
  where $Z$ is a normalization constant to make sure that  $\int_{\R} \phi \D \lambda = 1$. Then $\phi$ is a Lipschitz mollifier.
\end{lemma}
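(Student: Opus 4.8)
The plan is to verify, one at a time, the three conditions in the definition of a Lipschitz mollifier. Write $\psi(x):=e^{-(1-x^2)^2}\mathbbm{1}_{\{|x|<1\}}$ and $Z:=\int_{\R}\psi\,\D\lambda$, so that $\phi=\psi/Z$; here the exponent is to be read in the usual bump-function sense, so that $\psi$ together with all of its derivatives tends to $0$ as $|x|\to 1^-$ (otherwise $\phi$ would have a jump at $\pm 1$ and could be neither smooth nor Lipschitz). The normalization is then immediate: $\psi$ is continuous, nonnegative, bounded, and strictly positive on $(-1,1)$, hence $Z\in(0,\infty)$, $\phi\ge 0$, and $\int_{\R}\phi\,\D\lambda=1$, which is property (1). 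Symmetry is equally immediate, since $\psi$ depends on $x$ only through $x^2$, so $\phi(-x)=\phi(x)$, giving the symmetry half of property (3).

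Next I would establish the approximate-identity property (2). Since $\phi_\eps\ge 0$, $\int_{\R}\phi_\eps\,\D\lambda=1$, and $\mathrm{supp}\,\phi_\eps\subseteq[-\eps,\eps]$, writing $g(x)=\int g(x)\phi_\eps(y)\,\D\lambda(y)$ gives, for every bounded continuous $g$ and every $x$,
\[
\Bigl|\int_{\R} g(x+y)\phi_\eps(y)\,\D\lambda(y)-g(x)\Bigr|\;\le\;\sup_{|y|\le\eps}\bigl|g(x+y)-g(x)\bigr|\;\xrightarrow{\;\eps\to 0\;}\;0,
\]
by continuity of $g$ at $x$; this is exactly the statement that $\phi_\eps\to\delta$ in the distributional sense. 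For smoothness, away from $x=\pm 1$ the function $\phi$ is either locally constant or a composition of $C^\infty$ maps, so the only point that needs work is $C^\infty$-regularity at $x=\pm 1$; I would deduce this from the classical lemma that $t\mapsto e^{-1/t}\mathbbm{1}_{\{t>0\}}$ is $C^\infty$ on $\R$ with every derivative vanishing at $0$, since each derivative of $\phi$ near $\pm 1$ is a finite sum of rational functions of $x$ multiplied by $\psi(x)$, and the exponential factor vanishes faster than any rational function blows up.

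Finally, for the Lipschitz requirement, differentiate: on $(-1,1)$, $\phi'=Z^{-1}\psi'$ is continuous, $\psi'$ extends continuously to all of $\R$ by $0$ outside $[-1,1]$ and so has compact support and is bounded, whence $\phi$ is globally Lipschitz with $\|\phi'\|_\infty=Z^{-1}\sup_{(-1,1)}|\psi'|$, the latter supremum being an explicit number obtained by a one-variable maximization (substitute $u=x^2$). The delicate point — and what I expect to be the main obstacle alongside the endpoint regularity — is that the probabilistic normalization does not by itself force this ratio to be $\le 1$. The clean way around this is to replace $\phi$ by a dilate $\phi_R(x)=R^{-1}\phi(x/R)$ with $R$ large enough that $R^{-2}\|\phi'\|_\infty\le 1$: dilation preserves nonnegativity, normalization, symmetry, and smoothness, and preserves the approximate-identity property because $(\phi_R)_\eps=\phi_{R\eps}$ and $R\eps\to 0$. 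Since the definition of a Lipschitz mollifier does not constrain the radius of the support, $\phi_R$ is the object one actually uses, and the only genuinely technical step that remains is the $C^\infty$-regularity at $\pm 1$ sketched above.
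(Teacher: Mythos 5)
Your plan follows the same outline as the paper's own terse proof (verify the properties one at a time), but you correctly caught two problems that the paper glosses over. First, the exponent $-(1-x^2)^2$ as displayed tends to $0$, not $-\infty$, as $|x|\to 1$, so the function as written jumps from $1/Z$ to $0$ at $\pm 1$ and is neither smooth nor Lipschitz; the paper's proof simply asserts ``since the function is smooth'' without noticing. Your re-reading in the usual bump-function sense, with the exponent diverging to $-\infty$ at the endpoints, is required before any of the claimed properties can hold. Second, even after that fix, normalizing by $Z$ does not make $\|\phi'\|_\infty\le 1$: your one-variable maximization of $|\phi'|$, combined with a rough estimate of $Z$, gives a Lipschitz constant roughly between $2$ and $3$, so the paper's claim that the derivative ``is bounded in $[-1,1]$'' is unverified and in fact false for the natural reading. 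Your dilation $\phi_R(x)=R^{-1}\phi(x/R)$ repairs this cleanly: it preserves normalization, nonnegativity, symmetry, and smoothness, $(\phi_R)_\eps=\phi_{R\eps}\to\delta$ still holds, and the Lipschitz constant scales by $R^{-2}$; nothing downstream in the proof of Theorem~\ref{theorem:growing_profiles} depends essentially on the support being $[-1,1]$ rather than $[-R,R]$. Your proposal, while taking the same path, is the version that actually works.
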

\begin{proof}
  The first property is built into the definition. The second property is obvious since the support of $\phi(x / \eps)$ is  $[-\eps,\eps]$ and thus it converges to the Dirac function as  $\eps$ goes to  $0$.  Lipschitz-ness can be seen by computing the first derivative (since the function is smooth) over $(-1,1)$ and see that it is bounded in  $[-1,1]$. Symmetry is also obvious since the function depends only on  the absolute value of its argument.
\end{proof}

Consequences of a Lipschitz mollifier include:
\begin{lemma}
  Let $\phi$ be a Lipschitz mollifier and $\eps > 0$, then for any measurable  $f \in \mathcal{F}_{[-1,1]}^\infty(\R)$,  $f \ast \phi_\eps$ is $\max(1, \eps^{-2})$-Lipschitz and $\lim_{\eps \to 0}\|f - f \ast \phi_\eps\|_2 = 0$.
\end{lemma}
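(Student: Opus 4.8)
The plan is to handle the two assertions separately, since each is a classical fact about approximate identities; the work is essentially bookkeeping, with only a small amount of care needed because $f$ is assumed merely bounded (not \emph{a priori} in $L^2$) and because the Lipschitz constant must be pinned down.

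For the Lipschitz bound I would exploit that, writing $f\ast\phi_\eps(x)=\int_\R f(y)\,\phi_\eps(x-y)\,\D\lambda(y)$ (using the symmetry $\phi_\eps(y-x)=\phi_\eps(x-y)$), the variable $x$ enters only through the smooth, compactly supported factor $\phi_\eps$. Since $\phi_\eps'$ is bounded with compact support and $|f|\le 1$ a.e., differentiation under the integral sign is justified and gives $(f\ast\phi_\eps)'(x)=(f\ast\phi_\eps')(x)$, whence $|(f\ast\phi_\eps)'(x)|\le\|f\|_\infty\,\|\phi_\eps'\|_1\le\|\phi_\eps'\|_1$. A change of variables gives $\|\phi_\eps'\|_1=\eps^{-1}\|\phi'\|_1$, and since $\phi$ is $1$-Lipschitz and supported in $[-1,1]$ this is an absolute constant times $\eps^{-1}$; for small $\eps$ this is dominated by $\eps^{-2}$ and for $\eps\ge 1$ it is bounded by a constant, so $f\ast\phi_\eps$ is Lipschitz with constant $\le\max(1,\eps^{-2})$ (the precise absolute constants being a routine check). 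One can also reach the same conclusion more directly from $|\phi_\eps(u)-\phi_\eps(v)|\le\eps^{-2}|u-v|$ together with the fact that the integrand above is nonzero only on an interval of length $O(\eps)$.

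For the $L^2$ convergence I would run the standard mollifier argument: using $\int_\R\phi_\eps=1$ and the substitution $z=\eps w$,
\[
(f\ast\phi_\eps-f)(x)=\int_\R\big(f(x-z)-f(x)\big)\phi_\eps(z)\,\D\lambda(z)=\int_\R\big(f(x-\eps w)-f(x)\big)\phi(w)\,\D\lambda(w),
\]
so by Minkowski's integral inequality $\|f\ast\phi_\eps-f\|_2\le\int_\R\|f(\cdot-\eps w)-f\|_2\,\phi(w)\,\D\lambda(w)$. For each fixed $w$ the inner norm tends to $0$ as $\eps\to 0$ by continuity of translation in $L^2$ (first for continuous compactly supported functions, then for general $f\in L^2$ by density), and the integrand is dominated by $2\|f\|_2\,\phi(w)$, which is integrable because $\int\phi=1$; dominated convergence then gives $\|f\ast\phi_\eps-f\|_2\to 0$.

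I do not expect a genuine obstacle here. The only two points that require attention are: (i) function-space matching — the $L^2$ statement presupposes $\|f\|_2<\infty$, which is why one works with $f$ supported on a set of finite measure, as is automatic in the applications where $f$ witnesses a profile on a probability space; and (ii) checking that the Lipschitz constant can be taken to be exactly $\max(1,\eps^{-2})$, which is the short elementary computation indicated above. Everything else is textbook approximate-identity theory.
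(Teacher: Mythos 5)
Your proof is correct in substance but takes a genuinely different route from the paper on both halves, and it is worth noting where the two approaches diverge and which one buys what.

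For the Lipschitz estimate the paper bounds $|\phi_\eps(z-x)-\phi_\eps(z-y)|$ pointwise by $\eps^{-2}|x-y|$ and then integrates (with an implicit restriction of the integration domain --- the displayed integral $\int_{\R} \eps^{-1}\bigl|\tfrac{z-x}{\eps}-\tfrac{z-y}{\eps}\bigr|\,\D\lambda(z)$ is divergent as written, and the constant really comes from the support, or from treating $\lambda$ as a probability measure). You instead differentiate under the integral sign and bound $\|(f\ast\phi_\eps)'\|_\infty \le \|f\|_\infty\,\|\phi_\eps'\|_1 \le 2\eps^{-1}$. This avoids the paper's divergence issue and actually produces the sharper rate $O(\eps^{-1})$ rather than $\eps^{-2}$. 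However, your last step --- passing from $2\eps^{-1}$ to ``$\le\max(1,\eps^{-2})$ by a routine constant check'' --- is not quite right: for $\eps\in(\tfrac12,2)$ one has $2\eps^{-1}>\max(1,\eps^{-2})$, so the inequality you invoke fails there. The stated lemma only needs $\eps\to 0$, where $2\eps^{-1}\le\eps^{-2}$ does hold, so this is harmless for the application, but it is not a ``routine check'' and should be flagged rather than asserted. (The paper's own proof has essentially the same looseness.)

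For the $L^2$ convergence the paper applies a pointwise Jensen step (with a typo: $(\phi_\eps(z))^2$ should be $\phi_\eps(z)$), then Fubini, a change of variables, and dominated convergence. You instead use Minkowski's integral inequality together with $L^2$-continuity of translation and dominated convergence. These are equivalent in outcome; Minkowski is arguably the cleaner and more standard device for $L^2$ mollifier arguments, and it sidesteps the Jensen bookkeeping entirely. You also correctly flag a point the paper glosses over: $f\in L^\infty$ with range in $[-1,1]$ need not lie in $L^2(\R)$, so the statement implicitly assumes finite-measure support (which is what happens in the application, where $f$ is restricted back to $[0,1]$). That observation is a genuine improvement in precision over the paper's exposition.
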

\begin{proof}
  For the first part of the statement, consider:
  \begin{align}
    \left|f \ast \phi_\eps (x) - f \ast\phi_\eps(y)\right| &= \left| \int_{\R}  f(z) \phi_\eps(z - x) - f(z) \phi(z - y)\D\lambda(z) \right|\\
                                                           &\leq \int_{\R} \left|\frac{f(z)}{\eps} \left( \phi\left(\frac{z-x}{\eps}\right) - \phi\left(\frac{z-y}{\eps}\right) \right)\right|\D \lambda(z) \\
                                                           &\leq \frac{1}{\eps} \int_{\R} \left|\frac{z - x}{\eps} - \frac{z - y}{\eps}\right|\D \lambda(z)\\
                                                           &\leq \frac{|x - y|}{\eps^2}
  \end{align}

  For the second part, we have:
  \begin{align}
    |f \ast \phi_\eps(x) - f(x) | &\leq \int_{\R} \phi_{\eps}(z - x) |f(z) - f(x)| \D \lambda(z)\\
                                  &\leq \int_{\R} \phi_{\eps}(z) |f(z + x) - f(x)| \D \lambda(z),
  \end{align}
  where we use two changes of variables. 

  Square both sides and take integral over  $x$ and use Jensen inequality to get:
  \begin{align}
    \|f \ast \phi_\eps - f\|_2^2 &\leq \int_{\R}\int_{\R} (\phi_{\eps}(z))^2 (f(z + x) - f(x))^2 \D \lambda(z) \D \lambda(x).
  \end{align}

  Apply Fubini-Tonelli theorem to factorize the mollifier, we get:
  \begin{align}
    \|f \ast \phi_\eps - f\|_2^2 &\leq \int_{\R}\int_{\R}  (f(z + x) - f(x))^2 \D \lambda(x) (\phi_{\eps}(z))^2 \D \lambda(z)\\
                                 &= \int_{\R}\left(\eps^{-1}\int_{\R}  (f(z + x) - f(x))^2 \D \lambda(x)\right)  (\phi(z\eps^{-1}))^2 \D \lambda(z).
  \end{align}

  Apply a final change of variable:
  \begin{align}
    \|f \ast \phi_\eps - f\|_2^2 \leq  \int_{\R}\left(\int_{\R}  (f(z\eps + x) - f(x))^2 \D \lambda(x)\right)  (\phi(z))^2 \D \lambda(z).
  \end{align}

  Therefore, as $\eps$ goes to  $0$, the inner integrand goes to  $0$. Since  $f$ and  $f\ast \phi_\eps$ are all bounded (as  $f \in [-1,1]$) for small enough $\eps$, we can apply dominated convergence to conclude that the integral itself goes to  $0$ and thus the  $2$-norm on the left hand side also goes to  $0$.
\end{proof}

We are now ready to proceed with the proof of Theorem \ref{theorem:growing_profiles}. Given an element $\eta \in  \mathcal{S}_k(A)$, there exists a set of functions $F = \{f_1,\ldots,f_k\} \subseteq L^\infty_{[-1,1]}([0,1])$. Each of these functions can be extended to $\R$ by setting $f(x) = f(0)$ if  $x \leq 0$ and  $f(x) = f(1)$ if $x > 1$. Call the extended function  $f'$. Now we can apply mollification convolution to each of them to get a family of functions  $\{f_{j,\eps} := f'_j \ast \phi_\eps\} _{j \in [k], \eps > 0}$. Recall that we have shown $f_{j,n} := f_{j, 1 / \sqrt{L(n)}}$ to be $L(n)$-Lipschitz for each  $n \in \mathbb{N}$.  Let $f''_{j,n}$ be the restriction of  $f_{j,n}$ to $\mathcal{F}_n$. Then $f''_{j,n}$ is still $L(n)$-Lipschitz and thus we can find a profile for  $F''_n := \{f''_{1,n}, \ldots, f''_{k,n}\}$ in $ \mathcal{S}_{k,L(n)}(A_n )$.   

Furthermore, by property of mollifier and the fact that  $L(n) \to \infty$ as  $n$ goes to infinity, we have $\|f_{j, n} - f'_j\|_2 \to 0$ with $n \to \infty$.  Since $f'_j$ is constant outside $[0,1]$, we also have $\|f_{j,n,|[0,1]} - f_j\|_{\mathcal{F}} \to 0$ with $n$. Using the same proof technique as Theorem \ref{theorem:approximation_via_discretization}, we can conclude that:
\begin{equation}
  d_{LP}(\mathcal{D}_{A_n}(F''_n), \mathcal{D}_A(F)) \xrightarrow{n \to \infty} 0,
\end{equation}
and thus $\mathcal{S}_k(A) \subseteq \lim_n \overline{\mathcal{S}_{k,L(n)}(A_n)}$. For the other direction, recall that $\overline{\mathcal{S}_k(A)} = \lim_n \overline{\mathcal{S}_{k}(A_n)}$ and that $\mathcal{S}_{k,L(n)}(A_n) \subseteq \mathcal{S}_{k}(A)$. Together with completeness of $d_H$, we conclude that  $$\lim_n d_H(\mathcal{S}_k(A), \mathcal{S}_{k,L(n)}(A_n)) = 0.$$

\begin{conjecture}[Action convergence of graphop neural networks]\label{theorem:convergence_graphopnn}
  Let $(A_n)_{n \in \mathbb{N}}$ be an action convergent sequence of graphops. Then  $(\Phi(h,A_n,\cdot))_{n \in \mathbb{N}}$ is an action convergent sequence of $P$-operators.
\end{conjecture}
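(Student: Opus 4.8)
The plan is to show that the map $A \mapsto \Phi(h,A,\cdot)$ is continuous from graphops under $d_M$ into (nonlinear) $P$-operators under $d_M$; then $A_n \to A^{\ast}$ forces $\Phi(h,A_n,\cdot) \to \Phi(h,A^{\ast},\cdot)$, where $A^{\ast}$ is the action limit of $(A_n)$. That $A^{\ast}$ exists and is again a graphop follows from Theorem 2.14 of \citet{backhausz_szegedy_2022} (invoking a uniform bound on $\|A_n\|_{2\to 2}$, which holds for graphops coming from graphs and is in any case needed for the limit to be a $P$-operator) together with the closure of the graphop class under action limits. First I would record that $\Phi(h,A^{\ast},\cdot)$ is genuinely a nonlinear $P$-operator: under Assumption \ref{assumption:lipschitz_activation} it is assembled from finitely many applications of $A^{\ast}$, finite linear combinations of channels with coefficients bounded by $1$, and channel-wise composition with the $1$-Lipschitz $\rho$, and each of these operations preserves a finite $\|\cdot\|_{2 \to 2}$ bound; since $A^{\ast}$ is linear with $\|A^{\ast}\|_{2\to 2}\le M$ it satisfies Assumption \ref{assumption:lipschitz_map} with $C_{A^{\ast}}=M$, and the composition rules of Lemma \ref{lemma:regularity_of_assumptions} give $\Phi(h,A^{\ast},\cdot)$ Lipschitz with constant $\overline{C}_{A^{\ast}}$ in the notation of Theorem \ref{theorem:graphopnn_discretization}.

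Next I would reduce the continuity claim to profiles. Fix $k$ and $L$; it suffices to bound $d_H\!\left(\mathcal{S}_{k,L}(\Phi(h,A_n,\cdot)),\ \mathcal{S}_{k,L}(\Phi(h,A^{\ast},\cdot))\right)$ by a quantity that goes to $0$ in $n$ and is polynomial in $k$, so that the $2^{-k}$-weighted sum defining $d_M$ converges to $0$ by dominated convergence over $k$. Two of the three building blocks of a graphop neural network are easy at this level. Channel-wise application of $\rho$ and taking a fixed linear combination of channels both act on a profile as pushforward by a fixed globally Lipschitz map of $\R^{2k}$ (namely $(x_i,y_i)_i \mapsto (x_i,\rho(y_i))_i$, respectively $(x,y_1,\dots,y_m)\mapsto (x,\sum_j c_j y_j)$); since pushing a measure forward by an $\ell$-Lipschitz map is $\ell$-Lipschitz for the L\'evy--Prokhorov metric, these steps are $O(1)$-Lipschitz for each $d_H$ and hence for $d_M$. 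The only genuinely delicate block is ``apply $A$'': I would need a \emph{joint} action-convergence statement, namely that whenever finite tuples of intermediate feature functions $(g^{(n)}_i)$ are close to $(g_i)$ in $L^2$, the augmented entry distributions $\mathcal{D}(\dots,g^{(n)}_i,A_n g^{(n)}_i)$ are close in $d_{LP}$ to $\mathcal{D}(\dots,g_i,A^{\ast}g_i)$.

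The main obstacle is exactly this last point, which is why the statement is phrased as a conjecture. Action convergence $A_n\to A^{\ast}$ only gives Hausdorff-closeness of the \emph{sets} $\mathcal{S}_{k,L}(A_n)$ and $\mathcal{S}_{k,L}(A^{\ast})$ over all admissible test tuples; it does not on its own provide a ``functorial'' matching that tracks $A^{\ast}g_i$ by $A_n g^{(n)}_i$, and the intermediate features (such as $A^{\ast}v$, $(A^{\ast})^2 v$, or their images under $\rho$) generally lie only in $L^2$, hence are not admissible witnesses for the next profile. The approach I would attempt is a truncation-plus-mollification scheme in the spirit of the proof of Theorem \ref{theorem:growing_profiles}: approximate each intermediate feature in $L^2$ by a bounded $L'$-Lipschitz function, making it an admissible witness for an $L'$-profile; use boundedness of $A^{\ast}$ together with a Chebyshev-type estimate (exactly as in the proof of Theorem \ref{theorem:approximation_via_discretization}) to show that $\mathcal{D}$ depends continuously in $d_{LP}$ on its generating functions measured in $L^2$; and let $L'$ grow slowly with $n$ so that, by Theorem \ref{theorem:growing_profiles}, ordinary action convergence is recovered in the limit. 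Making this uniform over the finitely many layers and channels, and controlling how the admissible regularity degrades each time $A$ is applied, is the crux, and it may ultimately require an extra hypothesis on $A^{\ast}$ of the type used in Lemma \ref{lemma:constant_to_constant_graphs} (so that $A^{\ast}$ maps well-behaved functions to well-behaved functions); under such a hypothesis the conjecture should follow from the machinery of Appendices \ref{appendix:proofs_from_section_discretization}--\ref{appendix:proofs_from_graphopnn}, with the discretization sequence $A_n=(A^{\ast})_n$ being the already-resolved special case of Theorem \ref{theorem:graphopnn_discretization}.
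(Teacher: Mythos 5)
This statement is left as an open \emph{conjecture} in the paper --- no proof is given there --- so the only question is whether your proposal actually closes it, and it does not. Your own text concedes the decisive gap: the reduction to ``continuity of $A \mapsto \Phi(h,A,\cdot)$ with respect to $d_M$'' is not a reduction at all, because that continuity is essentially equivalent to the conjecture itself. Action convergence only gives Hausdorff closeness of the profile sets $\mathcal{S}_{k,L}(A_n)$ and $\mathcal{S}_{k,L}(A^{\ast})$, i.e.\ closeness of the joint laws of pairs $(v, A_n v)$ over admissible test functions $v$; it gives no matching of witnesses, no control of the joint law of $(v, A_n v, A_n^2 v, \dots)$, and no admissibility of the intermediate features $A^{\ast}v$, $\rho(\sum_g H(h,A^{\ast})[\,\cdot\,])$, etc., which are merely $L^2$ and hence cannot be fed back into the $(k,L)$-profile. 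Your ``easy blocks'' observation (pointwise $\rho$ and fixed linear combinations act as Lipschitz pushforwards on $\mathcal{P}(\R^{2k})$, and Lipschitz pushforwards are Lipschitz for $d_{LP}$) is fine as far as it goes, but it only applies once you already possess the joint entry distribution of all intermediate channels under $A_n$ and under $A^{\ast}$ simultaneously --- exactly the object that $d_M$-convergence does not supply. Even the first nontrivial sub-case, convergence of $A_n^2$ given action convergence of $A_n$, is not resolved by your argument.

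The repair you sketch (mollify intermediate features to make them admissible, let the Lipschitz budget $L'$ grow slowly, invoke the machinery of Theorem \ref{theorem:growing_profiles} and the Chebyshev-type estimates of Theorem \ref{theorem:approximation_via_discretization}) is a plausible research direction, but as written it silently imports hypotheses the conjecture does not grant: the quantitative appendix arguments all lean on the regularity assumptions (Assumptions \ref{assumption:constant_to_constant}--\ref{assumption:lipschitz_to_lipschitz} and their relaxations) and on the specific discretization $A_n=(A^{\ast})_n$, whereas here $(A_n)$ is an \emph{arbitrary} action-convergent sequence of graphops, with no assumed regularity and no assumed uniform bound on $\|A_n\|_{2\to 2}$ (your appeal to Theorem 2.14 of \citet{backhausz_szegedy_2022} already needs such a uniform bound as an extra hypothesis, and closure of the graphop class under limits is proved there only for linear graphops). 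Mollifying $A^{\ast}v$ also does not help directly, since you would then need $A_n$ applied to the mollified function to track $A^{\ast}$ applied to it \emph{jointly with} the previous coordinates, which again exceeds what the profile metric records. So the proposal correctly diagnoses the obstruction but does not overcome it; under the added well-behavedness hypotheses you mention, you would be proving a different (weaker) statement, closer to the already-established Theorem \ref{theorem:graphopnn_discretization}, not the conjecture as stated.
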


\end{document}